\newtheorem{theorem}{Theorem}
\newtheorem{lemma}{Lemma}
\newtheorem{definition}{Definition}
\newtheorem{assumption}{Assumption}
\newtheorem{algorithm}{Algorithm}
\newtheorem{remark}{Remark}
\begin{document}

\title{Analysis on the Nonlinear Dynamics of Deep Neural Networks: Topological Entropy and Chaos}
\author{Husheng Li
\thanks{H. Li is with the Department of Electrical Engineering and Computer Science, the University of Tennessee, Knoxville, TN (email: husheng@eecs.utk.edu). This work was supported by the National
Science Foundation under grants CNS-1525226, CNS-1525418, CNS-1543830.}}
\maketitle

\begin{abstract}
The theoretical explanation for the great success of deep neural network (DNN) is still an open problem. In this paper DNN is considered as a discrete-time dynamical system due to its layered structure. The complexity provided by the nonlinearity in the DNN dynamics is analyzed in terms of topological entropy and chaos characterized by Lyapunov exponents. The properties revealed for the dynamics of DNN are applied to analyze the corresponding capabilities of classification and generalization. In particular, for both the hyperbolic tangent function and the rectified linear units (ReLU), the Lyapunov exponents are both positive given proper DNN parameters, which implies the chaotic behavior of the dynamics. Moreover, the Vapnik-Chervonenkis (VC) dimension of DNN is also analyzed, based on the layered and recursive structure. The conclusions from the viewpoint of dynamical systems are expected to open a new dimension for the understanding of DNN.
\end{abstract}

\section{Introduction}
Deep neural network (DNN, a.k.a. deep learning) is the most shining star in the community of information science in the last decade \cite{Goodfellow2016}. It has brought a paradigm shift to artificial intelligence and many cross-disciplinary areas. Despite its great success in applications, the theoretical explanation and mathematical framework are still open problems for DNN. There have been substantial efforts on the mathematical analysis for DNN, such as using the wavelet framework to understand DNN \cite{Mallat2016}, applying the framework of function approximation \cite{Liang2017}, enumerating the linear regions after the nonlinear mapping of DNN \cite{Montufar2014}, analyzing the shape deformation in the transient chaos of DNN \cite{Poole2016}, and information theoretic analysis \cite{Huang2017}, to name a few. Although these excellent studies have made significant progress on a deeper understanding of DNN, they are still insufficient to fully describe the behavior, quantitatively analyze the performance and thus systematically design DNNs. 

In this paper, we propose a systematic framework to analyze DNN by considering it as a dynamical system. A prominent feature of DNN is the layered structure, usually having many hidden layers (e.g., hundreds of layers). Each layer is a mapping having similar structures, namely a linear mapping followed by a nonlinear operation. One can consider each layer as a nonlinear transformation for the input vector. Therefore, the DNN with the multi-layer structure can be considered as a discrete-time nonlinear dynamical system, where the initial value is the input vector to the DNN. The transformation is specified by the DNN parameters, and the final value of the transformations can be easily handled using simple classifiers such as linear ones. 

The view of DNN as a dynamical system brings a novel approach for the theoretical study, due to the powerful framework of dynamical systems developed in the past 100 years. On one hand, a DNN should be sufficient complex to classify different objects (e.g., images). The topological entropy, developed in the study of dynamical systems in 1960s \cite{Downarowicz2011}, can be used to characterize the complexity of dynamics. On the other hand, for complex objects, a DNN may need to distinguish two feature vectors that are very close to each other, pulling them apart for the final linear classification. This implies that the behavior of DNN near these points needs to be chaotic; i.e., the distance of the two points will become significant after the nonlinear mappings even though they are initially very close to each other. This can also be understood from the viewpoint of classification surface. For the original inputs of different classes, the classification surface could be very complicated; however, the DNN can deform the complicated surface to a simple hyperplane. Therefore, the inverse transformation of DNN (e.g., linear transformation plus $\tanh^{-1}$) should be able to warp a hyperplane to a very complex surface, which is illustrated in Fig. \ref{fig:paper}. The nonlinear warping requires chaotic behaviors characterized by Lyapunov exponents. 

\begin{figure}
  \centering
  \includegraphics[scale=0.34]{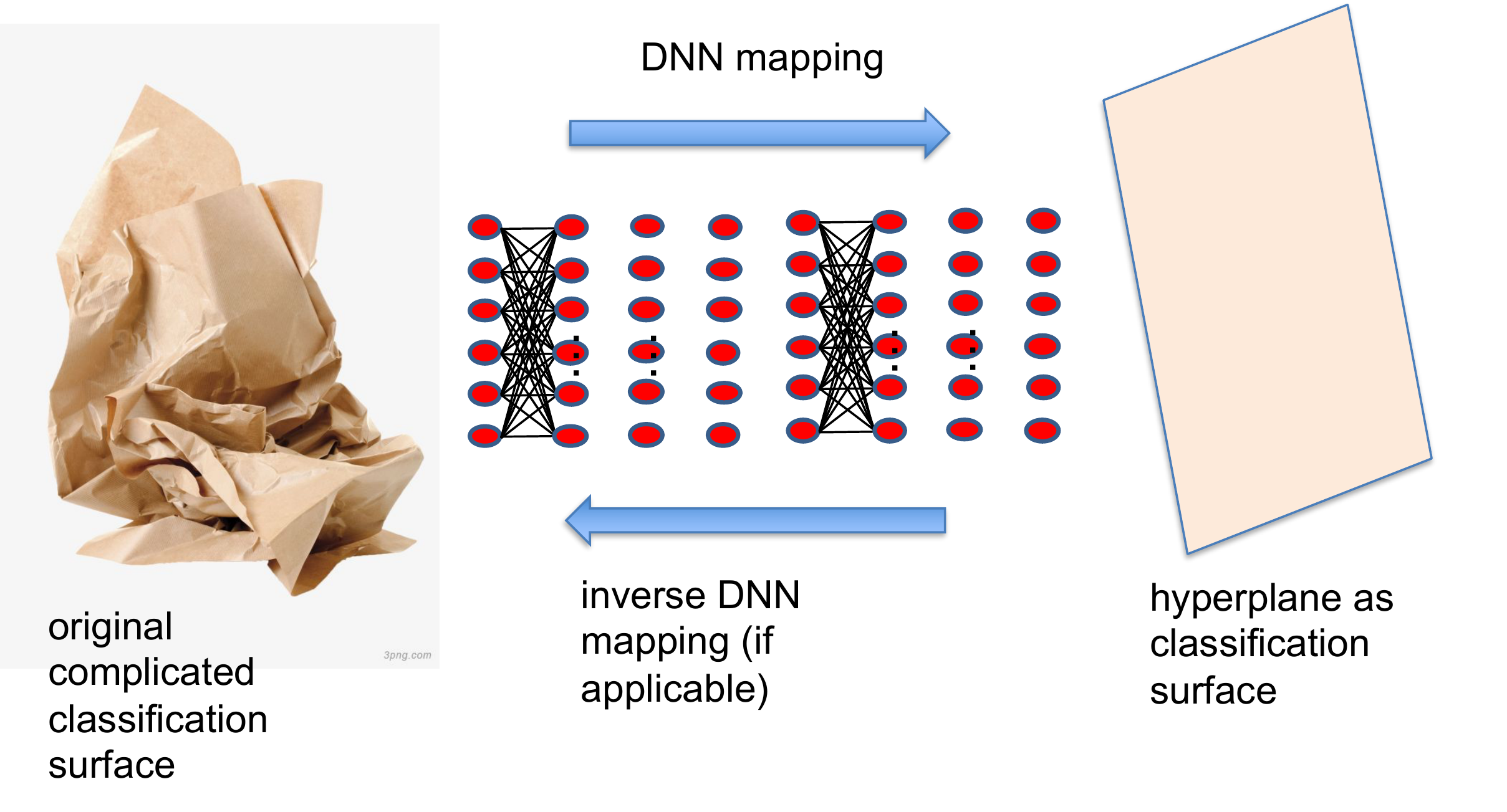}
  \caption{An illustration of the deformation of classification surface by DNN.}\label{fig:paper}
\end{figure}

The analysis of dynamical behaviors in DNN can be applied to analyze various properties of DNN. On one hand, the complexity incurred by the nonlinearity in the dynamics of DNN can be used to quantify the classification capability of DNN, and predict the required number of layers. On the other hand, if the nonlinear dynamics provide too much complexity, the Vapnik-Chervonenkis (VC) dimension could be high and thus affect the generalization capability. Meanwhile, the layered structure of DNN also provides a layer-by-layer analysis for the VC-dimension. These applications will be discussed in this paper.

The remainder of this paper is organized as follows. The system model of DNN is introduced in Section \ref{sec:model}. The topological entropy and the chaotic behavior, characterized by the Lyapunov exponent, of DNN are analyzed in Section \ref{sec:entropy}. The applications of the nonlinear dynamics behavior in the capabilities of classification and generalization of DNN are discussed in Section \ref{sec:application}. The conclusions are provided in Section \ref{sec:conclusion}.

\section{System Model}\label{sec:model}
In this section, we introduce the system model of DNN. We first explain the structure of DNN. Then, we model a DNN as a random dynamical system for facilitating the analysis.

\subsection{Structure of DNN}
For simplicity of analysis, we consider the problem of binary classification, which outputs a binary label in $\{-1,1\}$ given a $d$-dimensional input feature vector $\mathbf{x}\in \mathbb{R}^d$. We assume that the feature selection algorithm has been fixed and is beyond the scope of this paper. For simplicity of analysis, we assume that all the features are within the compact set $[-1,1]^d$. 

We assume that there are $D$ hidden layers in the DNN. The weights of layer $i$ are denoted by $\mathbf{W}_i$, which is a $d\times d$ matrix. Note that we assume that each $\mathbf{W}_i$ is a square matrix and all the vectors $\{\mathbf{x}_i\}_{i=1,...,D}$ have the same dimension. This is to simplify the analysis and can be extended to more generic cases. The input to layer $i$ is denoted by $\mathbf{x}_{i-1}$, $i=1,...,D$. At layer $i$, the output $\mathbf{x}_i$ (also the input of layer $i+1$) is given by
\begin{eqnarray}\label{eq:dynamics}
\left\{
\begin{array}{ll}
&\mathbf{h}_i=\mathbf{W}_i\mathbf{x}_{i-1}+\mathbf{b}_i\\
&\mathbf{x}_{i}=\phi(\mathbf{h}_i)
\end{array}
\right.,
\end{eqnarray}
where $\phi$ is the elementwise nonlinear mapping from the intermediate result $\mathbf{h}_i$ to the input $\mathbf{x}_i$ for the next layer. In this paper, we consider the following two cases: 
\begin{itemize}
\item Hyperbolic tangent $\tanh$: This operation is used in traditional neural networks. It confines the intermediate vectors $\mathbf{x}_{i}$ within $[-1,1]^d$. The advantage of $\tanh$ is the smoothness, which facilitates the analysis, and the compactness of the state space.

\item Rectified linear units (ReLU) $x=\max\{h,0\}$: ReLU is demonstrated to outperform the hyperbolic tangent $\tanh$ in recent progress of DNNs. The challenge of ReLU is the loss of differentiability at point $x=0$. We can assume that the derivative is a piecewise continuous function:
\begin{eqnarray}
\frac{d\phi(h)}{dh}=\left\{
\begin{array}{ll}
0,\qquad h<0\\
1,\qquad h>0
\end{array}
\right.,
\end{eqnarray}
where the derivative is not defined at $x=0$.
\end{itemize}

The $d$-dimensional output of the last hidden layer is fed into a linear classifier with coefficients $\mathbf{w}$ and offset $b$. The final decision is given by
\begin{eqnarray}
\mbox{decision}=\mbox{sign}(\mathbf{w}^T\mathbf{x}_D+b).
\end{eqnarray}
The overall structure of DNN studied in this paper is summarized in Fig. \ref{fig:NN}. Note that we do not consider more specific structures for the DNN in this paper. For example, more structures are embedded into the convolutional neural networks (CNNs), recurrent neural networks (RNNs) and so on, which improve the performance over the generic structure of DNNs. It will be our future work to incorporate these special structures into the dynamical systems based study. 

\begin{figure}
  \centering
  \includegraphics[scale=0.4]{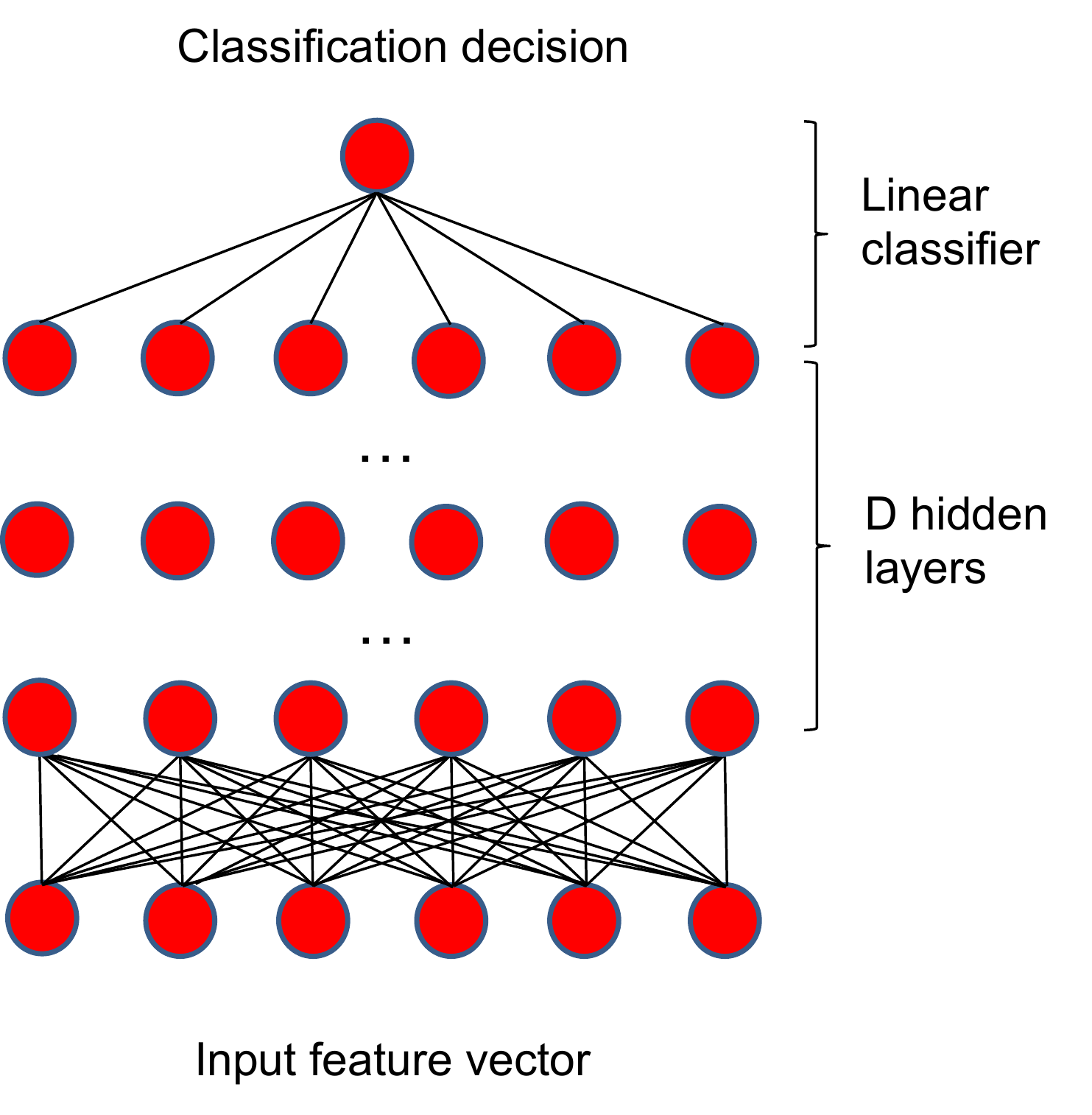}
  \caption{Deep neural network for classification.}\label{fig:NN}
\end{figure}

\subsection{Random Dynamical System Modeling}
As justified in the introduction, we can consider a DNN as a dynamical system. However, in traditional dynamical systems, the mapping of the system state, denoted by $T$, is identical throughout the dynamics, while the mappings in different layers of DNN are typically different. Therefore, we use the random dynamical system \cite{Arnold1998} to model the mappings of different layers in DNN. A random dynamical system consists of a series of possibly distinct transformations $T_1,...,T_n, ...$, namely 
\begin{eqnarray}
\mathbf{x}_i=T_{i}(\mathbf{x}_{i-1}).
\end{eqnarray}
In the context of DNN, $T_{i}$ is determined by $(\mathbf{W}_i,\mathbf{b}_i,\phi)$. A rigorous definition of random dynamical system is given in Appendix \ref{appdx:random}.

\section{Topological Entropy and Chaos}\label{sec:entropy}
In this section, we discuss the topological entropy and chaos, characterized by the Lyapunov exponent, of DNN. We first define these quantities in dynamical systems. Then, we analyze the chaotic behaviors in DNN.

\subsection{Topological Entropy and Lyapunov Exponent}

\subsubsection{Topological Entropy}
We first define the topological entropy in a space $X$ equipped with metric $m(\cdot,\cdot)$ \cite{Downarowicz2011,Matveev2009}. Consider a mapping $T:X\rightarrow X$. For time $n\in \mathbb{N}$, a metric $m^n$ between two points in $X$ with respect to $T$ is defined as
\begin{eqnarray}\label{eq:metric_n}
m^n(x_1,x_2)=\max_{i=1,...,n}m\left(T^{i-1}(x_1),T^{i-1}(x_2)\right).
\end{eqnarray}
We define the $n$-th order ball of point $x$, denoted as $B^n(x,\epsilon)$, as the ball around $x$ with radius $\epsilon$ under the definition of metric in (\ref{eq:metric_n}). A set of points in $X$, denoted by $S$, are said to be an $(n,\epsilon)$-spanning orbit set, if for any $x\in X$, we can always find a point $y\in S$ such that $m^n(x,y)\leq \epsilon$. Then we define
\begin{eqnarray}
r(T,n,\epsilon)=\min_{S \mbox{ be $(n,\epsilon)$-spanning set}}|S|.
\end{eqnarray}
Then we define the spanning orbits based topological entropy as follows, where the subscript $s$ means `spanning'.
\begin{definition}[\cite{Downarowicz2011,Matveev2009}]\label{def:entropy007}
The spanning orbits based topological entropy is defined as
$h_{s}(T)=\lim_{\epsilon\rightarrow 0}h_s(T,\epsilon),$
where $h_{s}(T,\epsilon)=\lim_{n\rightarrow \infty}\frac{1}{n}H_s(T,n,\epsilon)$, and $H_s(T,n,\epsilon)=\log_2 r(T,n,\epsilon)$.
\end{definition}
A detailed introduction to alternative definitions of topological entropy can be found in Appendix \ref{appdx:topo}.

Note that the above definition is for traditional dynamical systems. In random dynamical systems, the topological entropy can be defined in the same manner as that in Definitions \ref{def:entropy007}, simply by replacing $T^{i}$ with $T_i\circ T_{i-1}\circ...\circ T_1$.

\subsubsection{Ensemble Topological Entropy}
The conventional topological entropy measures the number of distinct paths (up to diminishing errors) in the dynamical system. In this paper we extend it to the topological entropy of an ensemble of dynamics. For an integer $L>0$, we define a $dL$-dimensional hyper-vector $\mathbf{z}=(\mathbf{x}^1,...,\mathbf{x}^L)$, where each $\mathbf{x}^i$ is a $d$-vector. Then, for the sequence of transformations $\{T_t\}_{t=1,2,...}$ that map from $\mathbb{R}^d$ to $\mathbb{R}^d$, we define the transformation $\tilde{T}^L_t:\mathbb{R}^{dL}\rightarrow\mathbb{R}^{dL}$ as
\begin{eqnarray}\label{eq:ensemble}
\tilde{T}^L_t(\mathbf{z})=(T_t(\mathbf{x}^1),...,T_t(\mathbf{x}^L)).
\end{eqnarray}
Consider $M$ sets of transformations $\{T_{mt}\}_{m=1,...,M,t=1,...,n}$, we define $r_e(\{T_{mt}\}_{m,t},n,\epsilon,\mathbf{z})$ as the number of distinct paths (within distance $\epsilon$ in (\ref{eq:metric_n})) beginning from $\mathbf{z}$ under the $M$ composite transformations $\{\tilde{T}^L_{mt}\}_{m=1,...,M,t=1,...,n}$. We define $H_e(\{T_{mt}\}_{m,t},n,\epsilon,\mathbf{z})=\frac{1}{n}\log_2r_e(\{T_{mt}\}_{m,t},n,\epsilon,\mathbf{z})$. We can further define $h_e(\{T_{mt}\}_{m,t},\mathbf{z})$ by letting $n\rightarrow\infty$ and $\epsilon\rightarrow 0$, similarly to the topological entropy $h_s$. As will be seen, the concept of ensemble topological entropy will be used in the calculation of VC-dimension of DNNs. Unfortunately we are still unable to evaluate the ensemble topological entropy analytically, which will be our future study.

\subsubsection{Lyapunov Exponent} As metrics characterizing the chaotic behavior of dynamical systems, the Lyapunov spectrum \cite{Arnold1998} quantifies how fast two nearby points depart from each other as the system state evolves with time, as illustrated in Fig. \ref{fig:chao}. Considering an initial point $\mathbf{x}_0$, we define
\begin{eqnarray}
\mathbf{J}_t(\mathbf{x}_0)=\mathbf{J}_{\mathbf{x}}(T_t\circ T_{t-1} \circ ... \circ T_1(\mathbf{x}))|_{\mathbf{x}=\mathbf{x_0}},
\end{eqnarray}
where $\mathbf{J}$ is the Jacobian matrix defined as $\mathbf{J}_{\mathbf{x}}(\mathbf{y})=\left(\frac{\partial y_i}{\partial {x}_j}\right)_{ij}$. Obviously $\mathbf{J}_t(\mathbf{x}_0)$ measures the sensitivity to the perturbation on the initial value $\mathbf{x}_0$. Then, we define
\begin{eqnarray}
\mathbf{L}(\mathbf{x}_0)=\lim_{t\rightarrow\infty} (\mathbf{J}_t(\mathbf{x}_0)\mathbf{J}_t^T(\mathbf{x}_0))^{\frac{1}{t}},
\end{eqnarray}
whose convergence is guaranteed by the Multiplicative Ergodic Theorem proved by V. I. Oseledec \cite{Oseledets1968}. Then, we define the $i$-th largest Lyapunov exponent $\lambda_i$ as
\begin{eqnarray}
\lambda_i(\mathbf{x}_0)=\log_2 \Lambda_i(\mathbf{x}_0),
\end{eqnarray}
where $\Lambda_i(\mathbf{x}_0)$ is the $i$-th largest eigenvalue (obviously nonnegative) of $\mathbf{L}(\mathbf{x}_0)$. The set of eigenvalues $\{\lambda_1,...,\lambda_n\}$, sorted in a descending order, is called the Lyapunov spectrum of the random dynamical system $\{T_1,T_2,...,T_i,...\}$ \cite{Arnold1998}. Note that, for the generic case, the Lyapunov spectrum is dependent on the initial value $\mathbf{x}_0$. For ergodic dynamics, the Laypunov spectrum is identical for all initial values, since each point in the state space will be visited sooner or later.

\begin{figure}
  \centering
  \includegraphics[scale=0.4]{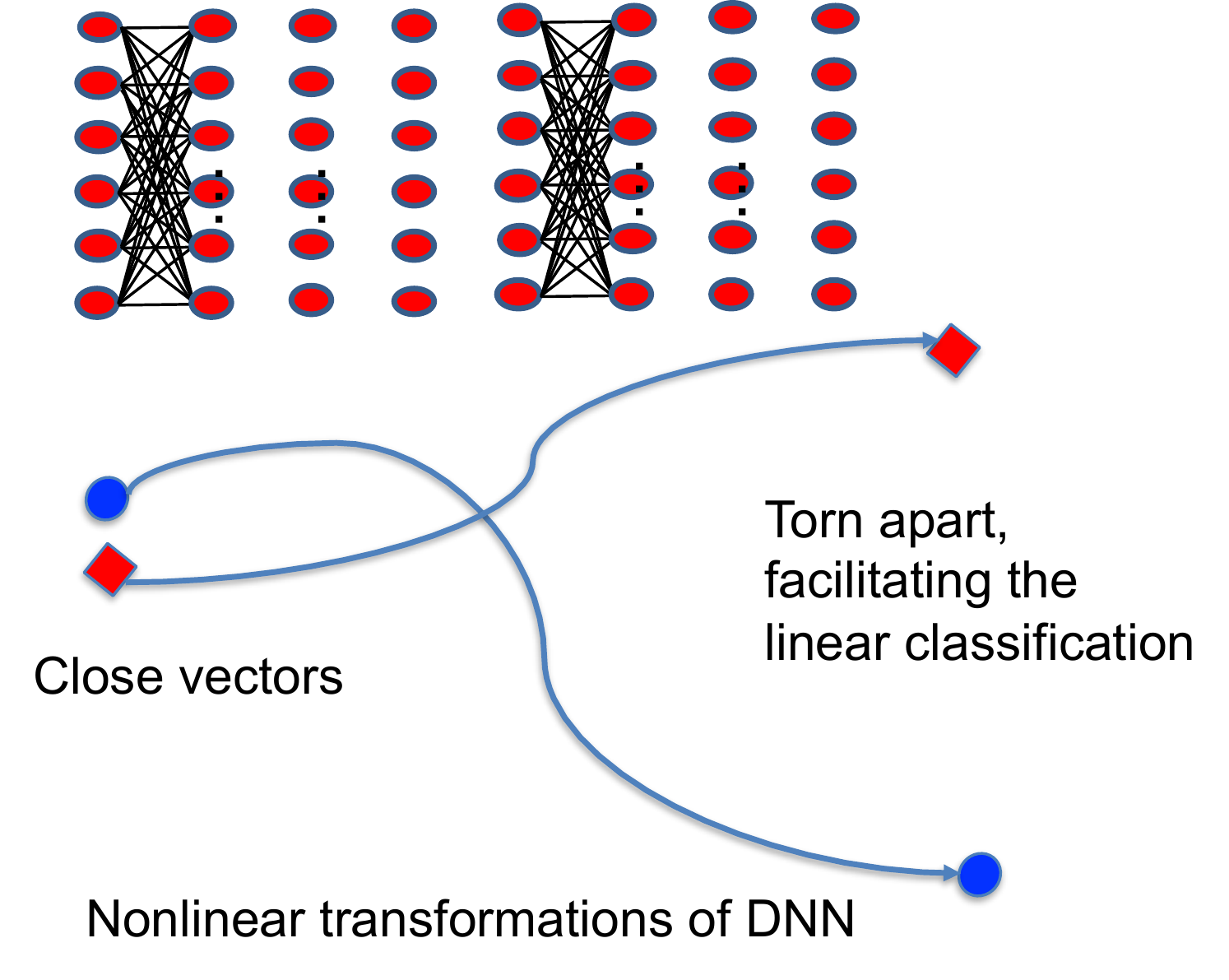}
  \caption{The illustration of the chaotic behavior in DNN.}\label{fig:chao}
\end{figure}

For the random dynamics in DNN, we have the following lemma characterizing the Lyapunov spectrum, which can be easily shown by calculus given the structure of DNN:
\begin{lemma}
For the random dynamics in DNN, we have
\begin{eqnarray}
\mathbf{J}_t(\mathbf{x}_0)=\prod_{i=1}^t \left( \mathbf{S}_i(\mathbf{x}_i)\mathbf{W}_i\right)
\end{eqnarray}
where when $\phi$ is the hyperbolic tangent $\tanh$ we have
\begin{eqnarray}
\mathbf{S}_i(\mathbf{x}_i)=diag\left(1-\tanh^2\left(x_{ij}\right)\right)_{j=1,...,n},
\end{eqnarray}
and when $\phi$ is the ReLU function we have
\begin{eqnarray}
\mathbf{S}_i(\mathbf{x}_i)=diag\left(sign\left(x_{ij}\right)\right)_{j=1,...,n},
\end{eqnarray}
where $sign(x)$ equals 1 if $x>0$ and 0 if $x<0$.
\end{lemma}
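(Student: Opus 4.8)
The plan is to reduce the statement to a routine application of the multivariate chain rule, exploiting the fact that each layer map is the composition of an affine map with an elementwise nonlinearity. First I would write the single-layer transformation explicitly as $T_i(\mathbf{x}) = \phi(\mathbf{W}_i\mathbf{x} + \mathbf{b}_i)$, so that the full map through $t$ layers is the composition $T_t \circ T_{t-1} \circ \cdots \circ T_1$. Because the Jacobian of a composition is the ordered product of the Jacobians evaluated along the orbit, it suffices to compute the Jacobian of one layer and then multiply the factors in the correct order.

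For a single layer I would split $T_i = \phi \circ A_i$, where $A_i(\mathbf{x}) = \mathbf{W}_i\mathbf{x} + \mathbf{b}_i$ is affine with constant Jacobian $\mathbf{W}_i$. Since $\phi$ acts componentwise, its Jacobian at the pre-activation $\mathbf{h}_i$ is the diagonal matrix $\mathrm{diag}(\phi'(h_{ij}))_j$, with no off-diagonal entries because the $j$-th output of $\phi$ depends only on the $j$-th input. The chain rule then gives the single-layer Jacobian $\mathbf{S}_i(\mathbf{x}_i)\mathbf{W}_i$ with $\mathbf{S}_i(\mathbf{x}_i) = \mathrm{diag}(\phi'(h_{ij}))_j$, and iterating over the $t$ layers yields the claimed $\prod_{i=1}^t \mathbf{S}_i(\mathbf{x}_i)\mathbf{W}_i$, each factor evaluated at the corresponding iterate $\mathbf{x}_i$ of the orbit started at $\mathbf{x}_0$.

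It then remains to identify the diagonal entries for the two nonlinearities. For $\phi = \tanh$ I would use the elementary identity $\tanh'(h) = 1 - \tanh^2(h)$; evaluating at the pre-activation $h_{ij}$ (and recalling $x_{ij} = \tanh(h_{ij})$) reproduces the stated form of $\mathbf{S}_i$. For ReLU, $\phi(h) = \max\{h, 0\}$ has derivative $1$ for $h > 0$ and $0$ for $h < 0$, and because $x_{ij} = \max\{h_{ij}, 0\}$ shares the sign pattern of $h_{ij}$, this is exactly $\mathrm{sign}(x_{ij})$ under the convention given in the model.

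The computation is otherwise mechanical, so the only genuine point of care—rather than a real obstacle—is the nondifferentiability of ReLU at the origin. I would handle this by observing that the Jacobian identity holds along every orbit whose intermediate activations $h_{ij}$ avoid $0$; the set of inputs producing a vanishing activation at some layer is a finite union of preimages of hyperplanes and hence has Lebesgue measure zero, so the formula holds almost everywhere, which is all the subsequent Lyapunov-exponent analysis requires. A secondary point worth stating explicitly is the ordering of the product: since differentiating $T_t \circ \cdots \circ T_1$ places the last-layer Jacobian leftmost, the product $\prod_{i=1}^t$ is to be read with decreasing index from left to right, each factor evaluated at the appropriate iterate along the trajectory.
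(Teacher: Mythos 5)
Your proposal is correct, and your chain-rule argument is precisely the ``easily shown by calculus'' proof the paper has in mind: the paper records no explicit proof of this lemma, and your decomposition of each layer into an affine map followed by an elementwise nonlinearity, together with the remark about the ordering of the noncommuting factors, supplies exactly the missing details. One caution: your computation correctly produces diagonal entries $\phi'(h_{ij}) = 1-\tanh^{2}(h_{ij}) = 1-x_{ij}^{2}$, which is what the paper itself uses later (see (\ref{eq:ut}) in Appendix \ref{appdx:DNN_Lya}), so rather than asserting that this ``reproduces the stated form,'' you should flag that the lemma's literal expression $1-\tanh^{2}(x_{ij})$ evaluates $\tanh'$ at the post-activation instead of the pre-activation and is evidently a typo.
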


\subsubsection{Relationship}
The topological entropy can be derived from the Laypunov spectrum through the following theorem (as a corollary of Theorem A in \cite{Ledrappier1988}):
\begin{theorem}[\cite{Ledrappier1988}]\label{thm:lay_top}
Suppose that the random dynamics with transformations $T_1$, $T_2$, ..., are ergodic. Then, we have
\begin{eqnarray}
h_s(T_1,T_2,...)=\sum_{i=1}^n \max\{\lambda_i,0\},
\end{eqnarray}
where $\lambda_i$ is identical for all initial points due to the ergodicity.
\end{theorem}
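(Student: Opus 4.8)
The plan is to obtain this identity as a corollary of the Pesin--Ruelle theory, specializing the random-dynamics version due to Ledrappier \cite{Ledrappier1988}. The overall strategy rests on three classical pillars: (i) Ruelle's inequality, which bounds the Kolmogorov--Sinai (metric) entropy $h_\mu$ of any invariant measure from above by the sum of the positive Lyapunov exponents; (ii) Pesin's formula, which upgrades this inequality to an equality when the invariant measure is absolutely continuous with respect to Lebesgue measure (an SRB measure); and (iii) the variational principle, which identifies the spanning-orbits topological entropy $h_s$ with the supremum of the metric entropies over all invariant measures. Because the layer maps $T_1,T_2,\ldots$ differ across layers, I would first recast the dynamics as a skew-product (bundle) random dynamical system in the sense of \cite{Arnold1998}, so that ergodicity, invariant measures, and the Oseledec splitting are all well defined for the non-autonomous composition $T_t\circ T_{t-1}\circ\cdots\circ T_1$.

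First I would invoke the Multiplicative Ergodic Theorem \cite{Oseledets1968}, already used above, to guarantee that the Lyapunov exponents $\lambda_i$ exist and, under the assumed ergodicity, are constant for almost every initial point $\mathbf{x}_0$; this is exactly what lets me suppress the dependence on $\mathbf{x}_0$ in the statement. Next I would apply Ruelle's inequality to the composed Jacobian $\mathbf{J}_t(\mathbf{x}_0)=\prod_{i=1}^t \mathbf{S}_i(\mathbf{x}_i)\mathbf{W}_i$ computed in the preceding Lemma, obtaining $h_\mu\le\sum_{i=1}^n\max\{\lambda_i,0\}$ for every invariant $\mu$. To close the gap I would verify the hypotheses of Pesin's formula in the smooth ($\tanh$) case, where $\mathbf{S}_i=\mathrm{diag}(1-\tanh^2(x_{ij}))$ and the layer maps are $C^\infty$ diffeomorphisms of $[-1,1]^d$; absolute continuity of the relevant invariant measure then yields the equality $h_\mu=\sum_{i=1}^n\max\{\lambda_i,0\}$. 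Finally, the variational principle identifies $h_s$ with this common value, completing the chain.

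The hard part will be twofold. The more delicate analytic obstacle is the ReLU case, where $\mathbf{S}_i=\mathrm{diag}(\mathrm{sign}(x_{ij}))$ is undefined on the hyperplanes $\{x_{ij}=0\}$ and the maps fail to be diffeomorphisms; here I expect to argue that this nondifferentiability set carries zero measure under the invariant measure and therefore does not affect the integrals defining the exponents, possibly passing through a smooth mollification of the ReLU and a limiting argument to transfer the Pesin equality. The more conceptual obstacle is justifying that the supremum in the variational principle is actually \emph{attained} by a measure whose Lyapunov spectrum coincides with the ergodic spectrum in the statement; this requires exhibiting (or assuming) an SRB / absolutely continuous invariant measure for the random dynamics, which is precisely where Theorem A of \cite{Ledrappier1988} is invoked to supply the random-dynamical-systems version of Pesin's formula and to control the non-autonomous compositions.
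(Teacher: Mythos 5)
The paper does not actually prove this theorem: it is imported verbatim as a corollary of Theorem A in \cite{Ledrappier1988}, so there is no internal argument to compare yours against. Your reconstruction---skew-product formulation, Oseledec's multiplicative ergodic theorem, Ruelle's inequality, Pesin's formula for an absolutely continuous (SRB) stationary measure, and the variational principle---is precisely the standard machinery behind that citation, so your approach coincides with the paper's at the only level on which the paper operates, namely deferring the analytic work to Ledrappier--Young. The two obstacles you flag are the genuine ones, and you are in fact more careful than the paper on both: Theorem A of \cite{Ledrappier1988} is a \emph{metric}-entropy statement, so passing to the topological entropy $h_s$ in the theorem requires the variational principle plus attainment of the supremum by an absolutely continuous stationary measure (which in the DNN setting with i.i.d.\ Gaussian weight matrices is plausible exactly because the transition kernel has a density, but is nowhere verified in the paper), and the ReLU case falls outside the $C^2$ hypotheses of Pesin--Ruelle theory altogether, a point the paper silently ignores by stating the theorem for generic ``random dynamics.''
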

\begin{remark}
For non-ergodic dynamics, we also have similar conclusions \cite{Ledrappier1988}. 
We can calculate the topological entropy by evaluating the Lyapunov spectrum. In the following discussion, we will study the Lyapunov spectrum at different points in the state space, and thus obtain the conclusions on the topological entropy of DNNs.
\end{remark}

\subsection{Local Chaos with Deterministic Mapping} The Lyapunov spectrum could be dependent on the initial value if the dynamics are non-ergodic. Hence, the system could be chaotic at certain points, while being non-chaotic in the remainder of the state space. The following theorems show that the maximum Lyapunov exponent of the DNN dynamics could be arbitrarily large. This implies that a DNN, as a dynamical system, could be highly chaotic at certain regions, thus being able to form highly complicated classification surfaces. 

\subsubsection{Hyperbolic Tangent Function}
The following theorem shows the existence of chaotic behavior when the DNN uses the hyperbolic tangent function. The proof is given in Appendix \ref{appdx:DNN_Lya}.

\begin{theorem}\label{thm:local_Lya}
Consider the hyperbolic tangent function $\phi=\tanh$. There exist parameters of DNN and inputs (namely the initial vector of the dynamical system) such that the maximum Lyapunov exponent $\lambda_1$ of the corresponding random dynamics is arbitrarily large. 
\end{theorem}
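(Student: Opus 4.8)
The plan is to exploit the explicit Jacobian formula from the preceding Lemma while circumventing the apparent tension between large weights and the saturation of $\tanh$. Recall that $\mathbf{J}_t(\mathbf{x}_0)=\prod_{i=1}^t \mathbf{S}_i(\mathbf{x}_i)\mathbf{W}_i$, where each $\mathbf{S}_i$ is diagonal with entries $1-\tanh^2(\cdot)\in(0,1]$. Since these derivative factors never exceed one, they can only contract, so any growth of $\mathbf{J}_t$ must come from the weight matrices. The naive idea of simply taking $\|\mathbf{W}_i\|$ large fails, because large pre-activations $\mathbf{h}_i$ saturate $\tanh$, forcing $1-\tanh^2(\cdot)\to 0$ and annihilating the product. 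The key observation is that the derivative factor attains its maximum, exactly $1$, at the origin, where $\tanh'(0)=1-\tanh^2(0)=1$.

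First I would pin the entire state trajectory at a point where the derivative factor is maximal. Take the input $\mathbf{x}_0=\mathbf{0}$, biases $\mathbf{b}_i=\mathbf{0}$, and weights $\mathbf{W}_i=cI$ for a constant $c>1$ (a legitimate, if degenerate, instance of the random dynamics, with all layers identical). Then $\mathbf{h}_i=cI\,\mathbf{x}_{i-1}=\mathbf{0}$ and $\mathbf{x}_i=\tanh(\mathbf{0})=\mathbf{0}$ for every $i$, so the origin is a fixed point. This is the crux of the argument: it keeps the whole orbit in the linear regime of $\tanh$ no matter how large $c$ is, thereby decoupling the weight magnitude from saturation.

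Next I would substitute into the Lyapunov machinery. Along this orbit $\mathbf{S}_i(\mathbf{x}_i)=\mathrm{diag}(1-\tanh^2(0))=I$, hence $\mathbf{J}_t=\prod_{i=1}^t(I\cdot cI)=c^t I$. Consequently $\mathbf{J}_t\mathbf{J}_t^T=c^{2t}I$, so $(\mathbf{J}_t\mathbf{J}_t^T)^{1/t}=c^2 I$ for every $t$ and the limit gives $\mathbf{L}(\mathbf{x}_0)=c^2 I$ with no convergence subtlety. Its largest eigenvalue is $\Lambda_1=c^2$, whence $\lambda_1=\log_2 c^2=2\log_2 c$. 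Letting $c\to\infty$ makes $\lambda_1$ arbitrarily large, which is precisely the claim.

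The main obstacle is conceptual rather than computational: one must realize that the weights cannot be inflated in isolation, but that the weights, biases, and initial vector have to be co-designed so the orbit lives where $\tanh$ is steepest. Once the fixed-point construction is identified, the remaining calculation is immediate. In the write-up I would also record a robustness remark: any orbit confined to a neighborhood of the origin on which $1-\tanh^2(\cdot)\ge 1-\delta$ yields $\lambda_1\ge 2\log_2 c+2\log_2(1-\delta)$, which still diverges, showing the conclusion does not depend on the exactness of the fixed point.
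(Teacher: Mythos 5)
Your proof is correct, and it takes a genuinely different and considerably more elementary route than the paper's. The paper works in $d=2$, keeps the state on a circle of radius $r\in(0,1)$, and designs a \emph{different} weight matrix at every layer by solving $\mathbf{W}_t\mathbf{u}_{t-1}=A\mathbf{u}_{t-1}$, $\mathbf{W}_t\mathbf{x}_{t-1}=\tilde{\mathbf{x}}_t$, so that the tangent vector is always an eigenvector with eigenvalue $A$ while the state is steered to a prescribed point on the circle; this forces it to maintain linear independence of $\mathbf{u}_t$ and $\mathbf{x}_t$ at every step (via a Bezout-theorem argument in Approach 1, or a rational/irrational argument in Approach 2), and yields $\|\mathbf{u}_t\|/\|\mathbf{u}_0\|\geq\left(A(1-r^2)\right)^t$. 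You collapse the whole difficulty by pinning the orbit at the origin, the unique point where the contracting factors $\mathbf{S}_i$ equal $I$ exactly: with $\mathbf{W}_i=cI$, $\mathbf{b}_i=\mathbf{0}$, $\mathbf{x}_0=\mathbf{0}$, the Jacobian is exactly $c^tI$ and $\lambda_1=2\log_2 c$ under the paper's normalization $\mathbf{L}=\lim_t(\mathbf{J}_t\mathbf{J}_t^T)^{1/t}$, which is arbitrarily large. Your construction is in effect the degenerate $r=0$ limit of the paper's Approach 1; it needs no induction, no linear-independence bookkeeping, and works in every dimension including $d=1$ (consistent with the paper's numerics, which report negative exponents for scalar dynamics only at \emph{nonzero} $x_0$). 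What the paper's heavier construction buys is a non-degenerate orbit: the state genuinely moves through the region where $\tanh$ is nonlinear, so the expansion is exhibited away from the trivial unstable fixed point, which is closer in spirit to the coupled $(\mathbf{x}_t,\mathbf{u}_t)$ dynamics analyzed later for random weights. Two small caveats on your write-up: your closing robustness remark is vacuous as stated, since for $c>1$ no orbit other than the fixed point itself stays confined near the origin (the honest version of that remark is precisely the paper's $r$-circle construction); and the factor $2$ in $\lambda_1=2\log_2 c$ is an artifact of the paper's exponent $1/t$ in place of the conventional $1/(2t)$, which you handled consistently, so either convention gives the claim.
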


\subsubsection{ReLU Function}
For the ReLU function case, the challenge is the unboundedness of the state space (although many practical algorithms add constraints on the parameters such as normalization). It has been shown that an $n$-dimensional linear dynamics $\mathbf{x}(t+1)=\mathbf{A}\mathbf{x}(t)$ has a topological entropy of $\sum_{i=1}^n\log_2 \max(\left|\lambda_i(\mathbf{A})\right|,1)$, where $\lambda_i(\mathbf{A})$ is the $i$-th eigenvalue of the matrix $\mathbf{A}$ \cite{Matveev2009}. Therefore, when any eigenvalue of $\mathbf{A}$ has a norm greater than 1, the topological entropy is positive. However, this does not imply the complexity of the dynamics, since it is still linear. The positive topological entropy simply stems from the enlarging norm of system state in the unbounded state space. Therefore, to characterize the essential complexity and chaotic behavior of the DNN dynamics subject to ReLU function, we need to consider a compact space, instead of the original unbounded state space. 

To this end, we consider the angle of the system states in the state space, and omit the norm of the system states. This is reasonable, since the linear separability of the system state samples at the DNN output is determined by the angles, instead of the norms. The following theorem shows the existence of chaotic behavior in the angles of the system states for the 2-dimensional case. The proof is given in Appendix \ref{appdx:DNN_Lya2}. Note that the 2-dimensional case can be easily extended to more generic cases. 

\begin{theorem}\label{thm:local_Lya2}
Consider a 2-dimensional DNN with the ReLU function, where the system state is $\mathbf{x}(t)=(x_1(t),x_2(t))$. The angle of the system state is denoted by $\theta(t)=\arctan \left(\frac{x_2(t)}{x_1(t)}\right)$. Then, for any number $C>0$, one can always find a set of parameters of the DNN and a time $T$ such that
\begin{eqnarray}
\left\|\nabla_{\mathbf{x}(0)}\rho(\mathbf{x}(T))\right\|>C.
\end{eqnarray}
\end{theorem}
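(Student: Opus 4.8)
I read $\rho(\mathbf{x})=\arctan(x_2/x_1)$ as the angle map $\theta$ introduced just above, so the claim concerns the sensitivity of the output angle to the input. The plan is to reduce $\nabla_{\mathbf{x}(0)}\rho(\mathbf{x}(T))$ to the Jacobian supplied by the preceding lemma. Writing $F=T_T\circ\cdots\circ T_1$ for the $T$-layer map and applying the chain rule, $\nabla_{\mathbf{x}(0)}\rho(\mathbf{x}(T))=\mathbf{J}_T(\mathbf{x}(0))^{\top}\,\nabla\rho(\mathbf{x}(T))$, where $\mathbf{J}_T=\prod_{i=1}^T\mathbf{S}_i\mathbf{W}_i$ by the lemma and a direct computation gives $\nabla\rho(\mathbf{x})=\frac{1}{\|\mathbf{x}\|^2}(-x_2,x_1)$, a vector orthogonal to $\mathbf{x}$ with $\|\nabla\rho(\mathbf{x})\|=1/\|\mathbf{x}\|$. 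Hence it suffices to exhibit DNN parameters for which $\mathbf{J}_T^{\top}$ amplifies this particular covector by an arbitrarily large factor. Mirroring the proof of Theorem \ref{thm:local_Lya} for $\tanh$ (which evaluates the Jacobian at the fixed point $\mathbf{0}$, where $\mathbf{S}_i=\mathbf{I}$), I will evaluate at a fixed ray lying strictly inside the positive orthant, where every ReLU gate is active and hence $\mathbf{S}_i$ is again the identity.

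Concretely, take all layers equal with zero bias, $\mathbf{b}_i=\mathbf{0}$ and $\mathbf{W}_i=\mathbf{W}$, where $\mathbf{W}$ is the symmetric matrix with eigenvalue $1$ on the eigenvector $(1,1)^{\top}$ and eigenvalue $\mu>1$ on the eigenvector $(1,-1)^{\top}$; explicitly $\mathbf{W}=\tfrac12\begin{pmatrix}\mu+1 & 1-\mu\\ 1-\mu & \mu+1\end{pmatrix}$. Choose the input $\mathbf{x}(0)=(1,1)^{\top}\in[-1,1]^2$. Then $\mathbf{W}\mathbf{x}(0)=(1,1)^{\top}$ has strictly positive entries, so $\mathbf{x}(t)=(1,1)^{\top}$ for all $t$: the orbit is a strictly positive fixed point, every ReLU acts as the identity, and consequently $\mathbf{S}_i=\mathbf{I}$ and $\mathbf{J}_T=\mathbf{W}^{T}$ (a matrix power). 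Now $\nabla\rho(\mathbf{x}(T))=\nabla\rho((1,1)^{\top})=\tfrac12(-1,1)^{\top}$ is parallel to the unstable eigenvector $(1,-1)^{\top}$, so, using $\mathbf{J}_T^{\top}=\mathbf{W}^{T}$ (as $\mathbf{W}$ is symmetric), $\mathbf{J}_T^{\top}\nabla\rho(\mathbf{x}(T))=\mathbf{W}^{T}\cdot\tfrac12(-1,1)^{\top}=\tfrac12\mu^{T}(-1,1)^{\top}$, giving $\|\nabla_{\mathbf{x}(0)}\rho(\mathbf{x}(T))\|=\mu^{T}/\sqrt{2}$. Given any $C>0$, choosing $T$ (or $\mu$) large enough makes this exceed $C$, which proves the theorem.

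The step I expect to require the most care is justifying the clean chain rule despite the non-differentiability of ReLU, and this is exactly what the fixed-point choice handles: since $\mathbf{x}(0)=(1,1)^{\top}$ maps to a strictly interior point of the positive orthant at every layer, $F$ coincides with the linear map $\mathbf{W}^{T}$ on a full neighborhood of $\mathbf{x}(0)$, so $F$ is genuinely $C^1$ there, $\mathbf{J}_T=\mathbf{W}^{T}$ is exact, and no ReLU kink is met along the orbit. A secondary point worth emphasizing is why the off-diagonal entry of $\mathbf{W}$ must be allowed to be negative: a strictly positive $\mathbf{W}$ contracts the Hilbert projective metric and hence contracts angles, so it could never expand $\rho$; the amplification here comes entirely from the unstable transverse eigendirection, and the symmetry of $\mathbf{W}$ is what guarantees that $\nabla\rho$ at the fixed point---being orthogonal to $\mathbf{x}(T)$---lines up exactly with that eigendirection. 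Finally, the same template extends to $d>2$ by embedding this $2\times2$ block, and (as the surrounding discussion suggests) to genuinely non-stationary chaotic orbits by engineering the induced angle map to be uniformly expanding through ReLU folding; these refinements are not needed for the stated bound.
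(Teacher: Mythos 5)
Your proof is correct, and it reaches the conclusion by a cleaner route than the paper's own argument, although both exploit the same underlying mechanism: keep the orbit bounded (at a fixed point) while one directional derivative grows exponentially, and arrange that this direction has a nonzero angular component. The paper's proof decouples the coordinates ($W_{21}=0$), pins $x_1$ at a fixed point $x_0$ of the affine map $x\mapsto ax+(1-a)x_0$ with $a>1$ so that $\partial x_1(T)/\partial x_1(0)=a^T$, and uses the ReLU clipping to periodically reset $x_2$ to zero so that it stays bounded; the angle gradient is then $-\frac{x_2(T)}{x_1^2(T)+x_2^2(T)}\,a^T$, which is $O(a^T)$ \emph{only if} $x_2(T)\neq 0$ at the chosen time $T$ --- a condition the paper never explicitly verifies (it requires a suitable $b_2$ and a time $T$ away from the reset instants). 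Your construction sidesteps this issue entirely by making the expanded direction $(1,-1)^{\top}$ exactly orthogonal to the fixed state $(1,1)^{\top}$: since $\mathbf{W}$ is symmetric, $\nabla\rho(\mathbf{x}(T))$ is an eigenvector of $\mathbf{J}_T^{\top}=\mathbf{W}^T$ with eigenvalue $\mu^T$, giving the exact value $\mu^T/\sqrt{2}$ rather than an order-of-magnitude estimate. You also treat the ReLU differentiability point more rigorously than the paper does: because every pre-activation along the orbit is strictly positive, the composite map is linear on a full neighborhood of $\mathbf{x}(0)$, so the Jacobian formula is exact and no generalized-derivative issues arise. What the paper's version buys in exchange is that the ReLU nonlinearity plays an active role (the clipping keeps $x_2$ bounded), whereas in your construction the network is locally indistinguishable from a linear map; for the literal statement being proved this is irrelevant, since only existence of parameters with a large angle gradient is claimed.

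Two marginal caveats, neither load-bearing. First, your parenthetical description of the paper's proof of Theorem~\ref{thm:local_Lya} (evaluation at the fixed point $\mathbf{0}$ with $\mathbf{S}_i=\mathbf{I}$) is not what that proof does --- it confines the orbit to a circle of radius $r<1$ with adaptively chosen $\mathbf{W}_t$ --- but your argument does not depend on this. Second, the aside that a strictly positive $\mathbf{W}$ ``could never expand $\rho$'' is not quite right as stated: Birkhoff contraction of the Hilbert metric does not forbid the pointwise Euclidean angle derivative from exceeding $1$ (e.g., $\mathbf{W}=\bigl(\begin{smallmatrix}1 & 0.01\\ 0.01 & 100\end{smallmatrix}\bigr)$ expands angles near the direction $(1,0)^{\top}$). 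The correct version of your intuition is Perron--Frobenius: for a strictly positive matrix the only fixed ray in the positive quadrant is the Perron direction, where the angle-map derivative is $|\lambda_2|/\lambda_1<1$, so no \emph{compounding} expansion is possible at a fixed ray --- which is exactly why your $\mathbf{W}$ must have negative off-diagonal entries. Finally, if one insists that the input lie in the interior of $[-1,1]^2$, replacing $(1,1)^{\top}$ by $(c,c)^{\top}$ with $0<c<1$ leaves the construction intact and only improves the bound to $\mu^T/(c\sqrt{2})$.
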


\subsection{Random Mapping}
In the above analysis, the linear mappings are designed specifically to result in chaos. It demonstrates only that the nonlinearity brought by the operation $\tanh$ may result in chaos. However, the dynamics in real DNNs may not be the same as that in Theorem \ref{thm:local_Lya}. Therefore, we study the ensemble of elementwisely Gaussian distributed matrices in this subsection. Note that it is reasonable to assume that the elements in the matrices $\{W_i\}_{i=1,...,D}$ are normally distributed \cite{Mallat2016,Poole2016}. Therefore, the Gaussian ensemble of linear mappings is a good approximation to real DNNs. For simplicity, we analyze the hyperbolic tangent function $\tanh$. The analysis on the ReLU function is more difficult due to the challenge of analyzing the angle dynamics that are highly nonlinear. It will be our future research to extend the conclusions on $\tanh$ to ReLU functions. 
 
Due to the difficulty of mathematical analysis, we first carry out numerical calculations for the maximum Lyapunov exponent $\lambda_1$ in the random mapping. In the numerical calculation, we assume $\phi=\tanh$ and that the elements in $\mathbf{W}_t$ are i.i.d. Gaussian random variables with zero expectation and variance $\sigma^2$. We set $\mathbf{b}=0$. The corresponding Lyapunov exponents, as a function of the variance $\sigma^2$ and dimension $d$, are shown in Fig. \ref{fig:vector1}. The contour graph is given in Fig. \ref{fig:vector2}.  We observe that, fixing the variance $\sigma^2$, the higher the dimension $d$ is, the more possible that the Lyapunov exponent becomes positive. Fixing the dimension $d$, a too small $\sigma^2$ or a too large $\sigma^2$ may make the Lyapunov exponent negative. In particular, when $d=1$, namely the dynamics is scalar, the maximum Lyapunov exponent is always negative, which implies that there be no chaotic behavior in the scalar dynamics. Since usually the input dimension $d$ is very high in real DNN applications, the results imply that chaotic behaviors do exist in practical DNNs.

\begin{figure}
  \centering
  \includegraphics[scale=0.35]{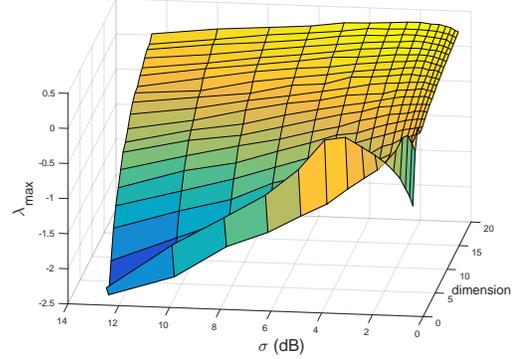}
  \caption{Lyapunov exponent $\lambda_1$ for vector deterministic dynamics with Gaussian matrices.}\label{fig:vector1}
\end{figure}

\begin{figure}
  \centering
  \includegraphics[scale=0.4]{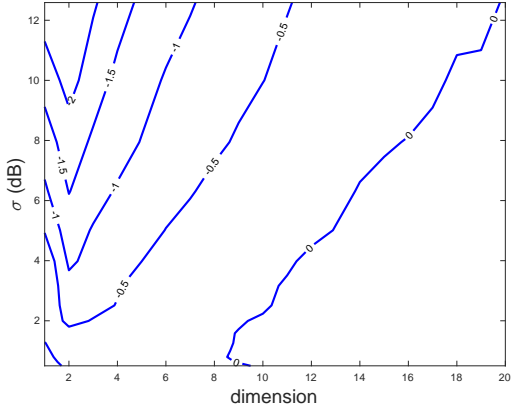}
  \caption{Contour of Lyapunov exponent for vector deterministic dynamics with Gaussian matrices.}\label{fig:vector2}
\end{figure}

The analysis on the Lyapunov spectrum of the random dynamics incurred by the Gaussian ensemble of mappings is difficult. The challenge is that there are two series of dynamics in the DNN, namely that of the tangent vector $\mathbf{u}_t$:
\begin{eqnarray}\label{eq:dyn_u}
\mathbf{u}_t=\mathbf{S}_t(\mathbf{x}_t)\mathbf{W}_t\mathbf{u}_{t-1},
\end{eqnarray}
and that of the system state $\mathbf{x}_t$:
\begin{eqnarray}
\mathbf{x}_t=\phi\left(\mathbf{W}_t\mathbf{x}_{t-1}+\mathbf{b}_t\right).
\end{eqnarray}

In Theorem \ref{thm:local_Lya}, we design the mappings such that the dynamics of $\mathbf{u}_t$ and $\mathbf{x}_t$ are decoupled. However, in the case of Gaussian random mappings, the two dynamics are coupled, since the matrix $\mathbf{S}_t$ is determined by $\mathbf{x}_t$. In order to obtain an analytic result, we apply the approach of mean field analysis in statistical mechanics by fixing the impact of $\mathbf{x}_t$ on $\mathbf{u}_t$; i.e., we consider the average behaviors of the random variables. In more details, we approximate the dynamics of the tangent vector $\mathbf{u}_t$ in $\mathbf{u}_t=\mathbf{S}_t(\mathbf{x}_t)\mathbf{W}_t\mathbf{u}_{t-1}$ using 
\begin{eqnarray}
\beta_t=\frac{E\left[\|E\left[\mathbf{S}_t(\mathbf{x}_t)\right]\mathbf{W}_t\mathbf{u}_{t-1}\|^2\right]}{E\left[\|\mathbf{u}_{t-1}\|^2\right]},
\end{eqnarray}
where the numerator is the norm of the tangent at time $t$ subject to the expectation of $\mathbf{S}_t$ while the denominator is the norm of the system state at time $t-1$.
The metric $\beta_t$ approximately characterizes the enlarging rate of the norm of $\|\mathbf{u}_t\|$ in a deterministic manner. In particular, taking expectation on $\mathbf{S}_t$ is to simplify the analysis. Then, we have the following assumption of the mean field dynamics:
\begin{assumption}\label{assum:mean}
We have the following assumptions for the dynamics of DNN with random linear mappings, whose justification for the assumption will be given in Appendix \ref{appx:global}.
\begin{itemize}
\item When $\beta_t>1$ for all sufficiently large $t$, the norm of $\mathbf{u}_t$ is expected to increase exponentially, thus resulting in chaotic behavior. 
\item When $t$ and $n$ are sufficiently large, we have $\sum_{i=1}^nx_{ti}^2=C$, where $C$ is a time-invariant constant.
\end{itemize}
\end{assumption}

Given Assumption \ref{assum:mean}, we have the following theorem providing a condition for the chaotic behavior with random linear mappings. The proof of is given in Appendix \ref{appx:global}.

\begin{theorem}\label{thm:global}
Given Assumption \ref{assum:mean}, we have $\lambda_1>1$ for the DNN dynamics with Gaussian random linear mappings, if 
\begin{eqnarray}\label{eq:chaos_ineq}
\left(1-4\tanh \left(\sigma\sqrt{\frac{2}{\pi}}h\left(\sigma\sqrt{\frac{2}{\pi}}\right)\right)\right)d\sigma^2>1,
\end{eqnarray}
where $h(x)$ is the nonzero and positive solution to the equation $z=\tanh(x z)$ (where $z$ is the unknown while $x$ is the parameter).
\end{theorem}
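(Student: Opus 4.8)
The plan is to reduce the statement, via Assumption \ref{assum:mean}, to a single scalar inequality on the mean-field growth factor $\beta_t$, and then to evaluate $\beta_t$ in the stationary regime. By the first bullet of Assumption \ref{assum:mean}, it suffices to show that $\beta_t>1$ for all sufficiently large $t$; the norm $\|\mathbf{u}_t\|$ of the tangent vector then grows exponentially, which is the chaotic behavior (positive top Lyapunov exponent $\lambda_1$) asserted. So the real content is to bound $\beta_t$ from below by the left-hand side of (\ref{eq:chaos_ineq}).

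First I would compute $\beta_t$ explicitly. Because the stationary pre-activations are identically distributed across coordinates, the mean-field matrix $E[\mathbf{S}_t(\mathbf{x}_t)]$ is a scalar multiple of the identity, $E[\mathbf{S}_t]=s\,\mathbf{I}$ with $s=1-E[\tanh^2 h]$, where $h$ denotes a stationary pre-activation. Since $\mathbf{u}_{t-1}$ depends only on $\mathbf{W}_1,\dots,\mathbf{W}_{t-1}$ and is therefore independent of $\mathbf{W}_t$, and since $E[\mathbf{W}_t^{T}\mathbf{W}_t]=d\sigma^2\mathbf{I}$ for an elementwise $N(0,\sigma^2)$ matrix, the expectation in the numerator of $\beta_t$ factorizes and yields $\beta_t=s^2 d\sigma^2$. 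Thus the task is reduced to controlling $s$, i.e.\ to estimating $E[\tanh^2 h]$ in the stationary state.

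Next I would pin down the stationary statistics of $h$. Conditioned on $\mathbf{x}_{t-1}$, each coordinate $h_{tj}=\sum_k W_{t,jk}x_{t-1,k}$ is centered Gaussian with variance $\sigma^2\|\mathbf{x}_{t-1}\|^2$, and the second bullet of Assumption \ref{assum:mean} fixes $\|\mathbf{x}_{t-1}\|^2$ at a time-invariant constant, so $h$ is effectively a fixed centered Gaussian. The stationary magnitude is then governed by a self-consistency equation of fixed-point type $z=\tanh(\sigma\sqrt{2/\pi}\,z)$, whose positive solution is exactly $z^{*}=h(\sigma\sqrt{2/\pi})$; the appearance of $\sigma\sqrt{2/\pi}=E[|W_{jk}|]$ is what ties the magnitude recursion to this fixed point. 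The key observation is that, by the very definition of $h$, one has $\tanh(\sigma\sqrt{2/\pi}\,h(\sigma\sqrt{2/\pi}))=z^{*}$, so the hypothesis (\ref{eq:chaos_ineq}) is simply $(1-4z^{*})d\sigma^2>1$. It then remains to establish the bound $E[\tanh^2 h]\le 2z^{*}$, which gives $s\ge 1-2z^{*}$ and hence $s^2\ge(1-2z^{*})^2\ge 1-4z^{*}$; combining with $\beta_t=s^2 d\sigma^2$ yields $\beta_t\ge(1-4z^{*})d\sigma^2>1$, and Assumption \ref{assum:mean} closes the argument.

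I expect the main obstacle to be the justification of the mean-field reduction itself rather than any of the algebra. Three points are delicate: establishing that the pre-activation $h$ is effectively Gaussian with the claimed stationary variance (this is where the decoupling of the $\mathbf{u}_t$ and $\mathbf{x}_t$ dynamics, and the replacement of $\mathbf{S}_t$ by its expectation $E[\mathbf{S}_t]$, must be defended, as flagged for Appendix \ref{appx:global}); deriving the precise self-consistency that produces the positive fixed point $z^{*}$ with argument $\sigma\sqrt{2/\pi}$; and proving the estimate $E[\tanh^2 h]\le 2z^{*}$ that converts the quadratic factor $s^2$ into the linear expression $1-4z^{*}$. Finally, the passage from $\beta_t>1$ to a genuinely positive top Lyapunov exponent is not proved from first principles but is precisely the content imported by Assumption \ref{assum:mean}, so the theorem is only as strong as that assumption.
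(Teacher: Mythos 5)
Your overall strategy is the same as the paper's: reduce the theorem, via the first bullet of Assumption \ref{assum:mean}, to showing $\beta_t>1$; evaluate $\beta_t$ through the stationary second moment $R=E[\tanh^2 h]$ of the activations; and control $R$ by the fixed point $z^{*}=h(\sigma\sqrt{2/\pi})$. However, as written your proposal has a genuine gap at its center: the estimate $E[\tanh^2 h]\le 2z^{*}$, on which your whole argument stands, is never proved — you explicitly defer it. This is not routine bookkeeping; it is the entire analytic content of the theorem. The paper supplies exactly this ingredient as Lemma \ref{lem:scalar_case}: for the effective scalar dynamics $x(t+1)=\tanh(\alpha x(t))$ with $\alpha\sim\mathcal{N}(0,\sigma^2)$, it proves $\mathrm{Var}(x)\le 4\tanh\left(\sigma\sqrt{2/\pi}\,h\left(\sigma\sqrt{2/\pi}\right)\right)$ by restricting to the positive quadrant by symmetry, applying Jensen's inequality through the concavity of $\tanh$ on $[0,\infty)$, invoking the fixed-point comparison $E^{+}[x]\le h(E^{+}[\alpha])$ of (\ref{eq:leq}), and computing the half-normal mean $E^{+}[\alpha]=\sigma\sqrt{2/\pi}$. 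Relatedly, your statement that the stationary magnitude ``is governed by'' $z=\tanh(\sigma\sqrt{2/\pi}\,z)$ is not an exact self-consistency: the exact stationary equation is on the variance, $R=E[\tanh^2(\text{Gaussian of variance }\propto R)]$ (the paper's Lemma \ref{lem:exist} and the display preceding (\ref{eq:Rles})); the fixed point $z^{*}$ enters only as an upper bound through the Jensen chain.

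A second, more interesting discrepancy: you compute $\beta_t=s^2 d\sigma^2$ with $s=1-R$, which is what the definition of $\beta_t$ literally gives, since $E[\mathbf{S}_t]=(1-R)\mathbf{I}$ sits inside a squared norm. The paper instead evaluates $\beta_t=(1-R)d\sigma^2$ in (\ref{eq:beta_t}), without the square, which is why its factor-4 bound $R\le 4z^{*}$ suffices there, whereas your faithful computation forces you to need the stronger factor-2 bound. Your observation that $\tanh(x\,h(x))=h(x)$ by the definition of $h$ is correct and is a simplification the paper never exploits; indeed, the paper's own Jensen chain, carried out carefully, yields $R\le\tanh\left(E^{+}[\alpha]\,h\left(E^{+}[\alpha]\right)\right)=z^{*}$ (using $\tanh^2\le\tanh$ on $[0,\infty)$), which is stronger than what either you or the paper needs, so your plan is completable by the paper's own technique. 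But until you actually prove a bound of the form $R\le 2z^{*}$ (or sharpen it to $R\le z^{*}$), the proposal reduces the theorem to an unproven inequality rather than proving it.
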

\begin{remark}
The inequality (\ref{eq:chaos_ineq}) is based on mean dynamics analysis and Assumption \ref{assum:mean}. However, it shows that a higher dimensional dynamics is more possible to be chaotic. Moreover, a too large or too small $\sigma^2$ disables (\ref{eq:chaos_ineq}). Both coincide the observations in the numerical results. An explicit expression for the function $h$ in terms of infinite series, based on the Lagrange Inverse Formula \cite{Bruijn2010} can also be found in Appendix \ref{appx:global}.
\end{remark}

\section{Capabilities of Classification and Generalization}\label{sec:application}
In this section, we apply the metrics characterizing the dynamics of DNN to the analysis of classification and generalization capabilities.

\subsection{Classification Capability}\label{sec:classification}
We first define the complexity of classification problem and then relate it to the topological entropy of DNN dynamics.

\subsubsection{Complexity of Classification} We define the complexity of classification, which facilitates the subsequent analysis of DNN classification. The philosophy is to measure how difficult it is to separate the samples from different classes. There have been some studies on how to define the complexity of pattern recognition \cite{Basu2006,Ho2000,Hoekstra1996,Lorena2015}. However, they are mainly focused on the nonlinearity of classifier and the overlap of features, which is significantly different from our proposed definition.

For defining the complexity of classification, we first need the following definition of set radius.
\begin{definition}
For a connected and compact set $A$ in $\mathbb{R}^d$, we define its radius as
$r(A)=\sup_{\mathbf{x},\mathbf{y}}\{\|\mathbf{x}-\mathbf{y}\||\mathbf{x},\mathbf{y}\in A\}$.
Then, we define the radius of a partition $P$ of $A$ as $r(P)=\max_{A}\{r(A)|A\in P\}.$
\end{definition}

Then, we define the classification complexity as follows:
\begin{definition}\label{def:complexity0}
Given the sample set $\mathbf{X}$, consisting of $\mathbf{X}_1$ labeled by +1 and $\mathbf{X}_2$ labeled by -1, we say that a Borel set $A$ is hybrid with respect to $\mathbf{X}$ if $A\cap \mathbf{X}_j\neq \phi$, $j=1,2$. Suppose that $\mathbf{X}$ is contained in a compact set $A$ (e.g., the convex hull of $\mathbf{X}$). For a partition $P$ of $A$, we define
\begin{eqnarray}\label{eq:hybrid}
N(P|\mathbf{X})=\left|\{A|A\in P, A\mbox{ is hybrid w.r.t. $\mathbf{X}$}\}\right|.
\end{eqnarray}
Then, we define the $\epsilon$-complexity given the sample set $\mathbf{X}$ as
\begin{eqnarray}\label{eq:complexity}
C(\mathbf{X},\epsilon)=\log_2 \inf_{P:r(P)\leq \epsilon}N(P|\mathbf{X}).
\end{eqnarray}
 \end{definition}

\begin{remark}
The complexity of classification in (\ref{eq:complexity}) characterizes how the two classes of samples are mixed with each other. It also characterizes the size of the boundary surface between the two classes; the larger, the more complex. As will be seen, the definition of classification complexity will facilitate the analysis based on topological entropy. The links between the classification complexity and related metrics are discussed in Appendix \ref{appdx:related}.
\end{remark}

\subsubsection{Number of DNN Layers}
Now, we apply the concept of topological entropy to the analysis of DNN, in terms of the capability of classification. 
We need the following definition:
\begin{definition}
Consider two sets of vectors $Z_1=\{\mathbf{z}_{11},...,\mathbf{z}_{1N_1}\}$ and $Z_1=\{\mathbf{z}_{11},...,\mathbf{z}_{1N_1}\}$. We say that $Z_1$ and $Z_2$ are affine separable with an $\epsilon$-margin, if there exists an affine classifier $(\mathbf{w},b)$ (where $\|\mathbf{w}\|=1$) such that 
\begin{eqnarray}
\left\{
\begin{array}{ll}
&\mathbf{w}^T\mathbf{z}_1-b\geq \frac{\epsilon}{2}\\
&\mathbf{w}^T\mathbf{z}_2-b\leq -\frac{\epsilon}{2}
\end{array}
\right..
\end{eqnarray}
\end{definition}

Then, we have the following lower bound on the number of layers in DNN in the following theorem, whose proof is given in Appendix \ref{appdx:layer_num}:
\begin{theorem}\label{thm:layer_num}
Suppose that the output of DNN hidden layers is affine separable with $\epsilon$-margin for the sample set $\mathbf{X}$. Then, the number of layers $D$ is lower bounded by
\begin{eqnarray}
D\geq \frac{C(\mathbf{X},\epsilon)+1}{H_s(\{T_n\}_{n=1,..,D},D,\epsilon)}.
\end{eqnarray}
\end{theorem}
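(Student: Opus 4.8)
The plan is to upper bound the classification complexity $C(\mathbf{X},\epsilon)$ by the logarithm of the number of spanning orbits of the composite DNN map and then rearrange into a lower bound on $D$. The key leverage is that $C(\mathbf{X},\epsilon)=\log_2\inf_{P:r(P)\le\epsilon}N(P|\mathbf{X})$ is an \emph{infimum} over all radius-$\epsilon$ partitions, so it suffices to exhibit one convenient partition $P_0$ of the input set and control its number of hybrid cells $N(P_0|\mathbf{X})$; this immediately yields $C(\mathbf{X},\epsilon)\le\log_2 N(P_0|\mathbf{X})$, and I never have to reason about the optimal partition directly.

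First I would take a minimal $(D,\epsilon/2)$-spanning orbit set $S$ for the composite random dynamics $F_i=T_i\circ\cdots\circ T_1$, so that $|S|=r(\{T_n\},D,\epsilon/2)$ and the metric-$m^D$ balls $\{B^D(y,\epsilon/2)\}_{y\in S}$ cover the input space. Since the $i=1$ term of $m^D$ is the identity, one has $m^D(x_1,x_2)\ge\|x_1-x_2\|$, so two inputs lying in a common ball are within $\epsilon$ in the input metric. Assigning each input to one ball it belongs to therefore produces a genuine partition $P_0$ with $r(P_0)\le\epsilon$ and at most $|S|$ cells; this is the object I feed into the complexity infimum. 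The payoff of passing through $m^D$ rather than a single-layer metric is that membership in a common cell forces closeness at \emph{every} layer simultaneously, in particular at the classifier input $\mathbf{x}_D$.

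Next I would bound the hybrid cells of $P_0$ using affine separability at the output. Two samples sharing a cell share a spanning orbit, so their images $\mathbf{z}_1,\mathbf{z}_2$ stay $O(\epsilon)$-close at the output. If they carried opposite labels, the $\epsilon$-margin conditions $\mathbf{w}^T\mathbf{z}_1-b\ge\epsilon/2$ and $\mathbf{w}^T\mathbf{z}_2-b\le-\epsilon/2$ with $\|\mathbf{w}\|=1$ would force $\|\mathbf{z}_1-\mathbf{z}_2\|\ge\epsilon$, which competes directly with their closeness; hence only cells whose output image straddles the separating hyperplane can be hybrid. Since the hyperplane sorts the distinguishable orbits into two pure sides, the hybrid (margin-straddling) cells are at most a $1/2$ fraction of the cells, so $\log_2 N(P_0|\mathbf{X})\le\log_2|S|-1$. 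Combining with the infimum bound gives $C(\mathbf{X},\epsilon)+1\le\log_2|S|=H_s(\{T_n\}_{n=1,\dots,D},D,\cdot)$; this is the step at which the additive $+1$ is generated, namely from the binary (two-sided) structure imposed by a single affine classifier. Using $D\ge 1$ and $H_s\ge 0$, or equivalently a per-layer telescoping of the spanning count across the $D$ layers, promotes this to $C(\mathbf{X},\epsilon)+1\le D\,H_s(\{T_n\}_{n=1,\dots,D},D,\epsilon)$, which rearranges to the claimed bound.

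The hard part will be the constant and scale bookkeeping that makes the additive term exactly $1$ rather than an uncontrolled $O(1)$. Three issues must be reconciled: the $m^D$ metric as written governs the outputs of layers $0$ through $D-1$, whereas the separability is stated at layer $D$, so I would absorb the final layer by a one-extra-step or Lipschitz argument; the purity argument needs the output diameter of a cell to sit strictly below the margin, which is why I spanned at scale $\epsilon/2$, but this replaces $H_s(\cdot,\epsilon)$ by the larger $H_s(\cdot,\epsilon/2)$ and must be controlled (cleanly in the $\epsilon\to 0$ limit implicit in the definition of topological entropy, and up to bounded factors at fixed $\epsilon$); and the factor $D$ in the denominator reflects a per-layer accounting of the spanning count that must be matched to the paper's convention for $H_s(\{T_n\}_{n=1,\dots,D},D,\epsilon)$. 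I expect verifying that the $\epsilon/2$ refinement does not inflate the entropy term beyond $H_s(\{T_n\}_{n=1,\dots,D},D,\epsilon)$ to require the most care, since it is precisely the diameter-versus-margin factor of two that both enables the purity argument and threatens the tightness of the final constant.
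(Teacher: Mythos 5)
Your overall strategy---upper-bounding $C(\mathbf{X},\epsilon)$ by exhibiting a partition $P_0$ built from a minimal spanning set---is the reverse direction of the paper's argument, and it breaks at exactly the step that generates the ``$+1$''. Your own purity argument shows that \emph{no} cell of $P_0$ can be hybrid: two samples sharing a cell are within $\epsilon$ of each other at the output, while opposite labels plus the $\epsilon$-margin force their outputs at least $\epsilon$ apart. Carried out consistently, this gives $N(P_0|\mathbf{X})=0$, hence $C(\mathbf{X},\epsilon)=\log_2 0=-\infty$, and your construction trivializes the theorem rather than proving it. The claim that ``the hybrid (margin-straddling) cells are at most a $1/2$ fraction of the cells'' is not a deduction from anything you established---the hyperplane sorting orbits into two sides gives no such count---and it is visibly reverse-engineered to produce the additive $1$. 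The second problem is one you flag yourself but do not resolve: spanning at scale $\epsilon/2$ yields at best $C(\mathbf{X},\epsilon)+1\leq \log_2 r(D,\epsilon/2)$, i.e.\ a bound in terms of $H_s(\cdot,\epsilon/2)\geq H_s(\cdot,\epsilon)$, which is an inequality in the wrong direction for the theorem as stated at fixed $\epsilon$; this is not a bookkeeping constant but a strictly weaker conclusion.

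The paper's proof runs in the opposite direction and never constructs a partition, which is why it does not trip over the degeneracy. It takes an \emph{arbitrary} partition with $r(P)\leq\epsilon$ (hence, by the definition of the infimum, one with at least $2^{C(\mathbf{X},\epsilon)}$ hybrid cells), extracts from each hybrid cell an opposite-label pair with $\|\mathbf{x}_1-\mathbf{x}_2\|\leq\epsilon$, uses the margin condition together with $\|\mathbf{w}\|=1$ to get $\|\mathbf{z}_1-\mathbf{z}_2\|\geq\epsilon$ at the output, and concludes that these initially-close, finally-separated pairs force $\log_2 r(\{T_n\}_{n=1,\dots,D},D,\epsilon)\geq C(\mathbf{X},\epsilon)+1$; the $+1$ there comes from needing two distinct spanning orbits per hybrid cell, not from a half-fraction count, and everything stays at scale $\epsilon$ so no $\epsilon/2$ correction arises. (The paper's own counting is admittedly loose---separation by $\epsilon$ only forbids sharing a spanning point of radius below $\epsilon/2$, and distinct hybrid cells could in principle share spanning orbits---but that is its intended mechanism.) If you want to salvage your direction, you must confront the fact that your construction makes every cell pure: what you have actually shown is that the complexity, defined as an infimum over partitions, collapses to $-\infty$ under the separability hypothesis, which is an observation about the definition, not a proof of the stated lower bound on $D$.
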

\begin{remark}
Recall that $H_s(\{T_n\}_{n=1,..,D},D,\epsilon)$ is defined in Definition \ref{def:entropy007}. Note that, when $D$ is sufficiently large and $\epsilon$ is sufficiently small, $H_s(\{T_n\}_{n=1,..,D},D,\epsilon)$ can be well approximated by $h_s(\{T_n\}_{n=1,2,...})$, which becomes independent of $D$ and $\epsilon$.
\end{remark}

\subsection{Generalization Capability}\label{sec:generalization}
A over-complex model for classification may fail in the generalization to samples beyond the training set. Hence, it is important to study the generalization capability of DNNs, characterized by the VC-dimension \cite{Anthony1999}. Note that the VC-dimension is usually studied by quantifying the separated components in the parameter space \cite{Anthony1999}. Meanwhile, the VC-dimension of DNN has been substantially explored in \cite{Bartlett2017}, which is slightly different from the setup in our study, since it addresses piecewisely linear DNNs. Our results may not be as precise as that in \cite{Bartlett2017}; however, we leverage the layered structure of DNN and may provide a new direction for studying the VC-dimension of DNN.
Note that the generalization capability of DNN has been shown to be better than predicted by VC-dimension argument in typical applications \cite{Zhang2017}. Many explanations have been given to the excellent generalization capability of DNN, e.g. \cite{Dinh2017}. However, our study shows a new recursive approach to explore the generalization of DNN.

\subsubsection{VC-dimension}
It is well known that the generalizability of a hypothesis set $H$ is determined by its VC-dimension, which is defined as follows:
\begin{definition}
We say that a set of $m(\geq 1)$ points is fully shattered by a binary hypothesis set $H$ if the hypotheses in $H$ can realize all the $2^m$ possible labelings for $S$. The VC-dimension is defined as
\begin{eqnarray}
d_{VC}(H)=\max\{|S|: S \text{ is fully shattered by }H\}.
\end{eqnarray}
\end{definition}
The relationship between the VC-dimension and the generalization capability can be found in \cite{Hanneke2016}.

\subsubsection{VC-dimension in DNN}
In the subsequent analysis, we find both lower and upper bounds for the VC-dimension from the view point of dynamical systems.
Based on the ensemble topological entropy, we have the following upper bound for the VC-dimension of DNN. The proof is given in Appendix \ref{appdx:vc_bound}.
\begin{theorem}\label{thm:vc_bound}
The VC-dimension of DNN is upper bounded by
\begin{small}
\begin{eqnarray}\label{eq:VC_dim}
d_{VC}\leq \sup_{\{T_{mt}\}_{m,t},\mathbf{z}}H_e(\{T_{mt}\}_{m=1,...,2^{d_{VC}},t=1,...,D},D,\epsilon,\mathbf{z})D,
\end{eqnarray}
\end{small}
when the DNN output is affine separable with $\epsilon$ margin.
\end{theorem}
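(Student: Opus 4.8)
The plan is to read the bound off directly from the definition of the ensemble topological entropy, by lower-bounding the number of distinguishable ensemble paths by the number of labelings that any hypothesis class shattering $d_{VC}$ points is forced to realize. Since by definition $H_e(\{T_{mt}\}_{m,t},D,\epsilon,\mathbf{z})\,D=\log_2 r_e(\{T_{mt}\}_{m,t},D,\epsilon,\mathbf{z})$, the claimed inequality $d_{VC}\le H_e D$ is equivalent to the cleaner statement $r_e\ge 2^{d_{VC}}$ for an appropriate ensemble and an appropriate starting hyper-vector $\mathbf{z}$. So everything reduces to exhibiting $2^{d_{VC}}$ pairwise $\epsilon$-separated paths and then taking a supremum.

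First I would fix a set $S=\{\mathbf{x}^1,\dots,\mathbf{x}^{d_{VC}}\}$ of $d_{VC}$ points that is fully shattered by the DNN class, and stack these points into a single hyper-vector $\mathbf{z}=(\mathbf{x}^1,\dots,\mathbf{x}^{d_{VC}})\in\mathbb{R}^{d\,d_{VC}}$, i.e. I take $L=d_{VC}$ in the ensemble construction (\ref{eq:ensemble}). Shattering means that each of the $2^{d_{VC}}$ labelings is attainable; so for every labeling, indexed by $m=1,\dots,2^{d_{VC}}$, I would pick one DNN whose $D$ layers realize that labeling, giving a sequence of transformations $\{T_{mt}\}_{t=1,\dots,D}$. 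Applying the lifted maps $\tilde{T}^L_{mt}$ to the common point $\mathbf{z}$ produces, for each $m$, one ensemble path whose terminal point gathers the $D$-layer outputs of all $d_{VC}$ inputs under DNN $m$. We thus obtain $M=2^{d_{VC}}$ ensemble paths from $\mathbf{z}$, which is exactly the object counted (up to $\epsilon$-resolution) by $r_e(\{T_{mt}\}_{m,t},D,\epsilon,\mathbf{z})$.

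The crux is to show these $2^{d_{VC}}$ paths are genuinely distinct at resolution $\epsilon$, so none are collapsed in $r_e$. I would argue by contraposition through the affine-separability hypothesis. If two selected DNNs $m_1\ne m_2$ produced ensemble paths within $\epsilon$ of each other in the metric (\ref{eq:metric_n}), then in particular their terminal output configurations would be within $\epsilon$, so at the coordinate where the two labelings disagree the two transformed outputs would differ by less than $\epsilon$. Using a common separating direction $\mathbf{w}$ with $\|\mathbf{w}\|=1$ and the $\epsilon$-margin condition (one output satisfying $\mathbf{w}^T(\cdot)-b\ge \epsilon/2$ and the other $\le-\epsilon/2$), the two scores would differ by at least $\epsilon$, while $|\mathbf{w}^T(\cdot)|\le\|\cdot\|<\epsilon$ forces them to differ by less than $\epsilon$ — a contradiction, so the disagreeing coordinate cannot carry opposite labels, i.e. $m_1=m_2$. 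Hence the $2^{d_{VC}}$ paths are pairwise $\epsilon$-separated, $r_e\ge 2^{d_{VC}}$, and taking logarithms gives $H_e D\ge d_{VC}$; passing to the supremum over admissible ensembles $\{T_{mt}\}_{m,t}$ and initial $\mathbf{z}$ yields (\ref{eq:VC_dim}).

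I expect the main obstacle to be precisely this distinctness step, because the margin bookkeeping is delicate: $m^n$-closeness of two paths controls the entire trajectory, but the labeling is decided only by the final affine classifier, and a priori different labelings may use different classifiers $(\mathbf{w},b)$. The clean contradiction above uses a \emph{common} direction; making it airtight therefore requires either reducing to a shared final classifier or absorbing the per-labeling classifier freedom into the $\sup$ over ensembles, so that $\epsilon$-indistinguishable DNN outputs are genuinely forced into identical labels. Matching the $\epsilon$ of the path metric (\ref{eq:metric_n}) to the $\epsilon$ of the separation margin is the only nontrivial point; the stacking construction, the counting $M=2^{d_{VC}}$, and the logarithmic conversion to $H_e D$ are all routine once this resolution-versus-margin matching is secured.
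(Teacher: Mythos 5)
Your proposal is correct and follows essentially the same route as the paper's proof: stack the $d_{VC}$ shattered samples into the hyper-vector $\mathbf{z}$, assign one $D$-layer realization $\{T_{mt}\}_{t=1,\dots,D}$ to each of the $2^{d_{VC}}$ labelings, use the $\epsilon$-margin to argue that the resulting ensemble paths are pairwise $\epsilon$-separated, and compare $2^{d_{VC}}$ against the path count $2^{H_e D}$ to conclude. The delicacy you flag at the end --- that different labelings may be realized with different final classifiers $(\mathbf{w},b)$, so the separation step genuinely needs a common direction or some reduction to one --- is real, but the paper's own proof does not resolve it either: it simply asserts that $\epsilon$-affine separability forces the $2^{d_{VC}}$ terminal configurations to be at least $\epsilon$ apart.
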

\begin{remark}
Again the quantity $H_e$ can be approximated by $h_e$ when $D$ is sufficiently large and $\epsilon$ is sufficiently small.
The above upper bound for the VC-dimension of DNN is obtained from the complexity analysis. Unfortunately, we are still unable to numerically or analytically evaluate the ensemble topological entropies $H_e$ or $h_e$.
\end{remark}

We can also analyze the details of shattering the samples by leveraging the nonlinearity of $\tanh$ in a layer-by-layer manner, thus resulting in the following lower bound of VC-dimension. The proof is given in Appendix \ref{appdx:vc_lbound}.

\begin{theorem}\label{thm:vc_lbound}
The number of distinct samples that can be shattered by a DNN is at least $D+3$, where $D$ is the number of layers. Therefore, the VC-dimension of DNN is lower bounded by
$d_{VC}\geq D+3.$
\end{theorem}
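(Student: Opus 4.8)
The plan is to establish the lower bound constructively: exhibit a set $S$ of $D+3$ input points and verify that, for each of the $2^{D+3}$ labelings of $S$, there is a choice of DNN parameters $\{(\mathbf{W}_i,\mathbf{b}_i)\}_{i=1}^{D}$ together with a classifier $(\mathbf{w},b)$ realizing it. I would argue by induction on the number of layers $D$, with the inductive hypothesis that a DNN with $D$ hidden layers shatters a configuration of $D+3$ points lying on a fixed smooth curve $\gamma\subset(-1,1)^d$, the latter being the range of the elementwise $\tanh$.

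For the base case $D=0$ the network is the bare affine classifier $\mathrm{sign}(\mathbf{w}^T\mathbf{x}+b)$. Assuming $d\geq 2$, the family of half-spaces in $\mathbb{R}^d$ shatters $d+1\geq 3$ points in general position, so choosing any $3$ points of $\gamma$ in general position handles the $2^{3}$ labelings and gives the claim $D+3=3$ at $D=0$.

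The inductive step is where the nonlinearity of $\tanh$ is used. Suppose a $D$-layer DNN shatters a configuration of $D+3$ points. Prepend one new first layer $\mathbf{x}\mapsto\tanh(\mathbf{W}_1\mathbf{x}+\mathbf{b}_1)$ and fix $D+4$ input points on $\gamma$. Given a target labeling, I would choose $(\mathbf{W}_1,\mathbf{b}_1)$, with $\mathbf{W}_1$ invertible so that this layer is a diffeomorphism onto $(-1,1)^d$, so as to: (i) drive one distinguished point far into a saturation corner, where $\tanh\approx\pm1$, so that its image is an essentially constant vector whose downstream contribution to $\mathbf{w}^T\mathbf{x}_{D+1}+b$ can be absorbed by $b$ and thereby forced to the required label; and (ii) send the remaining $D+3$ points to a configuration that the downstream $D$-layer sub-network can shatter, which is possible by the inductive hypothesis, thereby realizing the remaining $D+3$ labels. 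This produces a shatterable set of size $D+4=(D+1)+3$, completing the induction.

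The main obstacle is the decoupling required in the inductive step. Because the sample set must be fixed once and for all while only the parameters may vary with the labeling, one must show that the new layer can isolate the distinguished point for \emph{every} labeling while simultaneously leaving the other $D+3$ images in a position the downstream network is still able to shatter into an arbitrary pattern. Making this rigorous requires strengthening the inductive hypothesis from shatterability of a single configuration to shatterability of a robust family of configurations (e.g., all images in general position within a prescribed region of $\gamma$), and then quantifying how deep the saturation of the peeled coordinate must be relative to the Lipschitz constants of the composite downstream map $T_{D+1}\circ\cdots\circ T_2$ so that its near-constant output does not interfere with the shattering of the remaining points. Verifying that the degrees of freedom in $(\mathbf{W}_1,\mathbf{b}_1)$ suffice to meet both requirements for all $2^{D+4}$ labelings is the crux of the argument.
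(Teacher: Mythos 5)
Your proposal has a genuine gap, and in fact two. First, the step ``its downstream contribution to $\mathbf{w}^T\mathbf{x}_{D+1}+b$ can be absorbed by $b$'' does not work as stated. Driving the distinguished point into a saturation corner at the \emph{first} layer only makes its first-layer image nearly constant; after passing through the $D$ downstream $\tanh$ layers, that image is just some point in $[-1,1]^d$, with no reason to be extreme in the direction of $\mathbf{w}$ (which, moreover, changes with every labeling). The pair $(\mathbf{w},b)$ is already pinned down, up to margins, by the requirement of shattering the other $D+3$ images, so shifting $b$ to force the distinguished point's label will in general flip the labels of other points. You would need the saturated point's \emph{final-layer} feature to dominate all others along $\mathbf{w}$, and nothing in the construction provides that. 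Second, you yourself identify but do not resolve the decoupling problem: the inductive hypothesis gives shatterability of \emph{one} configuration, while your prepended layer must map the $D+3$ fixed remaining inputs onto a shatterable configuration for \emph{every} labeling, simultaneously with the saturation constraint. Strengthening the hypothesis to a ``robust family'' of configurations is precisely the hard part, and it is left unproven, so what you have is a plan rather than a proof.

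It is instructive to compare with the paper's argument, which also inducts on depth (one new sample per new layer) but \emph{appends} the new layer at the output side rather than prepending it at the input side. There, the new sample is placed outside the convex hull of the old ones; labelings already realizable by the $k-3$ layer network need no repair, and for the single problematic type of labeling the paper manipulates the geometry seen directly by the final linear classifier: affine maps squeeze the samples toward a line through the new point and the centroid of its class, and a lemma showing that $\tanh^{-1}$ maps lines $x_2=ax_1$ ($a<1$) to concave curves lets one interpose a concave separating boundary that the final $\tanh$ straightens into a linearly separable arrangement. Because the new nonlinearity acts at the classifier interface, no re-shattering of downstream images is ever needed, which is exactly the obstruction your prepending approach runs into. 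If you want to salvage your route, you would have to prove the robust-family inductive hypothesis and replace the bias-absorption step with a mechanism that controls the distinguished point's feature at the \emph{last} layer; at that point you are effectively rebuilding the paper's append-at-the-end construction.
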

\begin{remark}
The conclusion shows that the VC-dimension increases at least linearly with the number of layers. The layer-by-layer analysis may be extended to find finer bounds, since we only analyzed the 2-dimensional case in the proof. 
\end{remark}

\section{Numerical Results}
In this section, we use the ResNet-101 classifier, together with the image set called \textit{ImageNet}, to evaluate the largest Lyapunov exponents for practical DNNs and real images. For notational simplicity, we call the largest Lyapunov exponent the Lyapunov exponent. 

\subsection{ResNet-101}
We used the pretrained ResNet-101 \cite{He2016}, which is a convolutional deep neural network and is available in Matlab codes. The DNN of ResNet-101 has 101 layers. The input and output of ResNet-101 are images and the corresponding image categories. The coefficients of the DNN are trained using the celebrated ImageNet database \cite{ImageNet}. The evaluation of the Lyapunov exponents is based on the image set `Caltech 101', which contains 101 classes of images. Each image is normalized to the size of $224\times 224$. In total 8733 images are used in our numerical computations. 

\subsection{Estimation of Lyapunov Exponents}
For traditional dynamical systems, the Lyapunov exponents can be estimated using the approaches in \cite{Parker1989} (pp.71--73). However, these algorithms are designed for dynamics having identical mappings in each round, while the DNN dynamics are random, and have different nonlinear mappings in each layer due to different coefficients. Moreover, the number of layers of ReSNet-101 (i.e., 101 layers) may not be sufficiently large such that the maximum increasing direction emerges spontaneously from any initial direction. Therefore, we propose the algorithm in Procedure \ref{alg:Lya} to approximately estimate the Lyapunov exponents.

\begin{algorithm}[H]
	\caption{Algorithm for Estimating Lyapunov Exponents}\label{alg:Lya}
	\begin{algorithmic}[1]
		\STATE{Read an image $\mathbf{x}(0)$ from the image set `Caltech 101'.}
		\STATE{Calculate the output $\mathbf{y}$ of DNN given the input $\mathbf{x}(0)$.}
		\FOR{t =1:presetTimes}
		         \STATE{Randomly generate a perturbation $\delta \mathbf{x}$ with sufficiently small magnitude.}
		         \STATE{Calculate the perturbed input $\mathbf{x}_t(0)=\mathbf{x}(0)+\delta \mathbf{x}$.}
		         \STATE{Calculate the output $\mathbf{y}_t$ of DNN given the input $\mathbf{x}_t(0)$.}
		         \STATE{Calculate the approximate derivative $d_t=\frac{\|\mathbf{y}-\mathbf{y}_t\|}{\|\mathbf{x}(0)-\mathbf{x}_t(0)\|}$.}
		\ENDFOR
		\STATE{Choose the maximum absolute value from the set $\{d_1,d_2,...\}$ as the approximate Lyapunov exponent.}
	\end{algorithmic}
\end{algorithm}

\subsection{Numerical Results}
Consider each image as a point in the `image space', whose dimension is 50176. Therefore, the Lyapunov exponent, estimated using Procedure \ref{alg:Lya}, can be considered as the corresponding Lyapunov exponent at the corresponding point. 

\subsubsection{Trajectory}
In Fig. \ref{fig:trajectory}, the trajectories beginning from an image point and a perturbed point are plotted. Since the trajectories are in a very high dimensional space, we plot only the trajectories on two dimensions. We observe that, although the perturbation is very small, the two trajectories are significantly different. Note that the trajectories return to the origin frequently, which is due to the function of ReLU. 

\begin{figure}
  \centering
  \includegraphics[scale=0.28]{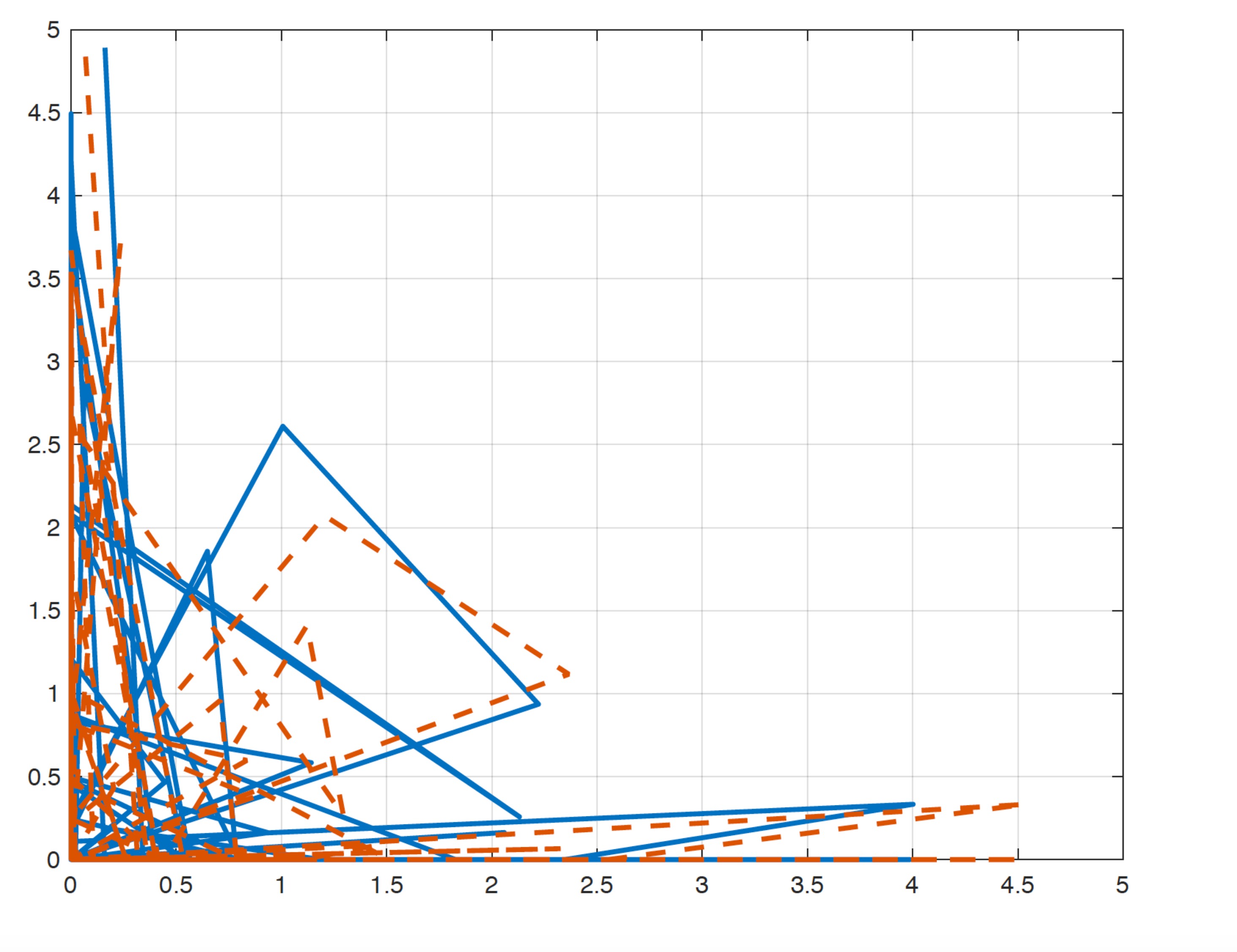}
  \caption{The projection of trajectory subject to perturbation.}\label{fig:trajectory}
\end{figure}

\subsubsection{Histogram of Lyapunov Exponents}

\begin{figure}
  \centering
  \includegraphics[scale=0.3]{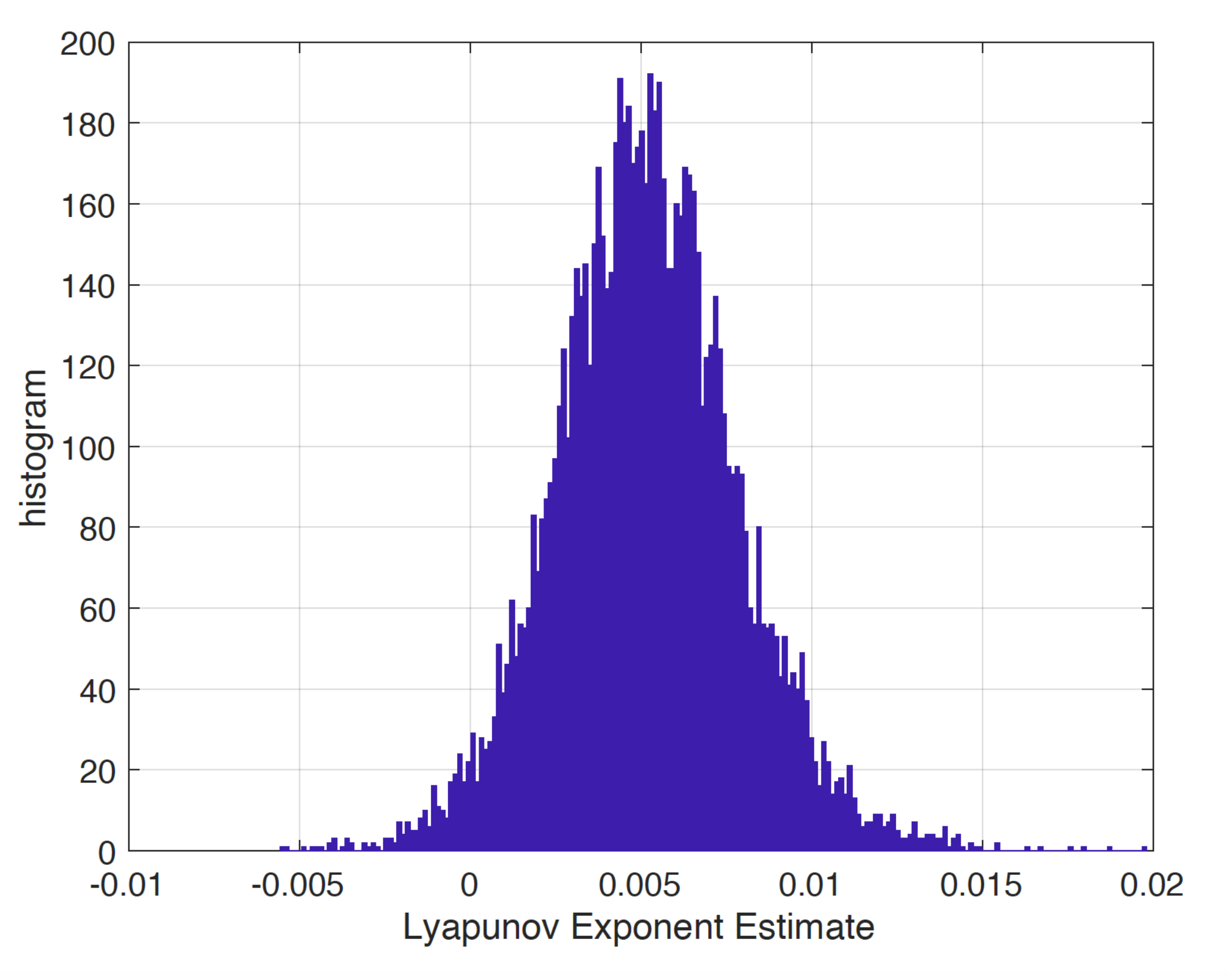}
  \caption{Histogram of Lyapunov exponents.}\label{fig:histogram}
\end{figure}

The histogram of Lyapunov exponents obtained from the 8733 images and the algorithm in Procedure \ref{alg:Lya} is given in Fig. \ref{fig:histogram}. We observe that the distribution has a bell shape and is concentrated around the value of 0.005. Some are negative, which means that the perturbation diminishes. We notice that the Lyapunov exponents at very few points are close to 0.2. The images of the highest 4 Lyapunov exponents are given in Fig. \ref{fig:top4}. It is not clear why these 4 images have large Lyapunov exponents. However, the large Lyapunov exponents imply that the DNN classifier is substantially more sensitive at these images, which may imply that they are closer to other classes of images and are thus more difficult to classify. This possible implication will be tested in our future research. 

\begin{figure}
  \centering
  \includegraphics[scale=0.36]{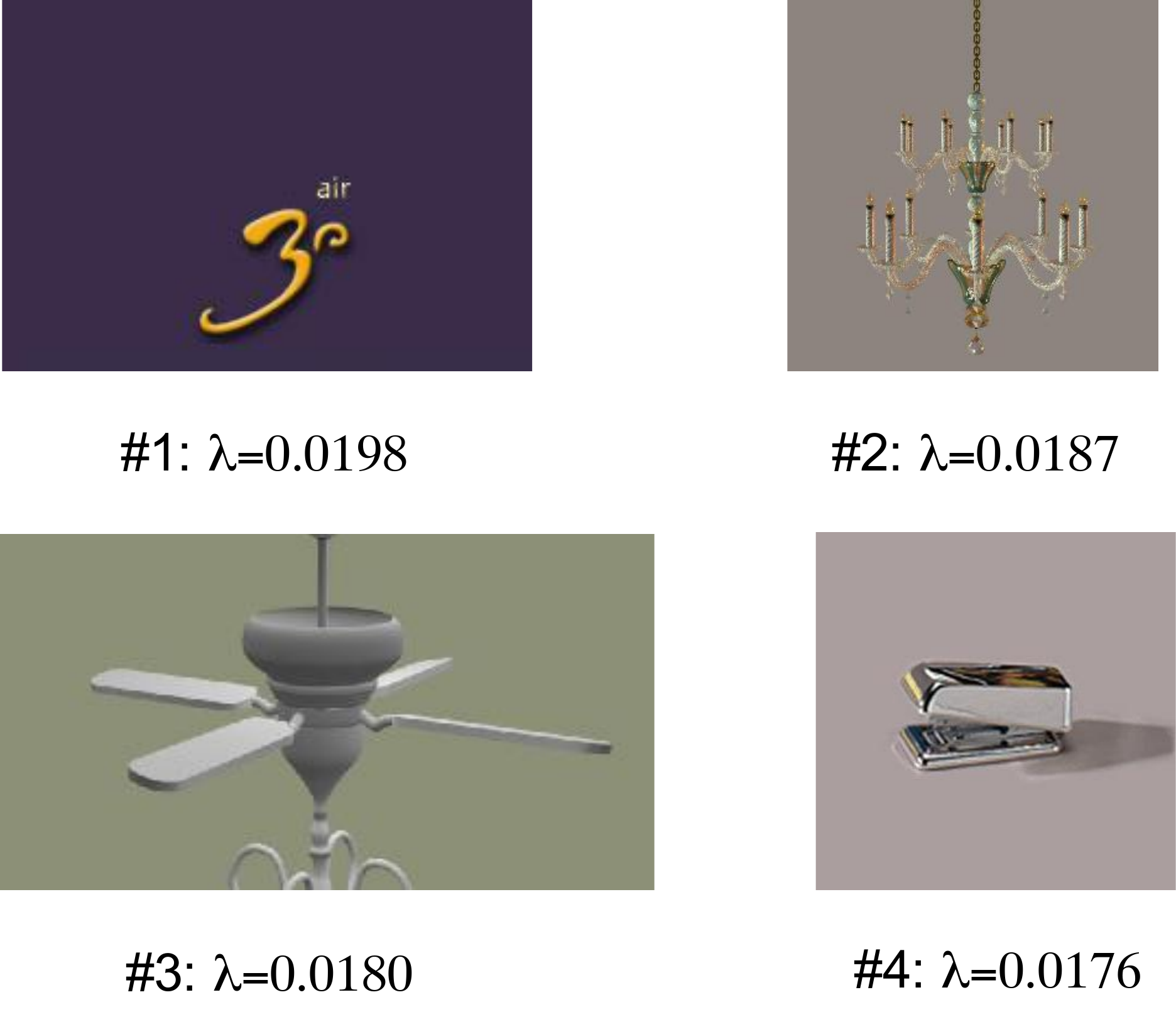}
  \caption{The four images having the greatest Lyapunov exponents.}\label{fig:top4}
\end{figure}

\subsubsection{Comparison of Different Classes}

We also plot the expectation and variance of the Lyapunov exponents for the 101 classes in the plane, as shown in Fig. \ref{fig:mean_var}. The four classes of images having the greatest variance of Lyapunov exponents are shown in Fig. \ref{fig:classes}. We notice that the variances of different image classes are around 6e-6, while the overall variance is 7e-6. Therefore, Lyapunov exponents are chaotically distributed among different classes of images. 

\begin{figure}
  \centering
  \includegraphics[scale=0.3]{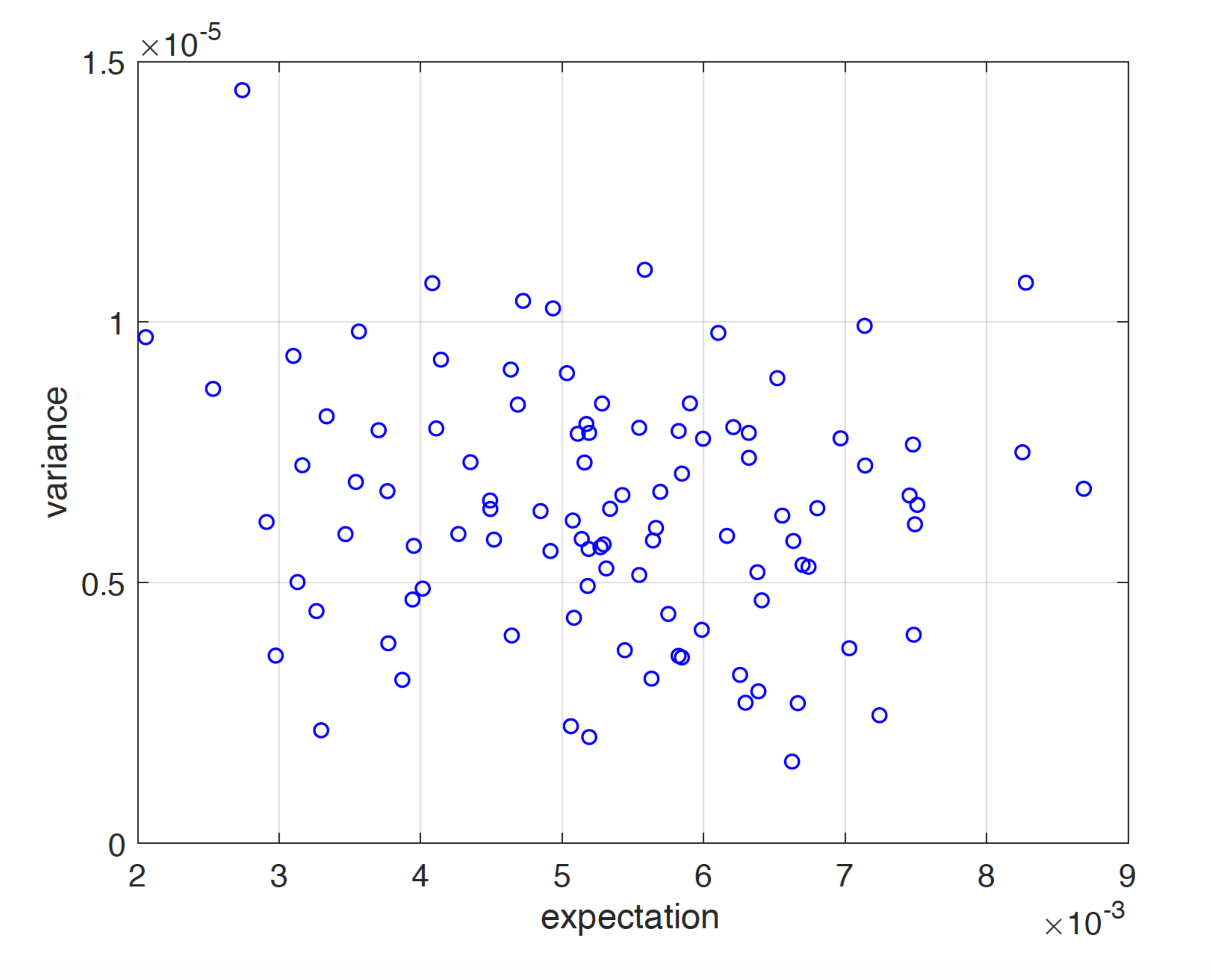}
  \caption{The distribution of mean and variance of different classes.}\label{fig:mean_var}
\end{figure}

\begin{figure}
  \centering
  \includegraphics[scale=0.36]{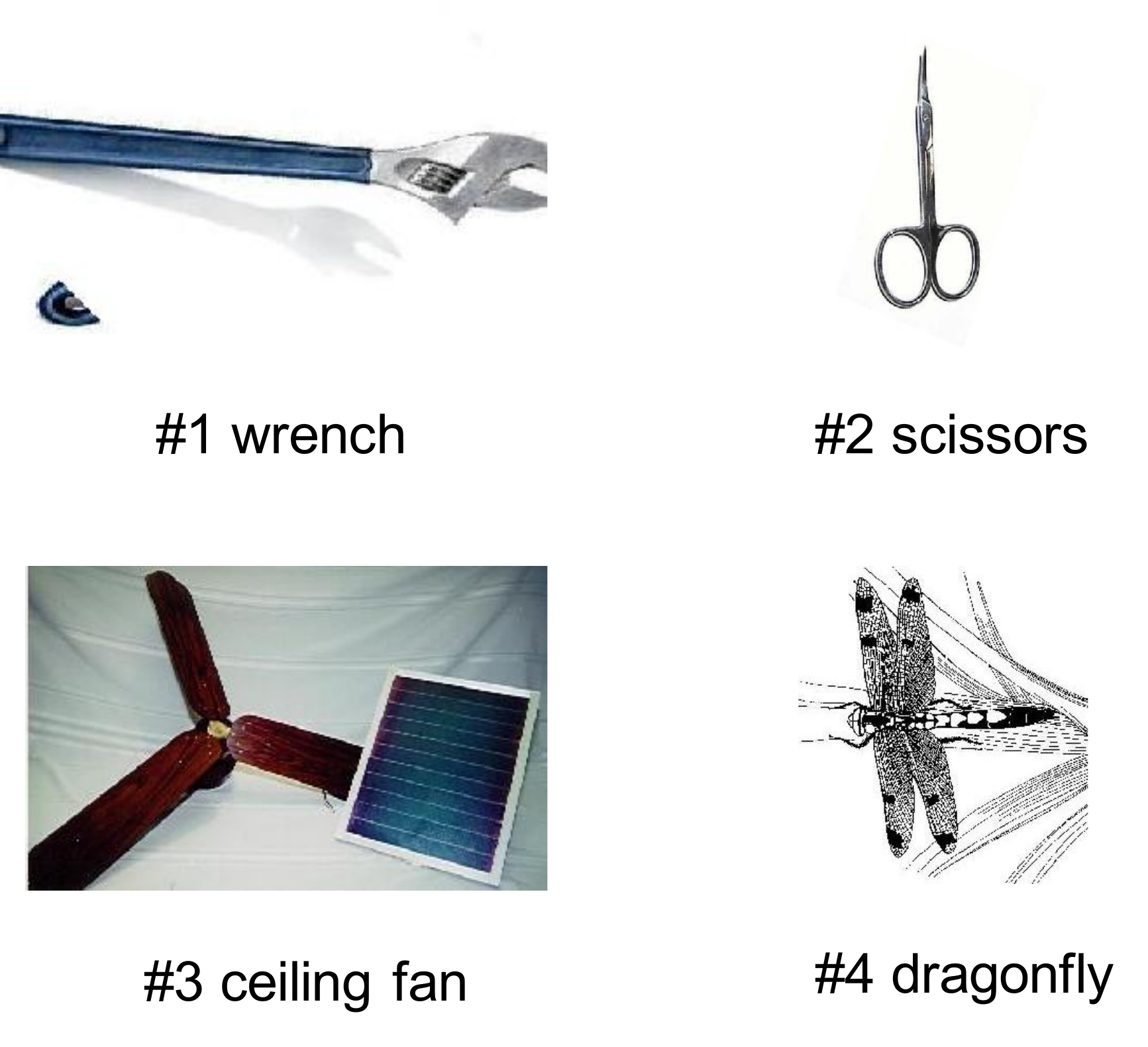}
  \caption{The top 4 classes of images having the highest expectations of Lyapunov exponents.}\label{fig:classes}
\end{figure}

\section{Conclusions}\label{sec:conclusion}
In this paper, we have proposed to consider DNN as a random dynamical system, and thus study the properties of DNNs from the viewpoint of dynamical systems. To characterize the complexity of the DNN dynamical system, we have analyzed the corresponding topological entropy and Lyapunov exponents. In particular, we have demonstrated that there may exist chaotic behavior due to the nonlinearity of DNN, which explains why DNN can form complicated classification surfaces. In our future study, we will calculate the topological entropy either analytically (e.g., in the high dimensional regime) or numerically (e.g., using the generating function of symbolic dynamics).

\appendices
 
\section{Random Dynamical Systems}\label{appdx:random}
Here we provide a rigorous definition for random dynamical systems \cite{Arnold1998}. A measurable random dynamical system is a 4-tuple $(\Omega,\mathcal{F},\mathbb{P},(\theta(t))_{t\in \mathbb{T}})$ over a measurable space $(X,\mathcal{B})$, where $X$ is the system state set and $\mathcal{B}$ is the $\sigma$-algebra:
\begin{itemize}
\item $\Omega$: the sample space representing the random events in the random dynamical system;
\item $\mathcal{F}$: the $\sigma$-algebra of $\Omega$;
\item $\mathbb{P}$: the distribution over $\Omega$;
\item $\mathbb{T}$: the set of time index;
\item $\theta$: the time shift acting on the random event $w$. 
\end{itemize}
The random dynamical system provides a mapping $\psi$:
\begin{eqnarray}
\psi:\mathbb{T}\times \Omega\times X\rightarrow X,
\end{eqnarray}
namely mapping the system state $x\in X$ at time $t$ to $\psi(t,w,x)$, given the realization $w\in \Omega$. 

The dynamical system $(\Omega,\mathcal{F},\mathbb{P},(\theta(t))_{t\in \mathbb{T}}))$ is called ergodic, if all sets in $\mathcal{I}$ have probabilities 0 or 1, where $\mathcal{I}\subset \mathcal{F}$ is the sub-$\sigma$-algebra formed by all invariant sets. Note that a set $A$ is called invariant with respect to $\theta$ if $\theta^{-1}A=A$. 

\section{Definitions of Topological Entropy}\label{appdx:topo}
There are three equivalent definitions for the topological entropy \cite{Downarowicz2011}. We will introduce them separately in the subsequent discussion. For the three definitions, we consider a topological dynamical system represented by a triple $(X,T,S)$. Here $X$ is a metric space, in which the metric is denoted by $d$. As will be seen, the metric is not necessary for one of the definitions. However, we still assume the structure of metric space, since it will be used in the first two definitions. We assume that $X$ is compact. When the space $X$ is not compact, an unbounded dynamical system such as $x(t)=2x(t-1)$ may have positive topological entropy but is still not complex. We define the transformation $T$ mapping from $X$ to $X$. Moreover, we assume that $T$ is continuous; i.e., $T^{-1}$ maps from open sets to open sets in the topological space $X$.

\subsection{Spanning Orbit Based Definition}
We first define the topological entropy based on spanning orbits, namely a set of typical orbits of the dynamics, such that for any orbit one can always find a sufficiently close one in this set. To this end, we need to define the distance between two orbits, namely the metric $d^n$ defined as
\begin{eqnarray}
d^n(x,y)=\max\left\{d\left(T^ix,T^iy\right),i=0,...,n-1\right\},
\end{eqnarray}
for orbits with initial points $x$ and $y$, repectively. 
Intuitively speaking, the distance between the two orbits is the maximum of the distances at all time slots.
We define the $n$-th order ball of point $x\in X$ as the ball around $x$ with radius $\epsilon$ and denote it by $B^n(x,\epsilon)$.
Fig. \ref{fig:Husheng_capacity_spnnaing} shows that a track beginning from $x_2$ is within $B^n(x_1,\epsilon)$. 

Now we define the `typical set' of orbits (or equivalently the initial points). We say a set $F$ of points in $X$ is $(n,\epsilon)$-spanning if it intersects every $(n,\epsilon)$-ball (i.e., intersecting $B^n(x,\epsilon)$ for every $x$). Intuitively speaking, this means that, for any point $x$ in $X$ (or equivalently the orbit beginning from $x$), one can always find a sufficiently close orbit (up to the error $\epsilon$), or the corresponding initial point, to approximate the orbit beginning from $x$. 

Then, we define
\begin{eqnarray}
r(n,\epsilon)=\min\left\{\# F: F\mbox{ is an }(n,\epsilon)-\mbox{spanning}\right\},
\end{eqnarray}
namely the cardinality of the minimum $(n,\epsilon)$-spanning. One can consider each point or the corresponding orbit in the set as a `codeword', in the context of source coding in communications. Then, $r(n,\epsilon)$ is the minimum number of codewords to represent all possible orbits, with errors up to $\epsilon$. 

 \begin{figure}
  \centering
  \includegraphics[scale=0.5]{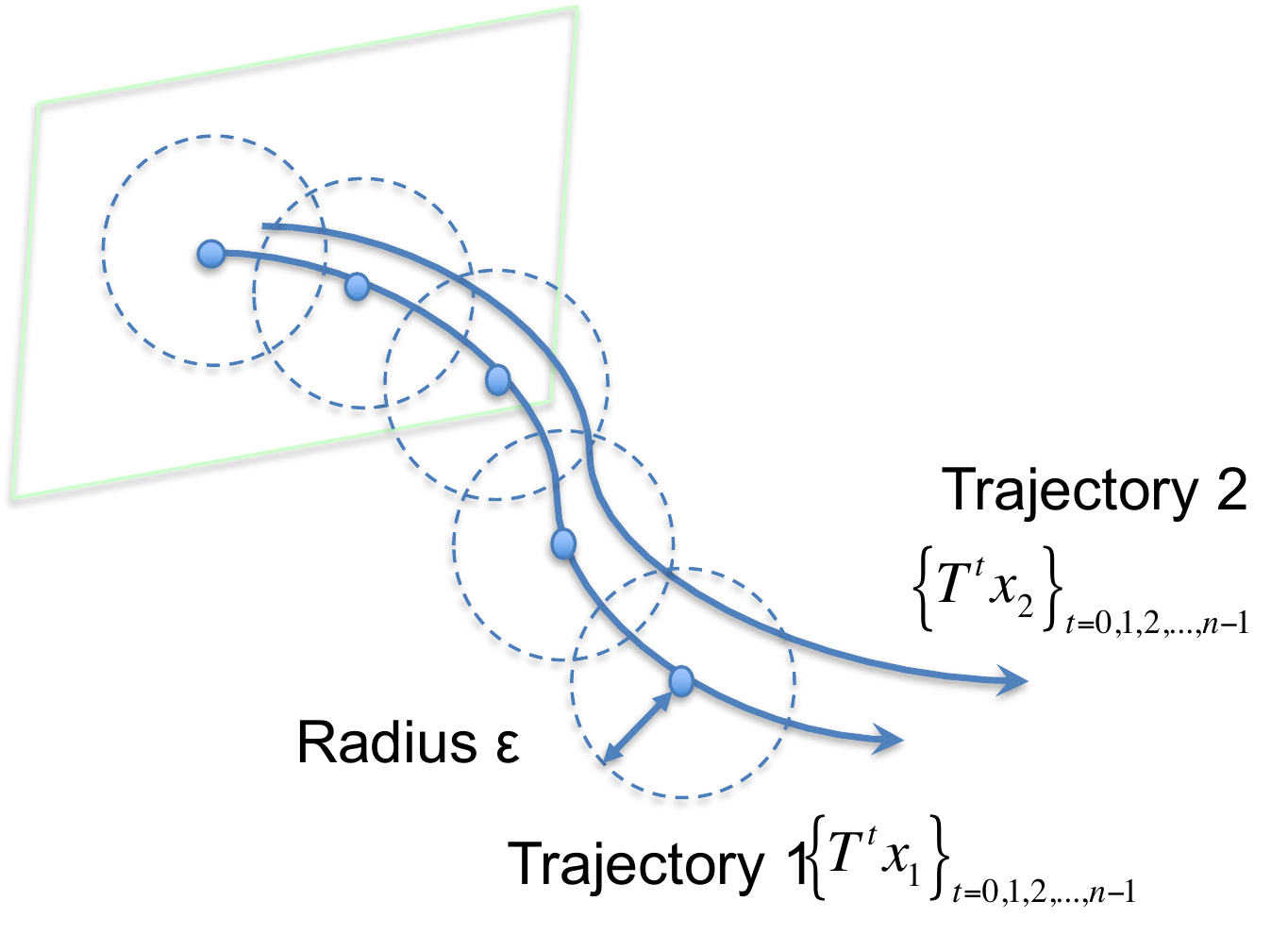}
  \caption{An illustration of two close tracks.}\label{fig:Husheng_capacity_spnnaing}
\end{figure}

The spanning orbits based topological entropy is then defined as follows, where the subscript $s$ means `spanning'.
\begin{definition}\label{def:entropy2}
The topological entropy of the dynamical system $(X,T,S)$ is defined as
\begin{eqnarray}
\left\{
\begin{array}{lll}
H_s(T,n,\epsilon)&=\log_2 r(n,\epsilon)\\
h_{s}(T,\epsilon)&=\lim_{n\rightarrow \infty}\frac{1}{n}H_s(T,\epsilon)\\
h_{s}(T)&=\lim_{\epsilon\rightarrow 0}h_s(T,\epsilon)
\end{array}
\right..
\end{eqnarray}
\end{definition}
Here, $H_s(T,n,\epsilon)$ is the minimum number of bits for the source coding of the orbits with error up to $\epsilon$. Finally the topological entropy $h_{s}(T)$ is the asymptotic coding rate when the time span tends to infinity while the tolerable error tends to 0.

\subsection{Separated Orbits based Definition}

 \begin{figure}
  \centering
  \includegraphics[scale=0.5]{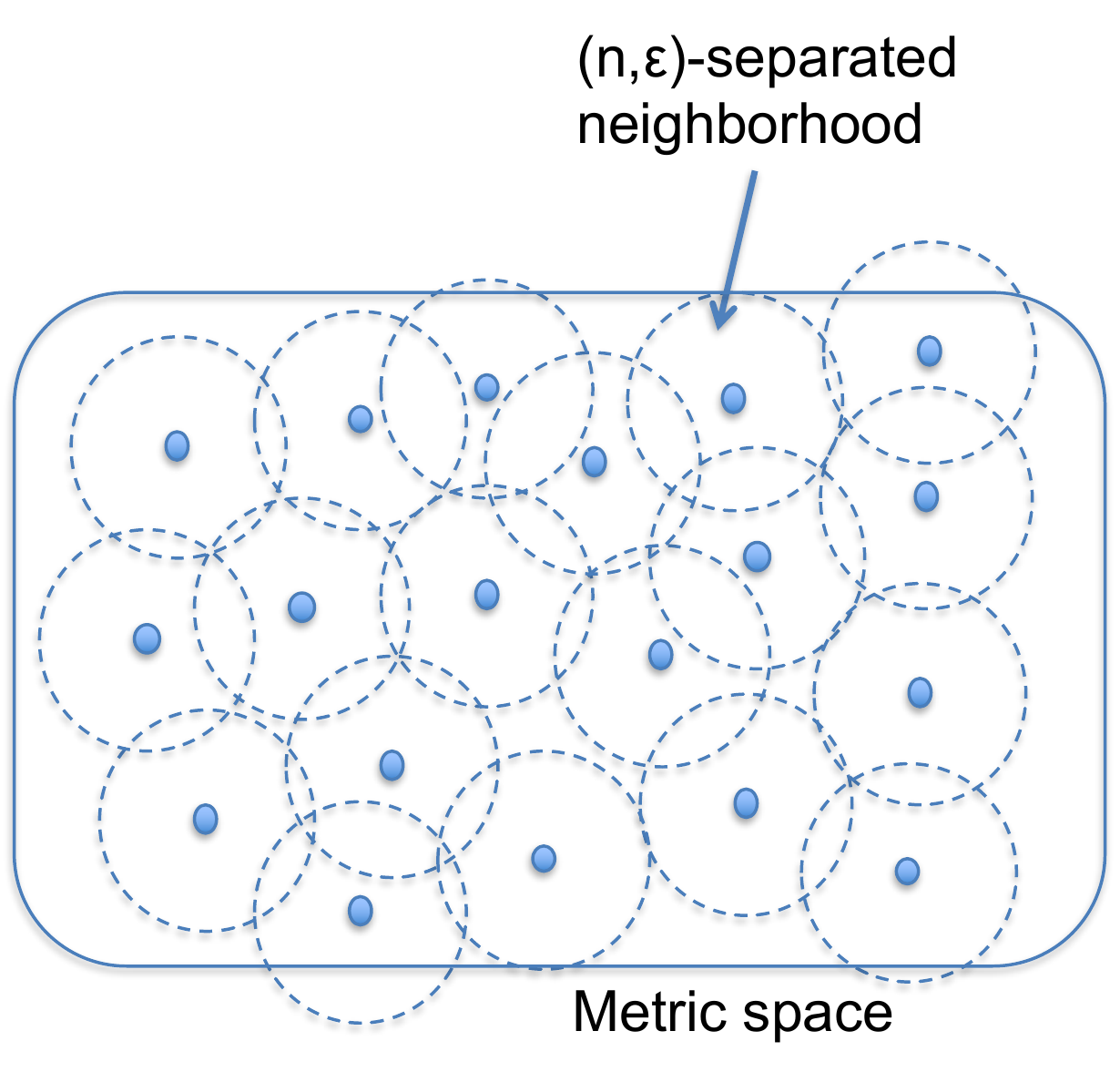}
  \caption{An illustration of an $(n,\epsilon)$-separated set. Note that the circle representing the neighborhood is just an illustration, not a real geometric image.}\label{fig:Husheng_capacity_separated}
\end{figure}

The topological entropy defined from separated orbits is similar to that defined from the spanning orbits. In both cases, we find a set of `typical' initial points or the corresponding orbits, in order to encode the possible orbits. In the spanning orbits case, we begin from a lot of typical orbits, which may be more than enough, and then find the minimum set of orbits. In a contrast, in the separated orbits based definition, we consider points with neighborhoods of given radius. Then, we squeeze as many such orbits (or initial points) as possible into the space. As will be seen later, both selections of typical orbits will yield the same topological entropy in the asymptotic sense. 

To this end, we say that a subset $F\subset X$ is $(n,\epsilon)$-separated, if $d^n(x_i,x_j)>\epsilon$ for all $x_i, x_j\in F$ and $x_i\neq x_j$. Intuitively speaking, the elements in $F$ are separated with distances of at least $\epsilon$. Since $X$ is assumed to be compact, all the possible $(n,\epsilon)$-separated sets have finite elements, as illustrated in Fig. \ref{fig:Husheng_capacity_separated}. We denote the maximal cardinality of all $(n,\epsilon)$-separated sets by
\begin{eqnarray}
s(n,\epsilon)=\max\{|F|: F \mbox{ is $(n,\epsilon)$-separated}\},
\end{eqnarray}
which means the maximum number of points that one can `squeeze' into the space $X$, given the required lower bound of distance $\epsilon$.

Then, we define the metric based topological entropy as
\begin{definition}\label{def:entropy1}
We define the topological entropy as
\begin{eqnarray}
\left\{
\begin{array}{lll}
H_m(n,\epsilon)&=\log_2 s(n,\epsilon)\\
h_m(T,\epsilon)&=\limsup_{n\rightarrow\infty} \frac{1}{n}H_m(n,\epsilon)\\
h_m(T)&=\lim_{\epsilon\rightarrow 0}h_m(T,\epsilon),
\end{array}
\right..
\end{eqnarray}
\end{definition}
Here $H_m(n,\epsilon)$ is the number of bits needed to describe the maximal $(n,\epsilon)$-separated set. Then, the topological entropy $h_m(T)$ is the coding rate (or equivalently, the exponent of increasing orbits) in the asymptotic case (the time span tends to infinity and the tolerable error tends to zero).

\subsubsection{Cover Based Definition} As the third type of definition of topological entropy, the cover based definition of topological entropy does not need the definition of metric. Hence it is still valid in spaces without a metric structure. The requirement is the topological structure, namely the definition of open sets.
A cover of $X$ is defined as a family of open sets that cover $X$. Consider a cover $\mathcal{U}$. 
Due to the assumption of the compactness of $X$, we can always find a finite subset of open sets in $\mathcal{U}$ to cover $X$. For a transformation $T$ and any integer, $T^{-n}(\mathcal{U})$ is also a cover of $X$. A cover $\mathcal{V}$ is a subcover of cover $\mathcal{U}$ if $\mathcal{V}\subset\mathcal{U}$, namely the open sets in $\mathcal{V}$ are also in $\mathcal{U}$ while $\mathcal{V}$ also covers $X$. The minimum cardinality of a subcover of $\mathcal{U}$ is denoted by $N(\mathcal{U})$.

We define the joint of two covers $\mathcal{U}$ and $\mathcal{V}$ as 
\begin{eqnarray}
\mathcal{U}\vee \mathcal{V}=\left\{U\cap V: U\in \mathcal{U},V\in \mathcal{V}\right\}.
\end{eqnarray} 
We further define
\begin{eqnarray}
\mathcal{U}^n=\vee_{n=0,...,n-1}T^{-n}(\mathcal{U}),
\end{eqnarray} 
namely the joint set of the covers obtained from tracing back the open sets in $\mathcal{U}$ along the transformation $T$.

Based on the above definitions on cover sets, we can now define the topological entropy:
\begin{definition}\label{def:entropy3}
The topological entropy of the dynamical system $(X,T,S)$ is defined as 
\begin{eqnarray}
\left\{
\begin{array}{lll}
H_c(\mathcal{U})&=\log_2 N(\mathcal{U})\\
h_c(T,\mathcal{U})&=\lim_{n}\frac{1}{n}H_c\left(\mathcal{U}^n)\right)\\
h_c(T)&=\lim_{\mathcal{U}}\uparrow h_c(T,\mathcal{U})
\end{array}
\right..
\end{eqnarray}
\end{definition}
Here $H_c(\mathcal{U})$ is the number of bits needed to describe the open sets that cover $X$. $h_c(T,\mathcal{U})$ is the coding rate for describing the joint sets generated by tracing back the open sets in $\mathcal{U}$. 

\subsection{Equivalence}
The above three definitions of topological entropy look different, focusing on different aspects of the dynamical system. In particular, the cover based definition is significantly different from the first two. A surprising result is that they are actually equivalent in metric spaces:
\begin{theorem}[\cite{Downarowicz2011}]
In metric spaces, we have 
\begin{eqnarray}
h_s(T)=h_m(T)=h_c(T).
\end{eqnarray}
\end{theorem}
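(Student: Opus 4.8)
The plan is to establish the chain $h_s(T)=h_m(T)\leq h_c(T)\leq h_s(T)$, which forces all three quantities to coincide. The argument splits naturally into two parts: first comparing the spanning count $r(n,\epsilon)$ with the separated count $s(n,\epsilon)$, and then bridging either of these to the cover count $N(\mathcal{U}^n)$ through the Lebesgue number and the diameter of the cover. Throughout, compactness of $X$ is the indispensable ingredient, since it guarantees that all the counting quantities are finite and that every open cover admits a positive Lebesgue number.

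For the first part I would prove the sandwich
\begin{eqnarray}
r(n,\epsilon)\leq s(n,\epsilon)\leq r(n,\epsilon/2).
\end{eqnarray}
The left inequality follows by observing that a maximal $(n,\epsilon)$-separated set $F$ is automatically $(n,\epsilon)$-spanning: maximality means no point of $X$ can be added without violating separation, so every $x\in X$ lies within $d^n$-distance $\epsilon$ of some element of $F$. The right inequality follows from an injectivity argument, namely assigning to each point of an $(n,\epsilon)$-separated set a nearest point of a minimal $(n,\epsilon/2)$-spanning set; this map is injective because two points at $d^n$-distance exceeding $\epsilon$ cannot share a common $\epsilon/2$-approximant, by the triangle inequality for $d^n$. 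Taking $\log_2$, dividing by $n$, sending $n\to\infty$ and then $\epsilon\to 0$ (using monotonicity of $h_s(T,\epsilon)$ and $h_m(T,\epsilon)$ in $\epsilon$) collapses the sandwich to $h_s(T)=h_m(T)$.

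For the second part, fix an open cover $\mathcal{U}$, let $\delta>0$ be its Lebesgue number, and let $\eta$ bound the diameters of its members. The key geometric facts are: (i) when $\epsilon<\delta/2$, every ball $B^n(x,\epsilon)$ is contained in a single element of $\mathcal{U}^n$, because each of the $n$ images $T^i\left(B^n(x,\epsilon)\right)$ sits inside an $\epsilon$-ball of diameter below the Lebesgue number and hence inside some $U_i\in\mathcal{U}$; and (ii) every element of $\mathcal{U}^n$ has $d^n$-diameter below $\eta$, since points sharing the intersection $\bigcap_{i=0}^{n-1}T^{-i}(U_i)$ have all their first $n$ iterates within a common member of $\mathcal{U}$. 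Fact (i) lets me cover $X$ by at most $r(n,\epsilon)$ elements of $\mathcal{U}^n$, giving $N(\mathcal{U}^n)\leq r(n,\epsilon)$ and hence $h_c(T,\mathcal{U})\leq h_s(T)$. Fact (ii) forces any $(n,\eta)$-separated set to meet each element of a minimal subcover of $\mathcal{U}^n$ at most once, giving $s(n,\eta)\leq N(\mathcal{U}^n)$ and hence $h_m(T)\leq h_c(T,\mathcal{U})\leq h_c(T)$. Combining, $h_m(T)\leq h_c(T)\leq h_s(T)=h_m(T)$, which closes the loop.

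The main obstacle is not any single inequality but the careful bookkeeping of the iterated limits together with the role of compactness in linking the two metric notions ($r$ and $s$) to the purely topological one ($N$). In particular, the Lebesgue-number argument in fact (i) and the existence of finite subcovers both fail without compactness, which is precisely why the excerpt emphasizes that an unbounded system such as $x(t)=2x(t-1)$ can exhibit spurious positive entropy. Ensuring that the $\epsilon$-limits and the monotone refinement limit $\lim_{\mathcal{U}}\uparrow$ over covers are taken consistently, and verifying that these monotone limits actually exist, is where the proof demands the most attention.
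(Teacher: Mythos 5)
Your proposal is correct in outline, but be aware that the paper itself contains no proof of this theorem: it states the equivalence and defers entirely to Chapter~6 of the cited reference (Downarowicz), remarking only that the proof is "highly nontrivial." What you have written is the classical textbook argument for that equivalence: the sandwich $r(n,\epsilon)\leq s(n,\epsilon)\leq r(n,\epsilon/2)$ (maximal separated sets are spanning; the triangle-inequality injection bounds separated sets by spanning sets), which collapses to $h_s(T)=h_m(T)$ in the limit, together with the Lebesgue-number argument $N(\mathcal{U}^n)\leq r(n,\epsilon)$ and the mesh-diameter argument $s(n,\eta)\leq N(\mathcal{U}^n)$ to bridge to the cover definition. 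Both halves are sound, so your sketch genuinely supplies something the paper omits. Two points of bookkeeping would need to be executed, not merely flagged, in a full write-up. First, for a \emph{fixed} cover $\mathcal{U}$ with member diameters at most $\eta$, fact (ii) yields only $h_m(T,\eta)\leq h_c(T,\mathcal{U})$, not $h_m(T)\leq h_c(T,\mathcal{U})$; to conclude $h_m(T)\leq h_c(T)$ you must run the argument over a sequence of covers whose mesh tends to zero (such covers exist by compactness) and invoke the monotone refinement limit in the definition of $h_c$. Second, the paper's Definition~\ref{def:entropy2} uses a plain limit in $n$ while Definition~\ref{def:entropy1} uses a limsup; a rigorous version should either work with limsups throughout or justify existence of the limits (for the cover entropy this follows from subadditivity of $n\mapsto\log_2 N(\mathcal{U}^n)$ and Fekete's lemma). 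These gaps are routine to fill, so the proposal stands as a valid sketch of the standard proof that the paper chose to cite rather than reproduce.
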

The proof is highly nontrivial. The details can be found in Chapter 6 of \cite{Downarowicz2011}. Due to their equivalence, we use only the notation $h_s(T)$ in the subsequent discussion.

\section{Proof of Theorem \ref{thm:local_Lya} (for Hyperbolic Tangent Case)}\label{appdx:DNN_Lya} 
Here we consider the case of hyperbolic tangent function for the DNN. Before we provide the rigorous proof, we provide the intuition for the possibility of the chaotic behavior in the DNN dynamics. The maximum Lyapunov exponent $\lambda_1$ is determined by the expansion rate of the tangent vector $\mathbf{u}_t=\mathbf{J}_t(\mathbf{x}_0)\mathbf{u}_0$. Only when $\|\mathbf{J}_t(\mathbf{x}_0)\mathbf{u}_0\|$ increases exponentially, is $\lambda_1$ positive. We notice that the matrix $\mathbf{S}_i(\mathbf{x})$ always shrinks $\|\mathbf{J}_t(\mathbf{x}_0)\mathbf{u}_0\|$, since all the diagonal elements are positive but less than 1. Hence, to make $\lambda_1$ positive, the determinant of $\mathbf{W}_i$ should be larger than 1, namely enlarging the vector norm, in an average sense. However, $\mathbf{W}_i$ operates on both $\mathbf{u}_t$ and $\mathbf{x}_t$. Hence, if the determinant of $\mathbf{W}_i$ is too large, it will make the elements in $\mathbf{x}_t$ large, thus making the elements $\mathbf{S}_i(\mathbf{x})$ small and shrinking $\|\mathbf{u}_t\|$. Therefore, there is a tradeoff for the determinant of $\mathbf{W}_i$, to balance the enlarging effect of $\mathbf{W}_i$ and the shrinking effect of $\mathbf{S}_i$. 

For scalar dynamics, namely $d=1$, we carry out numerical calculations to evaluate the Lyapunov exponents. We first tested the scalar dynamical system $x_{t+1}=ax_t+b$. In Fig. \ref{fig:scalar1}, we show the Lyapunov exponent with respect to various values of $a$ and $b$, by fixing the initial value $x_0=0.2$. We observe that the Lyapunov exponent $\lambda_1$ is negative. The value of $b$ has much less impact on the Lyapunov exponent. In Fig. \ref{fig:scalar2}, we show the Lyapunov exponent with respect to various values of $x_0$ and $b$, by fixing $a=3.12$. We observe that, for all the nonzero values of $x_0\in [-1,1]$ and $b$, the Lyapunov exponents are negative. Note that the Lyapunov exponent is different at different initial values, which implies that the scalar dynamical system is not ergodic.

\begin{figure}
  \centering
  \includegraphics[scale=0.4]{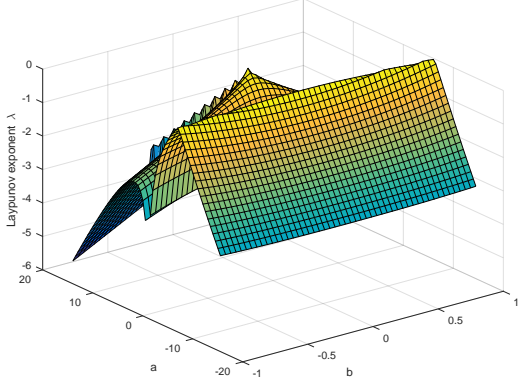}
  \caption{Lyapunov exponent for scalar deterministic dynamics with various $a$ and $b$ given $x(0)=0.2$.}\label{fig:scalar1}
\end{figure}

\begin{figure}
  \centering
  \includegraphics[scale=0.4]{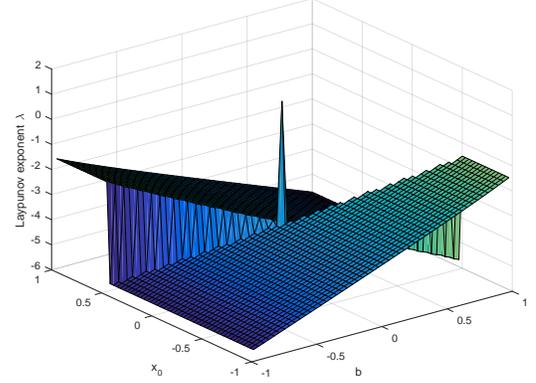}
  \caption{Lyapunov exponent for scalar deterministic dynamics with various $x_0$ and $b$ given $a=3.12$.}\label{fig:scalar2}
\end{figure}

The reason for the lack of chaotic behavior in the scalar dynamics of DNN is because $\mathbf{u}_t$ and $\mathbf{x}_t$ are aligned in the same direction in $\mathbb{R}$. Therefore, the enlarging effect of $\mathbf{W}_i$ will backfire at $\mathbf{u}_t$ via $\mathbf{S}_t$ directly. However, in higher dimensional spaces, it is possible to tune the parameters $\mathbf{W}_i$ such that $\|\mathbf{x}_t\|$ is prevented from being too large while $\|\mathbf{u}_t\|$ will be enlarged exponentially, since $\mathbf{x}_t$ and $\mathbf{u}_t$ may not be aligned in the same subspace. 

The following proof of Theorem \ref{thm:local_Lya} follows the above idea of decoupling the system state $\mathbf{x}_t$ and tangent vector $\mathbf{u}_t$:

 \begin{proof}
 There are two approaches to generate chaos in the DNN dynamics by designing the DNN parameters carefully. We assume $d=2$, namely the dynamics of DNN is in $\mathbb{R}^2$. For higher dimensional dynamical systems, we can project the system state to 2-dimensional subspaces, thus not affecting the conclusion. 
 
 \begin{figure}
  \centering
  \includegraphics[scale=0.4]{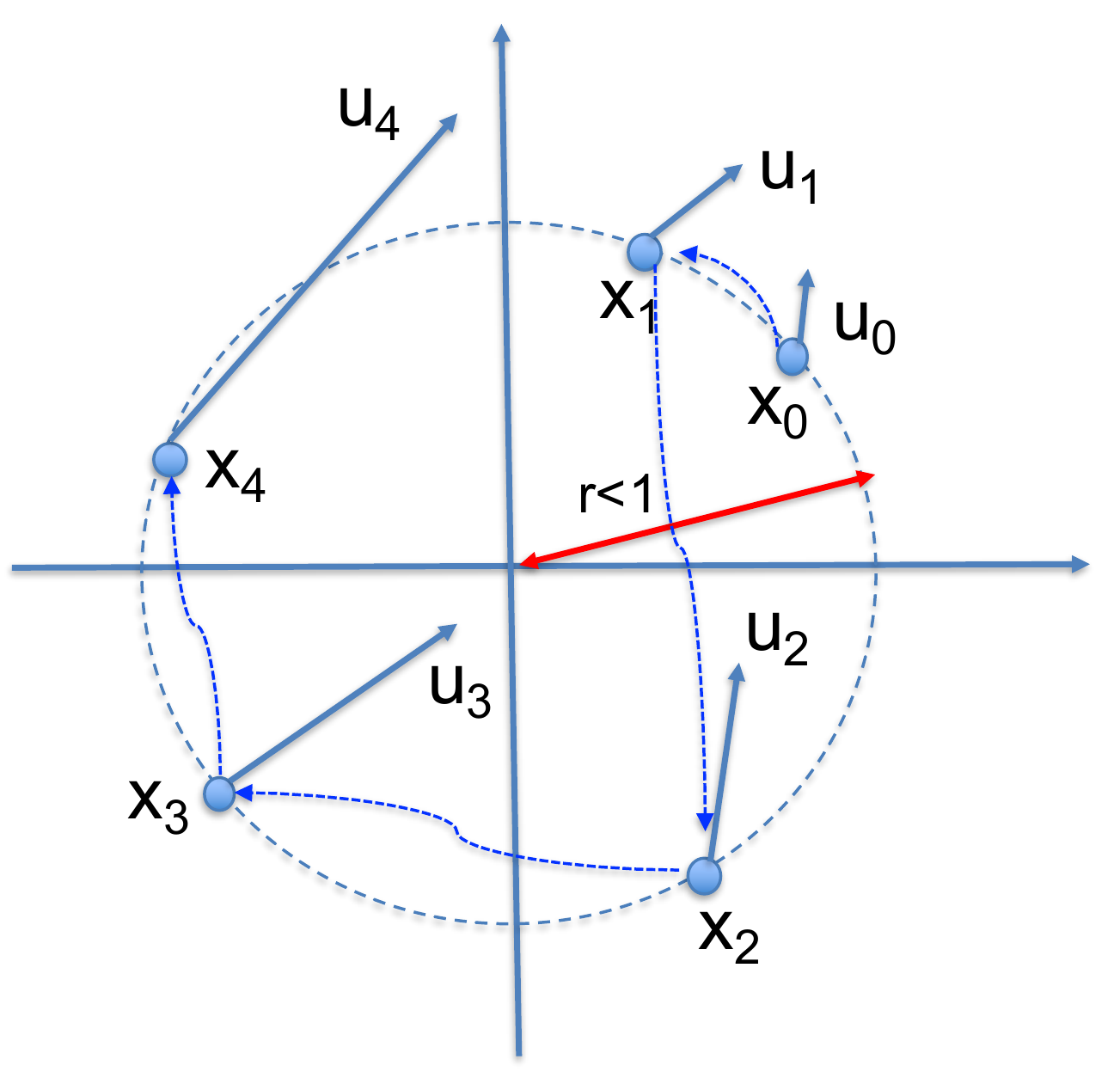}
  \caption{Design of linear mappings for chaos in DNN.}\label{fig:chaotic}
\end{figure}

\textbf{Approach 1}: In this first approach, we force the system state $\mathbf{x}_t$ to have a constant norm, as illustrated in Fig. \ref{fig:chaotic}. Denote by $\mathbf{u}_0\neq 0$ the initial direction and by $\mathbf{x}_0=(x_{01},x_{02})$ the initial point of the successive mapping. We assume $r=\|\mathbf{x}_0\|< 1$. We further assume that $\mathbf{u}_0$ and $\mathbf{x}_0$ are linearly independent. We also fix a positive number $A>1$.

For the $t$-th layer, the output is $\mathbf{x}_t=\tanh(\tilde{\mathbf{x}}_t)$, where $\tilde{\mathbf{x}}_t=\mathbf{W}_t\mathbf{x}_{t-1}$. The mapping maps the derivative $\mathbf{u}_{t-1}$ to $\mathbf{u}_t$ in (\ref{eq:dyn_u}). We assume that $\mathbf{u}_{t-1}$ and $\mathbf{x}_{t-1}$ are linearly independent.
We set the linear mapping $\mathbf{W}_t$ to be a matrix with real eigenvector $\mathbf{u}_{t-1}$ corresponding to eigenvalue $A$. We further assume that $\mathbf{W}_t$ satisfies
\begin{eqnarray}
\mathbf{W}_t\mathbf{x}_{t-1}=\arg\tanh(\mathbf{x}_t),
\end{eqnarray}
such that $|\mathbf{x}_t|=r<1$. Due to the constraints, the matrix $\mathbf{W}_t$ can be obtained from the following equation:
\begin{eqnarray}\label{eq:design}
\left\{
\begin{array}{ll}
&\mathbf{W}_t\mathbf{u}_{t-1}=A\mathbf{u}_{t-1}\\
&\mathbf{W}_t\mathbf{x}_{t-1}=\tilde{\mathbf{x}}_t
\end{array}
\right.,
\end{eqnarray}
which can be rewritten as
\begin{eqnarray}
\mathbf{W}_t\mathbf{B}_t=\mathbf{C}_t,
\end{eqnarray}
where $\mathbf{B}_t=(\mathbf{u}_{t-1},\mathbf{x}_{t-1})$ and $\mathbf{C}_t=(A\mathbf{u}_{t-1},\tilde{\mathbf{x}}_{t})$. Due to the assumption that $\mathbf{u}_{t-1}$ and $\mathbf{x}_{t-1}$ are linearly independent, the matrix $\mathbf{B}_t$ is of full rank and is thus invertible. Therefore, we can obtain $\mathbf{W}_t$ by 
\begin{eqnarray}
\mathbf{W}_t=\mathbf{C}_t\mathbf{B}_t^{-1}.
\end{eqnarray}

Then, the derivative after the $t$-th round of mapping is given by
\begin{eqnarray}\label{eq:ut}
\mathbf{u}_t&=&\begin{pmatrix}
1-x_{t1}^2& 0\\
0 & 1-x_{t2}^2
\end{pmatrix}
\mathbf{W}_t\mathbf{u}_{t-1}\nonumber\\
&=&A\begin{pmatrix}
1-x_{t1}^2& 0\\
0 & 1-x_{t2}^2
\end{pmatrix}
\mathbf{u}_{t-1}
\end{eqnarray}

Now, we need to guarantee that $\mathbf{u}_t$ and $\mathbf{x}_t$ are linearly independent for the purpose of induction. Notice that the only constraint on $\mathbf{x}_t$ is $\|\mathbf{x}_t\|=r$. Suppose that for any point $\mathbf{x}_t$ on the circle $\|\mathbf{x}_t\|=r$, we have
\begin{eqnarray}\label{eq:rank0}
u_{t1}x_{t_2}=u_{t2}x_{t_1},
\end{eqnarray}
namely $\det(\mathbf{B}_t)=0$. Then, substituting (\ref{eq:ut}) into (\ref{eq:rank0}), we have
\begin{eqnarray}\label{eq:three_order}
u_{t-1,1}x_{t_2}(1-x_{t1}^2)=u_{t-1,2}x_{t_1}(1-x_{t2}^2).
\end{eqnarray}

Unless both $u_{t-1,1}=u_{t-1,2}=0$, which is impossible due to the assumption $\mathbf{u}_0\neq 0$ and the induction in (\ref{eq:ut}), the equation in (\ref{eq:three_order}) defines a curve of degree 3. According to the Bezout theorem \cite{Shafarevich2013}, there are at most six real intersection points for the curve in (\ref{eq:three_order}) and the circle $\|\mathbf{x}_t\|=r$. Therefore, one can always find a point on $\|\mathbf{x}_t\|=r$ such that (\ref{eq:three_order}) does not hold, namely $\mathbf{u}_t$ and $\mathbf{x}_t$ are linearly independent.

Using induction on (\ref{eq:ut}), we have
\begin{eqnarray}\label{eq:evol_u}
\mathbf{u}_t=A^{t}\prod_{l=1}^t\begin{pmatrix}
1-x_{l1}^2& 0\\
0 & 1-x_{l2}^2
\end{pmatrix}\mathbf{u}_0.
\end{eqnarray}
Hence, we have
\begin{eqnarray}\label{eq:explosion}
\frac{\|\mathbf{u}_t\|}{\|\mathbf{u}_0\|}\geq \left(A(1-r^2)\right)^t.
\end{eqnarray}

When we choose a sufficiently large $A$ such that the right hand side of (\ref{eq:explosion}) is greater than 1, $\frac{|\mathbf{u}_t|}{|\mathbf{u}_0|}$ increases exponentially, thus making the Laypunov exponent positive. Moreover, the Laypunov exponent can be made arbitrarily large due to the arbitrary selection of $A$.
 
\textbf{Approach 2}: In the second approach, the DNN dynamics force the system state to rational points in $\mathbb{R}^2$. To this end, we consider a compact set $\Omega\in [-1,1]^2$ with nonzero Lebeasgue measure. We assume
\begin{eqnarray}
\max_{\mathbf{x}\in \Omega}\|\mathbf{x}\|_1\leq r<1,
\end{eqnarray}
where the maximum is achieved due to the compactness of $\Omega$.

We can enumerate all the rational points (i.e., the coordinates are rational numbers) in $\Omega$, except for the points on the lines $x_1=x_2$ and $x_1=-x_2$, in the sequence $\mathbf{X}=\left\{\mathbf{x}_i\right\}_{i=1,2,3,...}$. It is easy to verify that the set $\mathbf{X}$ is dense in $\Omega$. We also choose the initial vector $\mathbf{u}_0$ such that
\begin{eqnarray}
\frac{u_{01}}{u_{02}}\notin \mathbb{Q},
\end{eqnarray}
namely the ratio of the two coordinates in $\mathbf{u}_0$ is an irrational number.

Then, we design the linear mappings in the same way as in Approach 1, except that the design of $\mathbf{W}_t$ in (\ref{eq:design}) is changed to
\begin{eqnarray}\label{eq:design2}
\left\{
\begin{array}{ll}
&\mathbf{W}_t\mathbf{u}_{t-1}=A\mathbf{u}_{t-1}\\
&\mathbf{W}_t\mathbf{x}_{t-1}=\arg\tanh\left(\mathbf{x}_t\right)
\end{array}
\right.,
\end{eqnarray}
which is $\mathbf{W}_t\mathbf{B}_t=\mathbf{C}_t$. Here the major goal is to assure the reversibility of the matrix $\mathbf{B}_t$, namely $\mathbf{u}_t$ and $\mathbf{x}_t$ are linearly independent. To this end, we assume that $\mathbf{u}_t$ and $\mathbf{x}_t$ are linearly dependent, which implies
\begin{eqnarray}\label{eq:linear}
\frac{u_{t1}}{u_{t2}}=\frac{x_{t1}}{x_{t2}}.
\end{eqnarray}

According to (\ref{eq:evol_u}), we have
\begin{eqnarray}
\frac{u_{t1}}{u_{t2}}=\frac{u_{01}}{u_{02}}\prod_{l=1}^t\left(\frac{1-x_{l1}^2}{1-x_{l2}^2}\right),
\end{eqnarray}
which is irrational since $\frac{u_{01}}{u_{02}}$ is irrational and $\prod_{l=1}^t\left(\frac{1-x_{l1}^2}{1-x_{l2}^2}\right)$ is rational (recall that both $x_{l1}$ and $x_{l2}$ are rational). However, $\frac{x_{t1}}{x_{t2}}$ is rational. Hence, (\ref{eq:linear}) does not hold, since the left hand side is irrational while the right hand side is rational. Therefore, $\mathbf{u}_t$ and $\mathbf{x}_t$ are linearly independent, which assures the existence of solution in (\ref{eq:design2}). This concludes the proof.

 \end{proof}
 
 \section{Proof of Theorem \ref{thm:local_Lya2} (for the ReLU Case)}\label{appdx:DNN_Lya2}
 
 \begin{proof}
 Similarly to the hyperbolic tangent case, we consider only the 2-dimensional case. Since the angle is given by $\theta(t)=\arctan \left(\frac{x_2(t)}{x_1(t)}\right)$, we have
 \begin{small} 
\begin{eqnarray}
\frac{\partial \rho(\mathbf{x}(T))}{x_1(0)}&=&\frac{1}{1+\left(\frac{x_2(T)}{x_1(T)}\right)^2}\nonumber\\
&\times&\left(-\frac{x_2(T)}{x^2_1(T)}\frac{\partial x_1(T)}{\partial x_1(0)}+\frac{1}{x_1(T)}\frac{\partial x_2(T)}{\partial x_1(0)}\right),
\end{eqnarray} 
\end{small}
where according to the chain's rule the partial derivative is the sum of the chains of partial derivatives along all possible paths from $x_{1}(0)$ to $x_1(T)$ or $x_2(T)$, namely 
\begin{eqnarray}
\frac{\partial x_1(T)}{\partial x_1(0)}=\sum_{i_1,...,i_{T-1}=1}^2\frac{\partial x_1(T)}{\partial x_{i_{T-1}}(T-1)}\times \cdots\times\frac{\partial x_{i_1}(1)}{\partial x_1(0)},
\end{eqnarray}
and
\begin{eqnarray}
\frac{\partial x_2(T)}{\partial x_1(0)}=\sum_{i_1,...,i_{T-1}=1}^2\frac{\partial x_2(T)}{\partial x_{i_{T-1}}(T-1)}\times \cdots\times\frac{\partial x_{i_1}(1)}{\partial x_1(0)},
\end{eqnarray}
and
\begin{eqnarray}
&&\frac{\partial x_i(t)}{\partial x_j(t-1)}\nonumber\\
&=&\left\{
\begin{array}{ll}
W_{ij}^{t-1}, \mbox{ if }W_{i1}^{t-1}x_1(t-1)+W_{i2}^{t-1}x_2(t-1)>0\\
0,\mbox{ otherwise}
\end{array}
\right..
\end{eqnarray}

Now, we set the parameters of DNN in the following manner:
\begin{itemize}
\item $W_{i1}^{t-1}x_1(t-1)+W_{i2}^{t-1}x_2(t-1)<0$ for $t=1,t_0,2t_0,...$ (or equivalently $x_2(t)=0$), where $t_0$ is a positive integer.
\item $W_{21}^t=0$ and $|W_{22}|<C$.
\item $W_{11}^t=a>1$ and $b_1=(1-a)x_0$.
\end{itemize}

Given the above parameters, we have
\begin{eqnarray}
\frac{\partial x_2(T)}{\partial x_1(0)}=0,\qquad \forall T>1,
\end{eqnarray}
and 
\begin{eqnarray}
\frac{\partial x_1(T)}{\partial x_1(0)}=a^T,
\end{eqnarray}
and $x_1(T)=x_0$ and $x_2(T)$ is upper bounded by a certain constant. Hence, it is easy to verify that 
\begin{eqnarray}
\frac{\partial \rho(\mathbf{x}(T))}{x_1(0)}=O(a^T),
\end{eqnarray} 
which tends to infinity since $a>1$ and implies $\left\|\nabla_{\mathbf{x}(0)}\rho(\mathbf{x}(T))\right\|\rightarrow\infty$. This concludes the proof.

 \end{proof} 
 
\section{Proof of Theorem \ref{thm:global} and Justification}\label{appx:global}
We first provide numerical results to justify the assumption in Theorem \ref{thm:global}, namely $\sum_{i=1}^d x_{ti}^2$ is a constant $C$ when both $d$ and $t$ are sufficiently large. Figures \ref{fig:norm1} and \ref{fig:norm2} show the time evolution and variance of the norm normalized by $\sqrt{d}$, respectively. We observe that the normalized norm becomes more concentrated as the dimension becomes larger. Note that a sufficiently large $t$ means that the distribution of $\mathbf{x}_t$ becomes stationary, while a large $d$ implies the concentration of measure for $\mathbf{x}_t$, thus resulting in the constant norm of $\mathbf{x}_t$.

\begin{figure}
  \centering
  \includegraphics[scale=0.4]{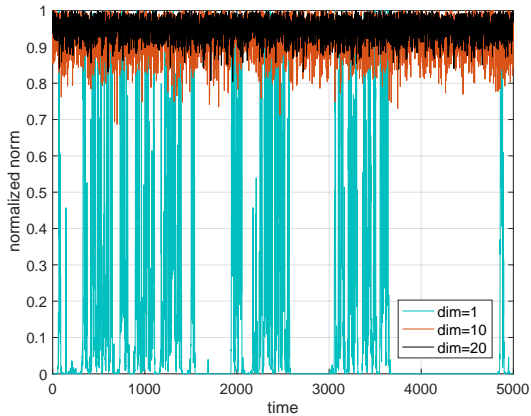}
  \caption{Time evolution for the normalized norm of system state with different dimensions, where dim means the dimension $d$.}\label{fig:norm1}
\end{figure}

\begin{figure}
  \centering
  \includegraphics[scale=0.4]{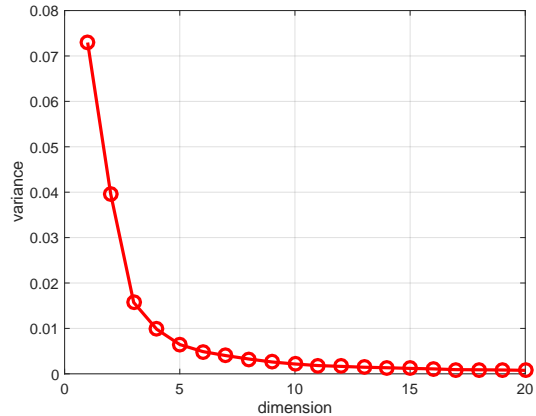}
  \caption{Variance of normalized norm of system state with different dimensions.}\label{fig:norm2}
\end{figure}

To prove Theorem \ref{thm:global}, we need the following simple lemma:
\begin{lemma}\label{lem:exist}
There exist a positive solution to the following equation with the unknown $R$:
\begin{eqnarray}\label{eq:ind_fix}
R=\sqrt{E[\tanh^2(z)]},
\end{eqnarray}
where $z$ is a Gaussian random variable with zero expectation and variance $\sigma^2 R$, when $\sigma^2>1$.
\end{lemma}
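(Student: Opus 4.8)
The plan is to read the lemma as a fixed-point problem for a self-consistency map and to locate a nontrivial root by the intermediate value theorem. Writing the Gaussian variable as $z=\sigma\sqrt{R}\,\xi$ with $\xi$ standard normal, I would set
\begin{eqnarray}
F(R)=\sqrt{E\left[\tanh^2\left(\sigma\sqrt{R}\,\xi\right)\right]},\qquad R\geq 0,
\end{eqnarray}
so that a positive solution of \eqref{eq:ind_fix} is exactly a positive fixed point $F(R)=R$. First I would record the trivial root $F(0)=0$ and verify that $F$ is continuous (indeed smooth) on $[0,\infty)$: since $\tanh^2\leq 1$, the integrand is dominated by the constant $1$, so dominated convergence lets me pass limits and derivatives under the expectation freely. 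The remaining task is to show that $F$ crosses the diagonal at some $R^\star>0$.

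Next I would analyze the two ends of the half-line. Near the origin I would use $\tanh(x)=x+O(x^3)$ to get $E[\tanh^2(\sigma\sqrt{R}\,\xi)]=\sigma^2 R+o(R)$, hence $F(R)=\sigma\sqrt{R}+o(\sqrt{R})$. Because $\sigma\sqrt{R}$ dominates $R$ as $R\to 0^+$, the graph of $F$ rises strictly above the diagonal on an initial interval $(0,\delta)$, i.e. $F(R)>R$ there; the hypothesis $\sigma^2>1$ is the clean condition that secures this gap, since after squaring the comparison $F(R)>R$ reads $E[\tanh^2(\sigma\sqrt{R}\,\xi)]>R^2$ and the leading term $\sigma^2 R$ exceeds $R^2$ for all $R<\sigma^2$, an interval guaranteed to be nondegenerate when $\sigma^2>1$. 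For the far end I would note that $\tanh^2(z)\to 1$ almost surely as the variance grows while $\tanh^2\leq 1$, so by dominated convergence $E[\tanh^2(z)]\to 1$ and therefore $F(R)\to 1$ as $R\to\infty$. Consequently $F(R)-R\to-\infty$, so $F(R)<R$ for all large $R$; note this tail behavior uses only the boundedness of $\tanh^2$, not $\sigma^2>1$.

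Finally, $\phi(R)=F(R)-R$ is continuous on $(0,\infty)$, strictly positive on $(0,\delta)$, and strictly negative for large $R$, so the intermediate value theorem yields a point $R^\star>0$ with $\phi(R^\star)=0$, which is the desired positive solution distinct from the trivial $R=0$. If a single nontrivial branch is wanted, I would additionally observe that $R\mapsto\tanh^2(\sigma\sqrt{R}\,\xi)$ is nondecreasing in $R$ for each $\xi$, so $F$ is monotone and bounded by $1$, which organizes the crossings; however, only continuity and the sign change are needed for existence.

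The step I expect to be the main obstacle is the behavior at the origin: I must make the leading-order expansion of $E[\tanh^2(\sigma\sqrt{R}\,\xi)]$ rigorous, controlling the contribution of the Gaussian tail of $\xi$ so that the $o(R)$ remainder is genuinely negligible and $F(R)>R$ holds on a full interval, and I must confirm that the root I construct is separated from the trivial root $R=0$ rather than coalescing with it. The two limit computations are routine dominated-convergence arguments once $\tanh^2\leq 1$ is invoked, so the delicate point is isolating exactly where the threshold $\sigma^2>1$ enters the near-origin comparison.
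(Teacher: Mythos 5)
Your proposal is correct and follows essentially the same route as the paper's proof: establish continuity of the right-hand side, use the small-argument linearization $E[\tanh^2(z)]=\sigma^2R+o(R)$ to show the map lies above the diagonal near $R=0$, use the bound $\tanh^2\leq 1$ to show it lies below the diagonal for large $R$, and conclude by the intermediate value theorem. Your treatment is in fact somewhat more careful than the paper's (which compares $E[\tanh^2(z)]$ rather than its square root against $R$), and your observation that the crossing at the origin already follows from $\sigma\sqrt{R}\gg R$ is a fair refinement, but it does not change the argument's structure.
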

\begin{proof}
It is easy to verify that the righthand side of (\ref{eq:ind_fix}) is a continuous function of $R$.
Then, we let $R$ be sufficiently small, such that $\tanh(z)$ is very close to a linear function (since $\tanh$ is close to a linear function when the argument is small), namely
\begin{eqnarray}
E[\tanh^2(z)]=\sigma^2R+o(R),
\end{eqnarray}
which is larger than the left hand side of (\ref{eq:ind_fix}).

We let $R$ tend to infinity. The right hand side of (\ref{eq:ind_fix}) is bounded by 1 while the left hand side is larger than the right band side. Due to the continuity of both sides of (\ref{eq:ind_fix}), there exists at least one positive solution.
\end{proof}

We also need the following lemma for the proof of Theorem \ref{thm:global}.
\begin{lemma}\label{lem:scalar_case}
We consider the following scalar dynamics
\begin{eqnarray}\label{eq:scalar_dyn}
x(t+1)=\tanh(\alpha x(t)),
\end{eqnarray}
where $\alpha$ is a Gaussian random variable with zero expectation and variance $\sigma^2$.
For a fixed $\alpha>0$, we consider the equation 
\begin{eqnarray}
x=\tanh(\alpha x),
\end{eqnarray}
whose nonzero solution is denoted by $x=h(\alpha)$. Then, we have
\begin{eqnarray}
Var[x]\leq 4\tanh \left(\sigma\sqrt{\frac{2}{\pi}}h\left(\sigma\sqrt{\frac{2}{\pi}}\right)\right).
\end{eqnarray}
\end{lemma}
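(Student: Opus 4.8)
The plan is to read this scalar recursion as the mean-field proxy for a single activation and to bound the variance of one application of the random map $\tanh(\alpha\,\cdot)$ evaluated at the stationary amplitude. Concretely, I would first freeze the previous state at the deterministic mean-field fixed point. By the decoupling assumption together with the scalar analogue of Lemma \ref{lem:exist}, the typical magnitude of $x(t)$ is the nonzero root $h(\mu)$ of $z=\tanh(\mu z)$, where $\mu$ is the effective deterministic gain; since $\alpha\sim N(0,\sigma^2)$, the natural choice is the mean absolute value $\mu=E[|\alpha|]=\sigma\sqrt{2/\pi}$, which is exactly the quantity appearing in the statement. With $x(t)=\pm h(\mu)$ fixed, the next activation becomes the explicit random variable $x=\tanh\big(\alpha\,h(\mu)\big)$, and it remains to bound $\mathrm{Var}[x]$ over the randomness of $\alpha$.

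From here the argument is elementary. Because $\alpha$ is symmetric about $0$ and $\tanh$ is odd, $E[x]=E[\tanh(\alpha h(\mu))]=0$, so $\mathrm{Var}[x]=E[x^2]=E[\tanh^2(\alpha h(\mu))]$. Next, since $|\tanh|\le 1$ forces $\tanh^2(u)\le|\tanh(u)|$ and $\tanh$ is odd, I would bound
\[
E[\tanh^2(\alpha h(\mu))]\le E\big[|\tanh(\alpha h(\mu))|\big]=E\big[\tanh(|\alpha|\,h(\mu))\big].
\]
Finally, $\tanh$ is concave on $[0,\infty)$ (its second derivative $-2\tanh(u)\,\mathrm{sech}^2(u)$ is nonpositive there) and $|\alpha|h(\mu)\ge 0$, so Jensen's inequality gives
\[
E\big[\tanh(|\alpha|\,h(\mu))\big]\le \tanh\big(E[|\alpha|]\,h(\mu)\big)=\tanh\Big(\sigma\sqrt{\tfrac{2}{\pi}}\,h\big(\sigma\sqrt{\tfrac{2}{\pi}}\big)\Big).
\]
Chaining these yields $\mathrm{Var}[x]\le \tanh(\mu h(\mu))$, which is already stronger than the claimed bound; the factor $4$ in the statement is comfortable slack, and I would expect the author's actual chain to pass through a looser intermediate estimate (for instance a crude bound on the variance of the bounded variable $x$) that introduces it.

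The genuinely delicate step is the first one, namely freezing the previous state at $h(\mu)$. In the true recursion both $\alpha$ and $x(t)$ are random and coupled, which is precisely the coupling between the state dynamics and the tangent dynamics flagged in Section \ref{sec:entropy}; replacing $x(t)$ by the deterministic amplitude $h(\mu)$ and the random gain by $E[|\alpha|]$ is exactly the mean-field approximation, so its rigor rests on Assumption \ref{assum:mean} rather than on an unconditional argument. I would therefore present the reduction as a consequence of that assumption and of the existence guarantee in Lemma \ref{lem:exist}, and treat the symmetry, the inequality $\tanh^2\le|\tanh|$, and the Jensen step as the self-contained core. A secondary point worth checking is that the concavity-based Jensen step needs its argument $|\alpha|h(\mu)$ to be nonnegative, which is why taking absolute values \emph{before} applying Jensen, rather than applying it to the signed quantity $\alpha h(\mu)$, is essential.
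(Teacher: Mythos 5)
Your elementary chain (symmetry of $\alpha$ and oddness of $\tanh$, then $\tanh^2(u)\le|\tanh(u)|$, then Jensen with concavity of $\tanh$ on $[0,\infty)$) is sound and is essentially the same machinery the paper uses. But there is a genuine gap in your first step: you replace the stationary state $x(t)$ by the deterministic point $\pm h\bigl(\sigma\sqrt{2/\pi}\bigr)$ before doing anything else. The quantity $\mathrm{Var}[x]$ in the lemma is the variance under the \emph{stationary distribution} $f_x$ of the random recursion $x(t+1)=\tanh(\alpha x(t))$, not the variance of the one-step image of a frozen point. Since the stationary law can a priori put mass at $|x|>h\bigl(\sigma\sqrt{2/\pi}\bigr)$, your quantity $E\bigl[\tanh^2\bigl(\alpha\,h(\mu)\bigr)\bigr]$ does not dominate the true variance, so the reduction is not merely a presentational convenience — it changes the object being bounded. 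Moreover, Assumption \ref{assum:mean} cannot be invoked to license this freezing: it concerns the growth criterion $\beta_t>1$ and the constancy of $\sum_i x_{ti}^2$ in high dimension, and says nothing identifying the scalar stationary state with the fixed point $h(\mu)$; Lemma \ref{lem:exist} only gives existence of a fixed point, not that the stationary law concentrates there.

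What the paper does instead, and what your proposal is missing, is a self-contained control of the stationary distribution itself. Writing $E^+[\cdot]$ for conditional expectations on $[0,\infty)$, stationarity gives $E^+[x]=E^+_{\alpha,x}[\tanh(\alpha x)]$, and two applications of Jensen (concavity of $\tanh$ on positives, first in $x$, then in $\alpha$) yield $E^+[x]\le\tanh\bigl(E^+[\alpha]\,E^+[x]\bigr)$. Then a fixed-point comparison closes the loop: by concavity, any $z>0$ satisfying $z\le\tanh(cz)$ must satisfy $z\le h(c)$, since $\tanh(cz)$ lies above the diagonal only on $\bigl(0,h(c)\bigr)$. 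This gives $E^+[x]\le h\bigl(E^+[\alpha]\bigr)=h\bigl(\sigma\sqrt{2/\pi}\bigr)$, which is precisely the step that legitimizes plugging $h\bigl(\sigma\sqrt{2/\pi}\bigr)$ into the final $\tanh$ bound; the rest of the paper's proof (symmetry reduction producing the factor $4$, Jensen over $\alpha$) matches yours. As a side remark, your ordering — applying $\tanh^2\le|\tanh|$ \emph{before} Jensen — is actually cleaner than the paper's, which applies Jensen directly to $\tanh^2$ even though $\tanh^2$ is not concave near the origin; if you graft the stationarity argument above onto your chain, you obtain a complete and arguably tighter proof.
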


\begin{proof}
The variance of $x$ in the stationary stage is given by
\begin{eqnarray}\label{eq:var}
Var(x)=\int \tanh^2(uv)f_x(u)f_\alpha(v)dudv,
\end{eqnarray}
where $f_x$ is the stationary distribution of $x$ and $f_\alpha$ is the pdf of Gaussian distribution. It is easy to verify 
\begin{eqnarray}\label{eq:var}
Var(x)=4\int_0^\infty\int_0^\infty \tanh^2(uv)f_x(u)f_\alpha(v)dudv,
\end{eqnarray}
namely we can consider only positive $x$ and $\alpha$ due to the symmetry.

We fix $\alpha$. Then, we have
\begin{eqnarray}
\int_0^\infty \tanh^2(\alpha x)dx\leq \tanh^2 (\alpha E^+[x]),
\end{eqnarray}
where $E^+[x]$ is the conditional expectation of $x$ under $2f_x$ in $[0,\infty)$, due to the Jensen's inequality and the concavity of $\tanh(x)$ when $x>0$. When we take the expectation over the randomness of $\alpha$, we have
\begin{eqnarray}\label{eq:var3}
E_{\alpha}^+\left[\tanh^2 (\alpha E[x])\right]\leq \tanh^2(E^+[\alpha]E^+[x]),
\end{eqnarray}
which is again because of the Jensen's inequality.

Moreover, we have, for the stationary distribution,
\begin{eqnarray}
E^+[x]&=&E_{\alpha,x}^+[\tanh(\alpha x)]\nonumber\\
&\leq &E_{\alpha}^+[\tanh(\alpha E^+[x])]\nonumber\\
&\leq &\tanh(E^+[\alpha]E^+[x]).
\end{eqnarray}
where both $E^+[x]$ and $E^+[\alpha]$ are averaged over $[0,\infty)$. 

Then, we have
\begin{eqnarray}\label{eq:leq}
E^+[x] \leq h(E^+[\alpha]).
\end{eqnarray}
This inequality can be obtained by proving the concavity of the function $\int_0^\infty \tanh(\alpha x)f(\alpha)d\alpha$ when $x>0$, where the argument is $x$, which stems from the concavity of $\tanh(x)$ when $x>0$, as illustrated in Fig. \ref{fig:intersection}. 

\begin{figure}
  \centering
  \includegraphics[scale=0.5]{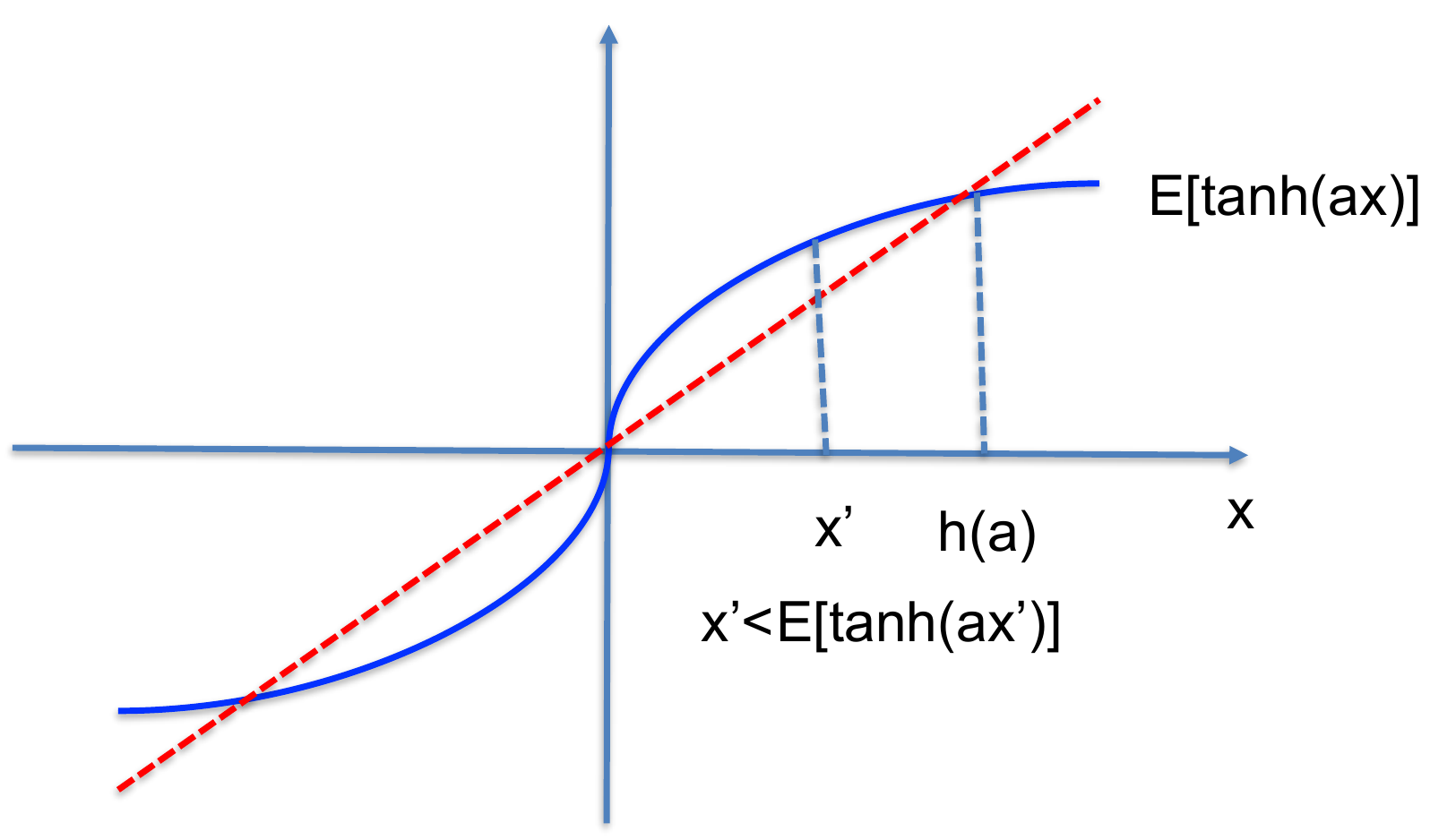}
  \caption{An illustration of the inequality in (\ref{eq:leq})}\label{fig:intersection}
\end{figure}

Therefore, combining (\ref{eq:var}), (\ref{eq:var3}) and (\ref{eq:leq}), we have 
\begin{eqnarray}
Var(x)\leq 4\tanh(E[\alpha]h(E[\alpha])),
\end{eqnarray}
where the variance of $x$ is over $(-\infty,\infty)$ while the expectation of $\alpha$ is over $[0,\infty)$.

It is easy to verify
\begin{eqnarray}
E^+[\alpha|\alpha\geq 0]=\sigma\sqrt{\frac{2}{\pi}},
\end{eqnarray}
which is the average of Gaussian random variable in the positive range. This concludes the proof.

\end{proof}

Then, we provide the full proof for Theorem \ref{thm:global}:

\begin{proof}
First we need to bound the constant $C$. To this end,
we denote by $\phi$ the direction of $\mathbf{x}$ and denote by $r$ its amplitude $\|\mathbf{x}\|$. Then, we solve the equation with unknown $r$:
\begin{eqnarray}\label{eq:fixed}
r=E\left[\left\|\tanh (\mathbf{W}\mathbf{x}_{\phi}(r))\right\|\right].
\end{eqnarray}

For each direction $\phi$, we have a fixed point $r(\phi)$ in (\ref{eq:fixed}), using a proof similar to that of Lemma \ref{lem:exist}. 

Then, we consider the following optimization problem:
\begin{eqnarray}\label{eq:max_phi}
\max_{\phi}  E\left[\left\|\tanh (\mathbf{W}\mathbf{x}_{\phi}(r(\phi)))\right\|\right],
\end{eqnarray}
where the expectation is over the randomness of $\mathbf{W}$ and $r(\phi)$ is the solution to (\ref{eq:fixed}) given the direction $\phi$.

Due to the assumption of Gaussian matrix for $\mathbf{W}$, $\mathbf{W}\mathbf{x}$ is a Gaussian distributed vector with zero expectation and covariance matrix given by
\begin{eqnarray}
\mathbf{\Sigma}=E\left[\mathbf{W}\mathbf{x}\mathbf{x}^T\mathbf{W}^T\right]
\end{eqnarray}
we have
\begin{eqnarray}
\mathbf{\Sigma}_{ij}&=&E\left[\sum_{k,m}\mathbf{W}_{ik}\mathbf{x}_{k}\mathbf{x}_{m}\mathbf{W}_{jm}\right]\nonumber\\
&=&\left\{
\begin{array}{ll}
\frac{\sigma^2\|\mathbf{x}\|^2}{d},\qquad \mbox{if }i=j\\
0,\qquad \mbox{if }i\neq j
\end{array}
\right..
\end{eqnarray}
Hence, we have
\begin{eqnarray}
\mathbf{\Sigma}&=&\frac{\sigma^2 \|\mathbf{x}\|^2}{d}\mathbf{I}\nonumber\\
&=&\frac{\sigma^2 r^2}{d}\mathbf{I},
\end{eqnarray}
which is a diagonal matrix.

Therefore, the maximization in (\ref{eq:max_phi}) is actually independent of $\phi$. We have 
\begin{eqnarray}\label{eq:fixed_eq}
r=E\left[\left\|\tanh(\mathbf{z})\right\|\right],
\end{eqnarray}
where $\mathbf{z}\sim \mathcal{N}(\mathbf{0},\mathbf{\Sigma})$.

Since $\mathbf{z}$ has i.i.d. components, (\ref{eq:fixed_eq}) can be rewritten as
\begin{eqnarray}
R=\frac{1}{\sqrt{2\pi R}}\int_\infty^{-\infty} \tanh^2 (\alpha x) e^{-\frac{x^2}{2R}}p_n(\alpha)dxd\alpha,
\end{eqnarray}
where $R=\frac{r^2}{d}$ and $\alpha$ is a Gaussian random variable with zero expectation, $p_n$ is the normal distribution, and variance $\sigma^2$. It is easy to see that $R$ is simply the variance of $x$ in the scalar dynamics of (\ref{eq:scalar_dyn}) in Lemma \ref{lem:scalar_case}. By applying the conclusion in Lemma \ref{lem:scalar_case}, we have
\begin{eqnarray}\label{eq:Rles}
R\leq 4\tanh \left(\sigma\sqrt{\frac{2}{\pi}}h\left(\sigma\sqrt{\frac{2}{\pi}}\right)\right).
\end{eqnarray}

Then, for each stage of the dynamics, we have
\begin{eqnarray}\label{eq:beta_t}
\beta_t&=&(1-R)\frac{E\left[\|\mathbf{W}_t\mathbf{u}_{t-1}\|^2\right]}{E\left[\|\mathbf{u}_{t-1}\|^2\right]}\nonumber\\
&=&(1-R)\frac{d\sigma^2E\left[\|\mathbf{u}_{t-1}\|^2\right]}{E\left[\|\mathbf{u}_{t-1}\|^2\right]}\nonumber\\
&=&(1-R)d\sigma^2.
\end{eqnarray}

The conclusion is achieved by applying (\ref{eq:Rles}) to (\ref{eq:beta_t}) and Assumption \ref{assum:mean}.
\end{proof}

We can also find an expression for $h$. We apply the Lagrange inversion formula \cite{Bruijn2010} by consider the following function
\begin{eqnarray}
f(x,\alpha)=\tanh(\alpha x)-x,\qquad x\in [0,1],
\end{eqnarray}
and then obtain the expression for $h(\alpha)$:
\begin{eqnarray}
h(\alpha)=0.5+\sum_{n=1}^\infty g_n(\alpha)\frac{(-f(0.5,\alpha))^n}{n!},
\end{eqnarray}
where 
\begin{eqnarray}
g_n(\alpha)=\lim_{w\rightarrow 0.5}\left[\frac{d^{n-1}}{dw^{n-1}}\left(\frac{w-0.5}{f(w,\alpha)-f(0.5,\alpha)}\right)^n\right].
\end{eqnarray}

\section{Classification Complexity with Other Related Metrics}\label{appdx:related}
We discuss the qualitative relationship (including conjectures) of the classification complexity in Definition \ref{def:complexity0} to other related metrics describing system complexities. 

\subsection{Hausdorff Dimension}
We notice that the partition based complexity in Definition \ref{def:complexity0} is similar to complexities describing how chaotic the system is. One of such metrics is the Hausdorff dimension \cite{Falconer2003} in metric space, which is defined as follows:
\begin{definition}[Hausdorff Dimension]
Consider a metric space $X$ and a subset $S\subset X$. We say that a countable set of balls $\{B_i\}_{i=1,2,...}$ cover $S$ if $S\in \cup_i B_i$. The set of all such ball coverings is denoted by $V(S)$. For integer $d\in [0,\infty)$, the $d$-th dimensional Hausdorff content of $S$ is defined as
\begin{eqnarray}
C_H^d(S)=\inf_{\mathcal{B}\in V(S)}\left\{\sum_{i\in I} r_i^d\bigg| \mbox{$r_i$: radius of ball $i$ in $\mathcal{B}$}\right\}.
\end{eqnarray}
Then, the Hausdorff dimension of $S$ is defined as
\begin{eqnarray}
\dim_H(S)=\inf\{d\geq 0: C_H^d(S)=0\}.
\end{eqnarray}
\end{definition}

We can also use the Hausdorff dimension to measure the complexity of the hybrid Borel sets in Definition \ref{def:complexity0}. We notice the following differences between the setup in the Hausdorff dimension and Definition \ref{def:complexity0}:
\begin{itemize}
\item Definition \ref{def:complexity0} relies on partitions while the Hausdorff dimension depends on coverings. 
\item The Hausdorff dimension depends on the asymptotic case of $C_H^d(S)$ tending to 0 and arbitrary value of radius. In Definition \ref{def:complexity0}, we have a limited granularity $\epsilon>0$ and the content of all hybrid sets is nonzero. 
\end{itemize}

Therefore, we can define an $(\epsilon,\delta)$-Hausdorff dimension $d(\epsilon,\delta)$ as
\begin{eqnarray}
H(\epsilon,\delta,X)=\inf_{P:partition}\inf\left\{d|N(P|X)\epsilon^d\leq \delta\right\}.
\end{eqnarray}

Our conjecture is that, under certain conditions, we have
\begin{eqnarray}
H(\epsilon,\delta)\approx \frac{\log\delta -C(\mathbf{X},\epsilon)}{\log \epsilon}.
\end{eqnarray}
and an asymptotic equality given by
\begin{eqnarray}
&&\lim_{\epsilon,\delta\rightarrow 0,|\mathbf{X}|\rightarrow\infty}H(\epsilon,\delta)\nonumber\\
&=&\lim_{\epsilon,\delta\rightarrow 0,|\mathbf{X}|\rightarrow\infty} \frac{\log\delta -C(\mathbf{X},\epsilon)}{\log \epsilon}.
\end{eqnarray}
The rationale is that both the Hausdorff dimension and the classification complexity will be achieved by very similar partitions. However, we are still unable to prove it.

\subsection{Kolmogorov Complexity}
Another metric to measure the complexity of a system is the Kolmogorov complexity $K$ \cite{Li2008}. Informally speaking, the Kolmogorov complexity of a deterministic sequence $\mathbf{x}=(x_1,...,x_n)$ is the minimum number of bits necessary to describe the sequence $\mathbf{x}$. Although the Kolmogorov complexity is non-computable, it can be approximated by the codeword length of universal source coding, up to a universal constant.

We notice that the hybrid Borel sets essentially describe the classification surface of the two classes. We can consider a partition as a quantization with an upper bound $\epsilon$ for the quantization error. Hence, the number of bits to describe a set in the partition is approximately given by
\begin{eqnarray}
\log_2 \mbox{vol}(\Omega)-d\log_2\epsilon.
\end{eqnarray}
Then, the number of bits to list the hybrid (boundary) sets in the partition is approximately
\begin{eqnarray}
K\approx 2^{C(\mathbf{X},\epsilon)}\left(\log_2 \mbox{vol}(\Omega)-d\log_2\epsilon\right).
\end{eqnarray}
Therefore the classification complexity is approximately
\begin{eqnarray}
C(\mathbf{X},\epsilon)\approx \log_2\frac{K}{\left(\log_2 \mbox{vol}(\Omega)-d\log_2\epsilon\right)}.
\end{eqnarray}

\section{Proof of Theorem \ref{thm:layer_num}}\label{appdx:layer_num}
\begin{proof}
The proof simply relies on the definitions. According to Definition \ref{def:complexity0}, regardless of the $\epsilon$-partition $P$ in which each set has a radius no more than $\epsilon$, there are at least $N(P|\mathbf{X})$ hybrid sets in $P$. Each hybrid set contains at least two points $\mathbf{x}_1\in \mathbf{X}_1$ and $\mathbf{x}_2\in \mathbf{X}_2$. Obviously, we have
\begin{eqnarray}
\|\mathbf{x}_1-\mathbf{x}_2\|\leq \epsilon.
\end{eqnarray}

Since the sample set is affine separable with $\epsilon$-margin after $D$ rounds of nonlinear transformations in the DNN, we have 
\begin{eqnarray}
\left\{
\begin{array}{ll}
&\mathbf{w}^T\mathbf{z}_1-b\geq \frac{\epsilon}{2}\\
&\mathbf{w}^T\mathbf{z}_2-b\leq -\frac{\epsilon}{2}
\end{array}
\right..
\end{eqnarray}
where $\mathbf{z}_i=T_D\circ T_{D-1}\circ ... \circ T_1(\mathbf{x}_i)$, $i=1,2$.
Subtracting the second inequality from the second one, we have
\begin{eqnarray}
|\mathbf{w}^T(\mathbf{z}_1-\mathbf{z}_2)|\geq \epsilon.
\end{eqnarray}
Hence, we have (recall that $\|\mathbf{w}^T\|$=1) 
\begin{eqnarray}
&&\|\mathbf{z}_1-\mathbf{z}_2\|\nonumber\\
&=&\|\mathbf{w}\|\|\mathbf{z}_1-\mathbf{z}_2\|\nonumber\\
&\geq& |\mathbf{w}^T(\mathbf{z}_1-\mathbf{z}_2)|\nonumber\\
&\geq& \epsilon.
\end{eqnarray}
Therefore, each pair of $\mathbf{x}_1$ and $\mathbf{x}_2$ in the hybrid set will be separated by a distance of at least $\epsilon$ after $D$ rounds of the nonlinear transformations in the DNN.  

Then, any $(D,\epsilon)$-spanning set of the DNN dynamics in Definition \ref{eq:metric_n} has a cardinality more than $2^{C(\mathbf{X},\epsilon)+1}$, namely
\begin{eqnarray}
2^{H_s(\{T_n\}_{n=1,..,D},D,\epsilon)}\geq 2^{C(\mathbf{X},\epsilon)+1}.
\end{eqnarray}
This concludes the proof.
\end{proof}

\section{Proof of Theorem \ref{thm:vc_bound}}\label{appdx:vc_bound}
\begin{proof}
The proof is similar to that of Theorem \ref{thm:layer_num}, since it is also based on bounding the number of distinct paths in the $D$ rounds of transformations of the DNN dynamics. 

When the VC-dimension of the $D$-layer DNN is $d_{VC}$, there exist $2^{d_{VC}}$ realizations of DNNs to shatter $d_{VC}$ samples $\mathbf{x}_1$, ..., $\mathbf{x}_{d_{VC}}$. We can consider a composite vector $\mathbf{z}=(\mathbf{x}_1,...,\mathbf{x}_{d_{VC}})$, and the transformation in (\ref{eq:ensemble}). Note that each individual component $\mathbf{x}_i$ experiences the same transformations. The $2^{d_{VC}}$ sets of transformations of the DNN drive $\mathbf{z}$ to $2^{d_{VC}}$ vectors of distinct signs. Since the output of the $D$ transformations is $\epsilon$-affine separable, they form $2^{d_{VC}}$ distinct paths (separated by a distance of at least $\epsilon$) in the dynamics. According to the definition of ensemble entropy, the $2^{d_{VC}}$ sets of transformations generate at most $2^{\sup_{\{T_{mt}\}_{m,t},\mathbf{z}}H_e(\{T_{mt}\}_{m=1,...,2^{d_{VC}},t=1,...,D},D,\epsilon,\mathbf{z})D}$ distinct trajectories. Therefore, we have
\begin{eqnarray}
2^d_{VC}\leq 2^{\sup_{\{T_{mt}\}_{m,t},\mathbf{z}}h_e(\{T_{mt}\}_{m=1,...,2^{d_{VC}},t=1,...,D},D,\epsilon,\mathbf{z})D},
\end{eqnarray}
which concludes the proof.
\end{proof}

\section{Proof of Theorem \ref{thm:vc_lbound}}\label{appdx:vc_lbound} 
We first show intuitive images of shattering different sets of sampls.
One can use the nonlinear boundary to shatter samples that cannot be shattered by linear classifiers. When a new point is added to a set of shattered samples and makes the new set impossible to shatter, the two possible situations are illustrated in Fig. \ref{fig:scattering}, where the new point (of label +1) is either included in the convex hull of the samples of label -1, or not. 

\begin{figure}
  \centering
  \includegraphics[scale=0.3]{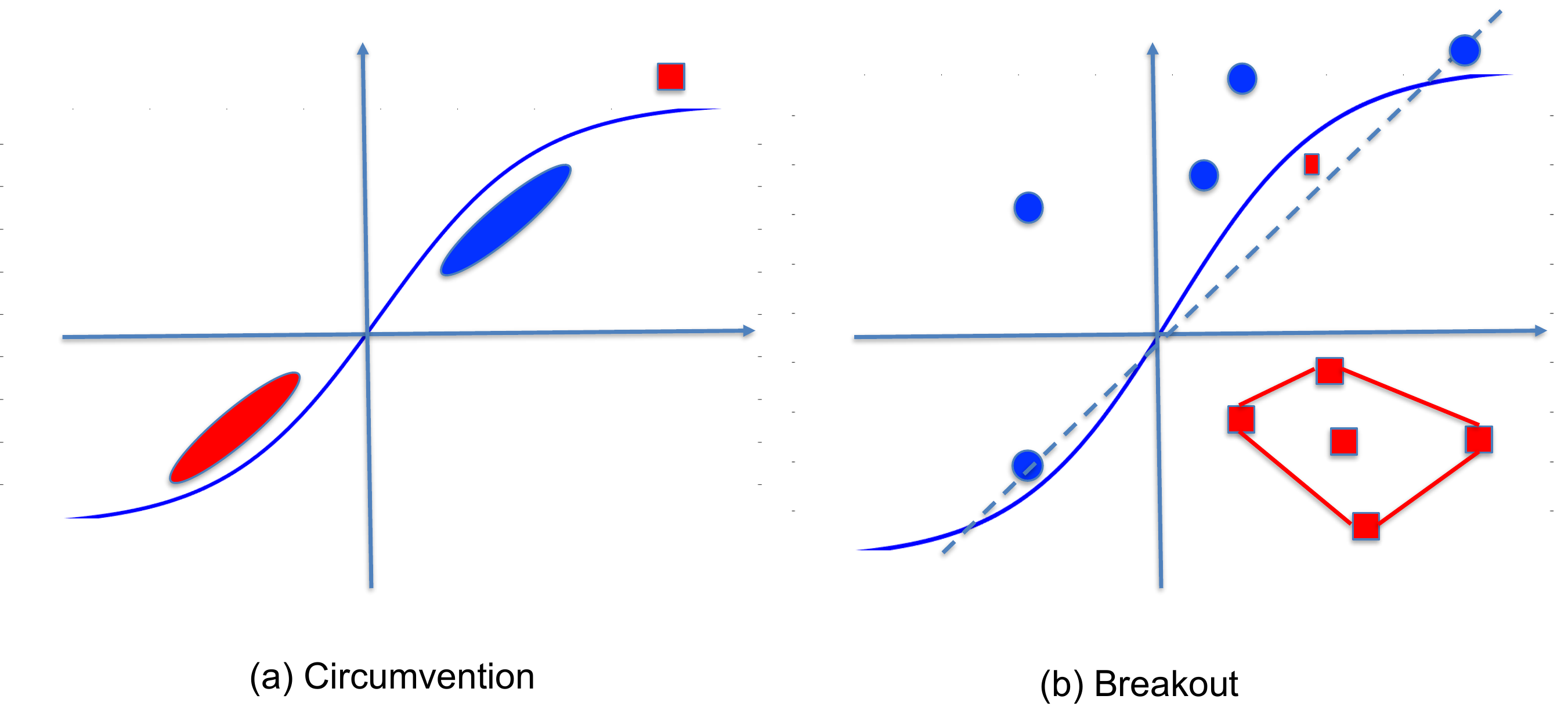}
  \caption{Scattering using the nonlinearity of $\tanh$}\label{fig:scattering}
\end{figure}

We need the following lemma, which is illustrated in Fig. \ref{fig:vector1}:
\begin{lemma}
The mapping $\tanh^{-1}$ maps the line $x_2=ax_1$ ($a<1$) in $[-1,1]^2$ to a curve $x_2=f_a(x_1)$, where $f_a$ is dependent on $a$ and is concave when $x>0$.
\end{lemma}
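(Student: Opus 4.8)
The plan is to first reduce the lemma to an explicit one-variable statement and then prove concavity by a direct sign analysis. Since the inverse transformation $\tanh^{-1}$ is the inverse of the elementwise map $\phi=\tanh$, it acts coordinatewise. I would therefore parametrize the line $x_2=ax_1$ by $(s,as)$ with $s\in(-1,1)$ and apply $\tanh^{-1}$ to each coordinate, obtaining the image point $(X_1,X_2)=(\tanh^{-1}(s),\tanh^{-1}(as))$. Because $X_1=\tanh^{-1}(s)$ is strictly increasing in $s$, it is invertible with $s=\tanh(X_1)$, so the image is genuinely the graph of a function $X_2=f_a(X_1)$ given by
\begin{eqnarray}
f_a(x)=\tanh^{-1}\!\left(a\tanh(x)\right).
\end{eqnarray}
Throughout I take $0<a<1$, which guarantees $a\tanh(x)\in(-1,1)$ so that $\tanh^{-1}$ is well defined; this is the regime in which the claimed concave warping occurs (for $a<0$ the same computation gives convexity on the positive axis).

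Next I would differentiate. Writing $T=\tanh(x)$ so that $T'=1-T^2$, the chain rule yields
\begin{eqnarray}
f_a'(x)=\frac{a\left(1-T^2\right)}{1-a^2T^2}.
\end{eqnarray}
The cleanest route to concavity is to view $f_a'$ as the composition $\phi(\tanh(x))$ with $\phi(T)=a(1-T^2)/(1-a^2T^2)$ and show that $f_a'$ is decreasing. Differentiating $\phi$ and simplifying gives
\begin{eqnarray}
\frac{d\phi}{dT}=\frac{2a\left(a^2-1\right)T}{\left(1-a^2T^2\right)^2},
\end{eqnarray}
where the crucial feature is that the competing terms in the numerator cancel, leaving a single factor proportional to $(a^2-1)$. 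For $0<a<1$ and $x>0$ we have $T>0$ and $a^2-1<0$, so $d\phi/dT<0$; since $T=\tanh(x)$ is itself strictly increasing, the composition $f_a'(x)=\phi(\tanh(x))$ is strictly decreasing, i.e. $f_a''<0$ for $x>0$, which is precisely concavity on the positive axis.

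I do not expect a genuine conceptual obstacle: once the explicit form $f_a(x)=\tanh^{-1}(a\tanh(x))$ is identified, the statement is a routine calculus computation. The only delicate point is the algebraic cancellation in the derivative, where the terms must collapse to the clean factor $(a^2-1)$ that governs the sign; this is where a computational slip is most likely. As an equivalent check one may compute $f_a''$ directly and confirm that it equals $2a(a^2-1)T(1-T^2)/(1-a^2T^2)^2$, which is negative under the same hypotheses and reaches the identical conclusion.
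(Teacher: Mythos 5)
Your proposal is correct and takes essentially the same route as the paper's own proof: both identify the image curve as $f_a(x)=\tanh^{-1}(a\tanh(x))$ and establish concavity by a sign analysis of the second derivative, your expression $f_a''(x)=2a(a^2-1)\tanh(x)\left(1-\tanh^2(x)\right)/\left(1-a^2\tanh^2(x)\right)^2$ being exactly the paper's formula after expansion. Your explicit restriction to $0<a<1$ is in fact slightly more careful than the paper's stated hypothesis $a<1$, since, as you note, the same computation yields convexity for $a<0$ (and $\tanh^{-1}$ is not even defined on the whole line when $|a|>1$).
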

\begin{remark}
When $a=1$, the mapping $\tanh^{-1}$ maps from line to line. When $a>1$, the corresponding function $f_a$ is convex.
\end{remark}

\begin{proof}
The mapping $\tanh^{-1}$ maps the line $y=ax$, $x\in [-1,1]$, to the following curve:
\begin{eqnarray}\label{eq:curve}
y=\tanh^{-1}(a\tanh(x)).
\end{eqnarray}

It is easy to verify
\begin{eqnarray}
\frac{d^2 y}{dx^2}&=&-\frac{2a(1-\tanh^2(x))\tanh(x)}{1-a^2\tanh^2(x)}\nonumber\\
&\times&\left(1-\frac{a^2(1-\tanh^2(x))}{1-a^2\tanh^2(x)}\right),
\end{eqnarray}
where it is easy to verify $\frac{a^2(1-\tanh^2(x))}{1-a^2\tanh^2(x)}<1$, since $a<1$ Therefore, we have $\frac{d^2 y}{dx^2}<0$, such that the curve in (\ref{eq:curve}) is concave. 
\end{proof}

\begin{figure}
  \centering
  \includegraphics[scale=0.4]{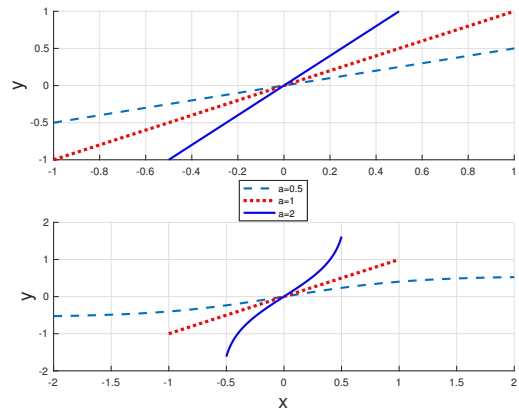}
  \caption{Deformation of lines under $\tanh^{-1}$, where $a>1$.}\label{fig:vector1}
\end{figure}

The following lemma, as illustrated in Fig. \ref{fig:move}, is also needed, which can be easily proved by identifying two boundary points on $A$, $z_1$ and $z_2$, and finding an affine transformation from $P_1$ and $P_2$ to $z_1$ and $z_2$.
\begin{lemma}\label{lem:intersection}
Consider a closed convex set $A$, and two points $P_1$ and $P_2$ on a line $L$. There always exists an affine transformation $T$ such that the intersection points of $T(L)$ and $A$ are exactly $T(P_1)$ and $T(P_2)$.
\end{lemma}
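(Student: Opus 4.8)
The plan is to follow the hint: realize $T(P_1)$ and $T(P_2)$ as the two points at which the image line $T(L)$ crosses the boundary $\partial A$, exploiting the facts that an affine map carries lines to lines and that the intersection of a line with a convex set is a single (possibly degenerate) segment. I read ``intersection points of $T(L)$ and $A$'' as the two boundary-crossing points of the line with the convex body, which is the reading consistent with the hint's reference to boundary points.

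First I would fix the two target points on the boundary. Since $A$ is a full-dimensional closed convex set in $\mathbb{R}^2$ (as is the convex hull appearing in the proof of Theorem \ref{thm:vc_lbound}), I pick an interior point of $A$ and a generic direction through it; the resulting line is a secant and meets $\partial A$ in exactly two points, which I call $z_1$ and $z_2$. By construction the line $\ell$ through $z_1$ and $z_2$ satisfies $\ell\cap A=[z_1,z_2]$, so the only points at which $\ell$ crosses $\partial A$ are $z_1$ and $z_2$.

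Next I construct an affine map $T$ with $T(P_1)=z_1$ and $T(P_2)=z_2$. Writing $T(\mathbf{x})=M\mathbf{x}+\mathbf{c}$, this amounts to choosing a $2\times 2$ matrix $M$ and a vector $\mathbf{c}$. Since $P_1\neq P_2$ and $z_1\neq z_2$, the vectors $P_2-P_1$ and $z_2-z_1$ are both nonzero, so there is a nonsingular linear map $M$ with $M(P_2-P_1)=z_2-z_1$: extend $P_2-P_1$ to a basis, send it to $z_2-z_1$, and send a complementary basis vector to any vector independent of $z_2-z_1$. Setting $\mathbf{c}=z_1-MP_1$ then yields $T(P_1)=z_1$, and using $M(P_2-P_1)=z_2-z_1$ also $T(P_2)=z_2$. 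Thus the required affine map exists, and in fact there is a whole family of such maps.

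Finally I would assemble the conclusion. Because $T$ is affine it carries the line $L$ onto the line through $T(P_1)=z_1$ and $T(P_2)=z_2$, i.e. $T(L)=\ell$. Hence $T(L)\cap A=\ell\cap A=[z_1,z_2]$, and the boundary-crossing points of $T(L)$ with $A$ are exactly $z_1=T(P_1)$ and $z_2=T(P_2)$, as claimed. The one genuine subtlety — and the step I would handle most carefully — is the choice of $z_1,z_2$: one must guarantee that the line joining them is a proper secant, rather than tangent to $A$ or lying along a flat boundary face, since otherwise $\ell\cap A$ could collapse to a single point or extend to a longer segment whose endpoints differ from $z_1,z_2$. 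Selecting the secant first (through an interior point, in a generic direction) and then \emph{defining} $z_1,z_2$ as its two boundary crossings removes this difficulty, so the remaining argument is routine.
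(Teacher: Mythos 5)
Your proof is correct and takes essentially the same route as the paper, whose entire argument is the one-sentence remark that the lemma ``can be easily proved by identifying two boundary points on $A$, $z_1$ and $z_2$, and finding an affine transformation from $P_1$ and $P_2$ to $z_1$ and $z_2$.'' Your version fills in exactly that sketch, and your ordering---fix a proper secant through an interior point first, then \emph{define} $z_1,z_2$ as its boundary crossings---is a sensible tightening that avoids the tangency/flat-face degeneracy the paper's phrasing glosses over.
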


\begin{figure}
  \centering
  \includegraphics[scale=0.4]{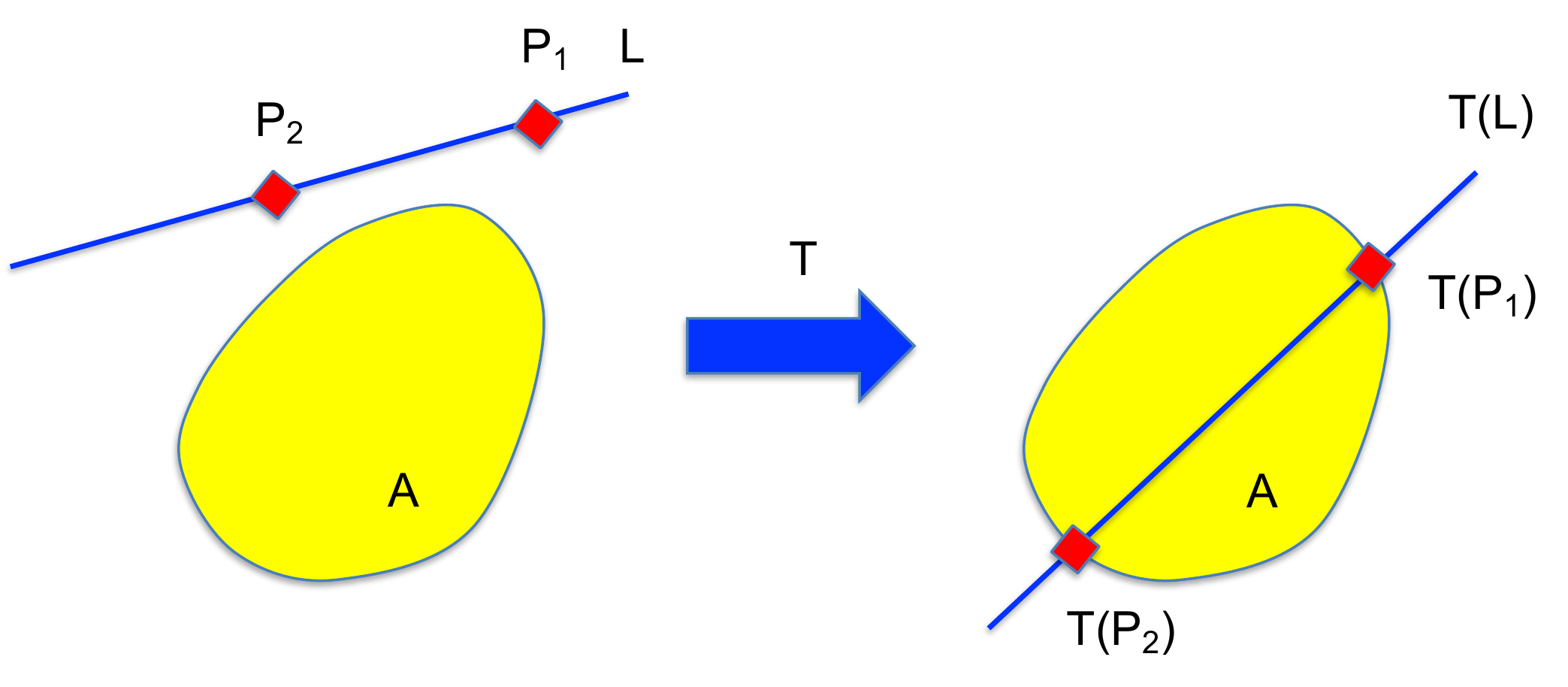}
  \caption{An illustration of Lemma \ref{lem:intersection}.}\label{fig:move}
\end{figure}

Then we prove Theorem \ref{thm:vc_bound} using induction:

\begin{proof}
We apply induction on the number of shattered samples. When there is only one layer in the DNN, it is easy to verify that the single nonlinear mapping $\tanh$ can shatter four points. The procedure is illustrated in Fig. \ref{fig:four}, where a linear classifier cannot label the four points. Then, the intersection point of the two lines $L_1$ and $L_2$ is affinely shifted to the origin. Choose a matrix $\mathbf{A}_1$ sufficiently stretching $L_1$ and compressing $L_2$ by scales $s$ and $\frac{1}{s}$. The intersection points of $L_1$ and $L_2$ is affinely shifted to $(0,1)$. When $s$ is sufficiently large, $\mathbf{x}_1$ and $\mathbf{x}_2$ are mapped by $\tanh$ sufficiently close to $(1,1)$ and $(-1,-1)$, respectively, while $\mathbf{x}_3$ and $\mathbf{x}_4$ are mapped sufficiently close to $(0,\tanh(1))$. Then, the images of the four samples can be linearly classified.

\begin{figure}
  \centering
  \includegraphics[scale=0.4]{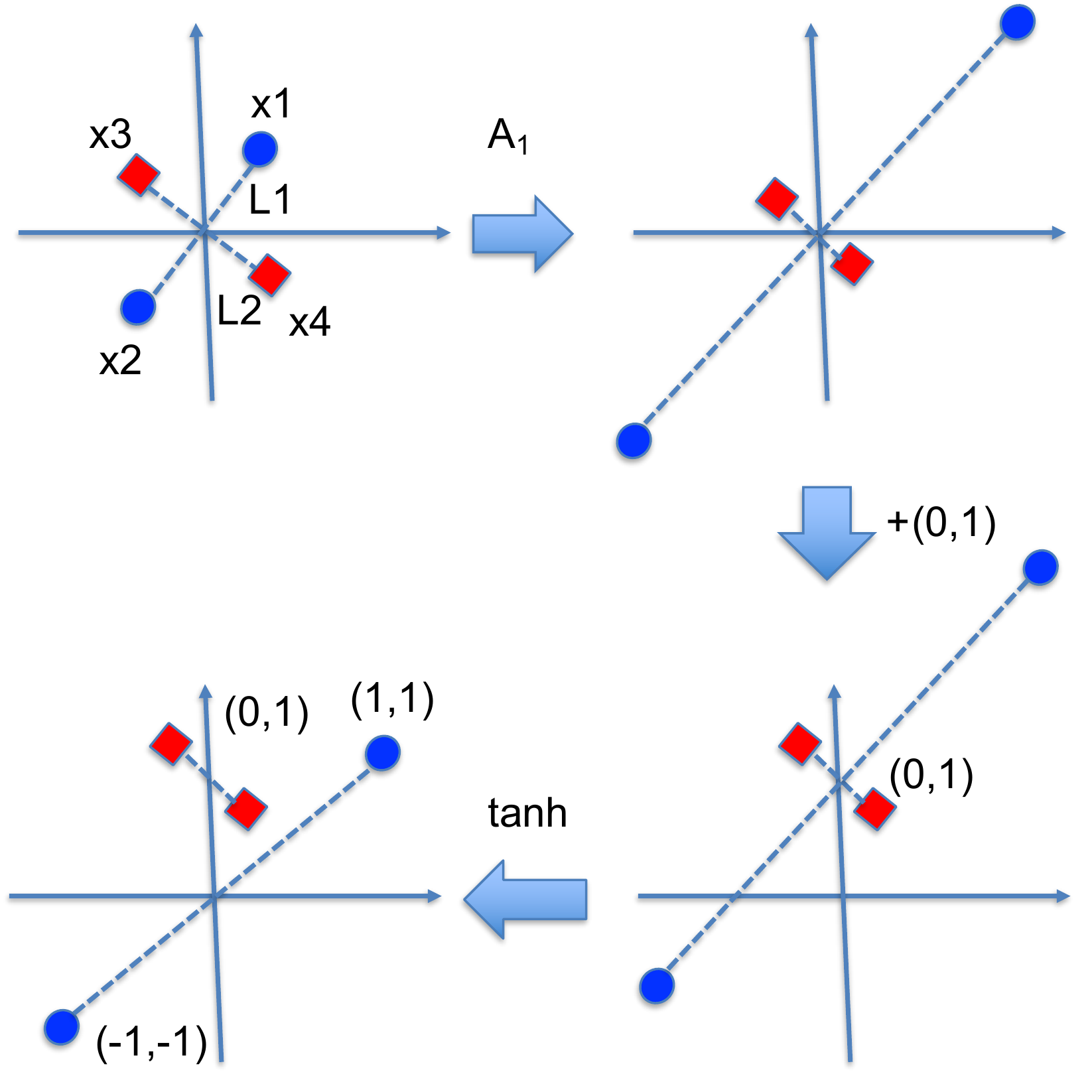}
  \caption{Shattering four samples.}\label{fig:four}
\end{figure}

Then, we assume that $k$ samples ($k\geq 4$) can be shattered when $D=k-3$. We add a new sample $\mathbf{x}_{k+1}$ in $\mathbb{R}^2$. The selection is arbitrary except for the requirement that $\mathbf{x}_{k+1}$ be not within the convex hull of the existing $k$ samples. 

Now, we want to prove that the $k+1$ samples can be shattered by an ensemble of $k-2$-layer DNNs. We assume that there exists a setup of labelling that cannot be achieved by any $k-3$-layer DNN. We assume that, in this setup, $\Omega_{+1}=\{\mathbf{x}_1,...,\mathbf{x}_m,\mathbf{x}_{k+1}\}$ are labeled as +1 and $\Omega_{-1}\{\mathbf{x}_{m+1},...,\mathbf{x}_{k}\}$ are labeled as -1. However, the linear classifier cannot label all samples in $\Omega_{+1}$ as +1. 
Due to the assumption of induction, the set $\mathbf{x}_1,...,\mathbf{x}_k$ can be shattered by $k-3$ layers of DNN. Therefore, after $k-3$ layers of mappings, the convex hulls of $\Omega_{+1}/\mathbf{x}_{k+1}$ and $\Omega_{-1}$ do not intersect, such that there exists a line to separate them. For notational simplicity, we still use $\mathbf{x}_1$, ..., $\mathbf{x}_k$ to represent the images after the $k-3$ layers of mappings.

\begin{figure}
  \centering
  \includegraphics[scale=0.4]{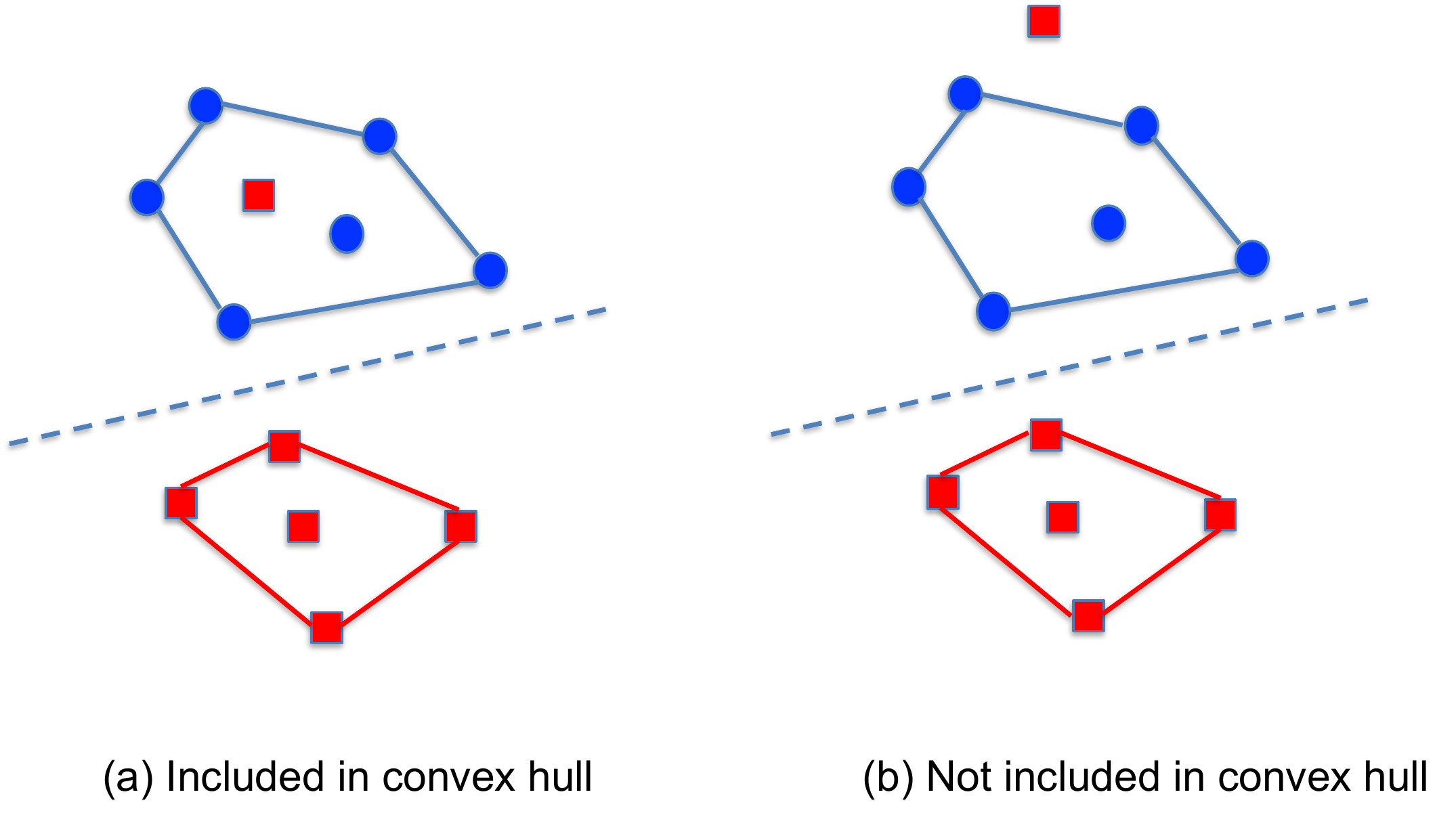}
  \caption{Cases of indistinguishable by linear classifiers in $\mathbb{R}^2$.}\label{fig:impossible}
\end{figure}

There are two possibilities for the impossibility of linearly classifying the $k+1$ samples given the labels, as shown in Fig. \ref{fig:impossible}. Due to the assumption that $\mathbf{x}_{k+1}$ is not within the convex hull of $\{\mathbf{x}_1,...,\mathbf{x}_k\}$, the situation (a) in Fig. \ref{fig:impossible}, where $\mathbf{x}_{k+1}$ is within the convex hull of $\Omega_{-1}$, is impossible. Therefore, we consider only the case in Fig. \ref{fig:impossible} (b), namely the sample $\mathbf{x}_{k+1}$ is not in the convex hull of $\Omega_{-1}$. Then, we consider the straight line $L$, passing $\mathbf{x}_{k+1}$ and the centroid of $\Omega_{+1}/\mathbf{x}_{k+1}$ (namely, $\frac{1}{m}\sum_{j=1}^m \mathbf{x}_j$). The line $L$ intersects $\Omega_{-1}$; otherwise, a slight shift of $L$ can distinguish $\Omega_{+1}$ and $\Omega_{-1}$, thus contradicting the assumption that they cannot be linearly distinguished. 

\begin{figure}
  \centering
  \includegraphics[scale=0.3]{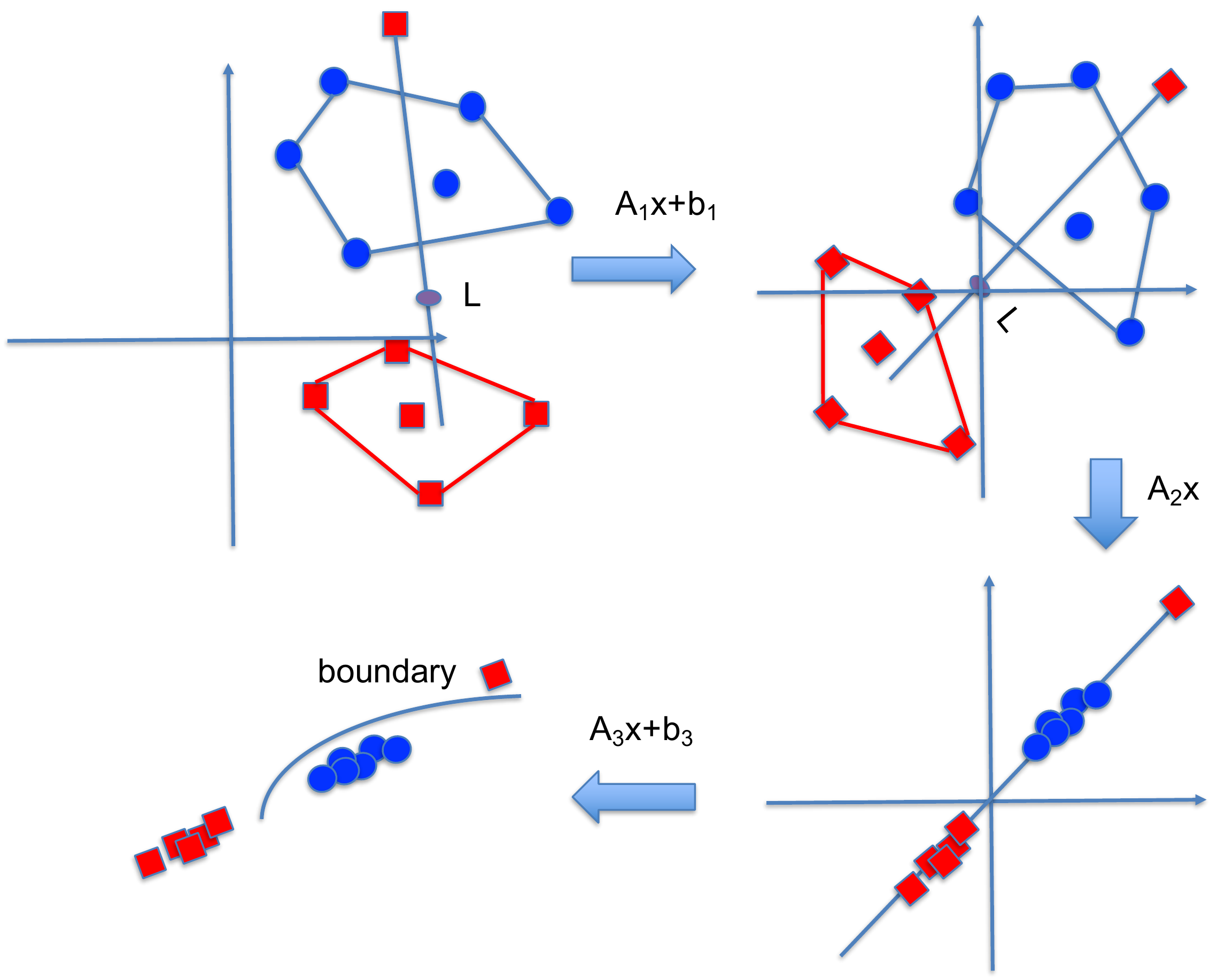}
  \caption{Successive affine mappings for classification.}\label{fig:steps}
\end{figure}

Then, we carry out the following successive affine transformations, which are illustrated in Fig. \ref{fig:steps}:
\begin{itemize}
\item We choose a shift $\mathbf{b}_0$ to change the origin to a point $P$ on $L$, such that $\Omega_{+1}/\{\mathbf{x}_{k+1}\}$ and $\Omega_{-1}\cup \{\mathbf{x}_{k+1}\}$ are on the two sides of $P$. Then, we use a linear transformation $\mathbf{A}_1$ to rotate the line $L$ to the angle of $45^\circ$.

\item We choose a linear transformation $\mathbf{A}_2$, such that one of its eigenvectors $\mathbf{v}_1$ is along the line $L$ while the other eigenvector $\mathbf{v}_2$ is orthogonal to $\mathbf{v}_1$. We select the eigenvalue $\lambda_1$ to be 1 while the other eigenvalue $\lambda_2$ to be sufficiently small. Then, all the samples become sufficiently close to the line $L$. 

\item Choose points $P_1$ and $P_2$ on $L$ such that they separate $\Omega_{+1}/\{\mathbf{x}_{k+1}\}$, $\Omega_{-1}$ and $\mathbf{x}_{k+1}$. Since the image of the line $x_2=ax_1$ ($a<1$) under $\tanh^{-1}$ is a concave function in $\mathbb{R}^2$, the corresponding epigram $E$ is a convex set. We apply Lemma \ref{lem:intersection} such that there exists an affine transformation $T$ characterized by matrix $\mathbf{A}_3$ and vector $\mathbf{b}_3$ such that $\mathbf{A}_3(L)$ intersects $E$ at $\mathbf{A}_3(P_1)$ and $\mathbf{A}_3(P_2)$. Then, we have $\Omega_{+1}\in \bar{E}$ and $\Omega_{-1}\in E$.
\end{itemize}

In the last step, obviously, the sets $\Omega_{+1}$ and $\Omega_{-1}$ are separated by the image of line $x_2=ax_1$ under $\tanh^{-1}$. Then, the images of $\Omega_{+1}$ and $\Omega_{-1}$ under the mapping $\tanh$ are linearly separable. Therefore, by the induction, we have proved that there exist $k$ samples that can be shattered by $k-3$-layer DNNs. Therefore, the VC-dimension is at least $D+3$. This concludes the proof.
\end{proof}

\end{document}